\documentclass[twoside]{article}

\usepackage{aistats2026}

%
%


\usepackage[round]{natbib}



\usepackage{amsmath,amsfonts,bm, bbm, amssymb}


\newcommand{\hcalG}{\widehat{\mathcal{G}}}

\newcommand{\calC}{\mathcal{C}}
\newcommand{\calS}{\mathcal{S}}
\newcommand{\adj}{\mathrm{adj}}

\newcommand{\indep}{\perp\!\!\!\perp}







\def\eqref#1{equation~\ref{#1}}









\def\1{\bm{1}}










\DeclareMathAlphabet{\mathsfit}{\encodingdefault}{\sfdefault}{m}{sl}
\SetMathAlphabet{\mathsfit}{bold}{\encodingdefault}{\sfdefault}{bx}{n}













\AtBeginEnvironment{align}{\setcounter{equation}{0}}
\usepackage{amsmath}
\newcommand{\ind}{\perp\!\!\!\perp}

\usepackage{hyperref}
\hypersetup{
    colorlinks=true,       
    linkcolor=blue,        
    citecolor=blue,        
    filecolor=blue,        
    urlcolor=blue          
}
\usepackage{url}

\usepackage{graphicx}
\usepackage{tikz}
\usepackage{subcaption} 
\usepackage{amsthm}

\usepackage{thmtools}
\usepackage{xcolor} 

\usepackage{algorithm}
\usepackage{algpseudocode}

\usetikzlibrary{positioning}

\newtheorem{theorem}{Theorem}[section]

\newtheorem{proposition}[theorem]{Proposition}


\newtheorem{definition}[theorem]{Definition}
\newtheorem{assumption}[theorem]{Assumption}
\newtheorem{remark}[theorem]{Remark}

\usepackage{enumitem}


\begin{document}

%

%

\twocolumn[

\aistatstitle{From Guess2Graph: When and How Can Unreliable Experts Safely Boost Causal Discovery in Finite Samples?}



\aistatsauthor{}

\aistatsaddress{} ]

\begin{abstract}
Causal discovery algorithms often perform poorly with limited samples. While integrating expert knowledge (including from LLMs)  as constraints promises to improve performance, 
guarantees for existing methods require perfect predictions or uncertainty estimates, making them unreliable for practical use.
We propose the \emph{Guess2Graph} (G2G) framework, which uses expert guesses to guide the \emph{sequence} of statistical tests rather than replacing them. 
This maintains statistical consistency while enabling performance improvements.
We develop two instantiations of G2G: PC-Guess, which augments the PC algorithm, and gPC-Guess, a learning-augmented variant designed to better leverage high-quality expert input.
Theoretically, both preserve correctness regardless of expert error, with gPC-Guess provably outperforming its non-augmented counterpart in finite samples when experts are ``better than random."
Empirically, both show monotonic improvement with expert accuracy, with gPC-Guess achieving significantly stronger gains.

\end{abstract}

\section{Introduction}\label{sec:intro}

Global causal discovery provides a principled framework for inferring causal graphs from observational data, with algorithms that are asymptotically correct and statistically well-characterized \citep{peter_spirtes_causation_2000}. In finite samples, however, these guarantees fail to hold \citep{uhler_geometry_2013}, and performance tends to degrade substantially with insufficient data \citep{zuk_samples_2012}. As a result, discovery in real-world applications often yields unstable \citep{faller2024self}
or inaccurate \citep{brouillard_landscape_2025} graphs, which contradict established domain knowledge \citep{maasch2024local}.





\textit{Expert-aided} discovery methods often mitigate finite-sample issues by incorporating domain knowledge, encoding such knowledge as either hard constraints that prune the search space \citep{ankan2025expert} or as soft priors that bias the results of statistical tests \citep{constantinou2023impact}. Classical expert-aided methods rely on human experts to specify much of the constraint \citep{tennant_use_2021, petersen2021data}; however, as graph size grows, this dependence on human input becomes cognitively infeasible and economically unsustainable.
\emph{Large language models} (LLMs),
trained on vast and diverse corpora, encode broad domain knowledge, suggesting they could serve as scalable proxies for human experts. 
Much like a domain expert reasoning about plausible causal mechanisms, an LLM can parse variable names and draw on its internalized knowledge to propose causal constraints, potentially reducing reliance on exhaustive statistical testing. 
Their promise in retrieving known direct relationships \citep{feng2024pretraining} and ancestral orderings \citep{vashishtha2025causal}
make them a compelling alternative, spurring research into their use as expert replacements for causal discovery \citep{ban2023query, cohrs2024large, darvariu2024large, xie2024cloud, kiciman2024causal, vashishtha2025causal}.



Yet neither human experts nor LLMs are infallible sources of causal knowledge. Human input is prone to bias and inconsistency \citep{dror_cognitive_2020}, while LLMs exhibit critical limitations: they often output invalid graphs \citep{jiralerspong_causal_2024}, are brittle to prompt variations \citep{ban2023query}, degrade with increasing complexity \citep{llm_causal_discovery_2024}, perform poorly on out-of-distribution domains \citep{feng2024pretraining}, produce unreliable reasoning \citep{ye2024benchmarking_arxiv, dong-etal-2024-generalization}, as well as poor uncertainty calibration \citep{manggala2025qa}. Given the potential for error, traditional methods that leverage such unreliable experts to generate hard or soft constraints lack theoretical guarantees \citep{wu2025llm}—this means that misleading advice can cause error (sometimes unbounded error, see Appendix \ref{appendix: unbounded error}) even in the large sample limit, rendering them unsuitable for safety-critical applications.
To address this, we argue that \emph{expert guidance should not replace statistical procedures, but rather, complement them to improve finite-sample efficiency without sacrificing worst-case guarantees.}

This principle directly motivates our two core research questions. First, we ask: \emph{What formal framework ensures that integrating an unreliable expert never harms, and may improve, algorithmic performance?} Establishing this framework, however, reveals a second critical challenge. The rigid, statistically-optimized architectures of many high-performance algorithms (e.g., the PC algorithm's fixed-size conditioning set iteration) inherently resist external guidance. This leads to our second question: \emph{How can we redesign such algorithms to become more receptive to this framework, thereby unlocking greater performance gains from accurate expertise?}
%

\textbf{Contributions\;\;}  This work introduces a principled approach for leveraging fallible experts in causal discovery, addressing the dual challenges of \emph{when} such guidance is safe and \emph{how} to best implement it. Our contributions are fourfold:
\begin{itemize}[leftmargin=*,itemsep=0pt,parsep=2pt, topsep=0pt, partopsep=0pt]

\item \textbf{A Framework for Expert-Guided Causal Discovery}: We introduce the Guess2Graph (G2G) (Sec.~\ref{sec: setup}) framework,
which enables causal discovery algorithms to incorporate expert predictions while guaranteeing statistical consistency (\ref{criterion:consistency}), providing a pathway to achieve monotonic improvement (\ref{criterion:monotone}) and finite-sample robustness (\ref{criterion:finite-sample}). G2G uses expert predictions to guide test sequences rather than outcomes, requiring no uncertainty quantification.
We provide the theoretical foundation for instantiating G2G in constraint-based algorithms through subroutine modification (Sec.~\ref{sec: g2g cb theory}).

    \item \textbf{PC-Guess}:
    We instantiate G2G in the PC algorithm (Sec.~\ref{subsec: pcg2g}) to create PC-Guess (Alg.~\ref{algo: guess-pc}), which maintains \ref{criterion:consistency} while partially achieving \ref{criterion:monotone} and \ref{criterion:finite-sample} through per-iteration performance guarantees. We prove that when starting from identical states, each iteration of PC-Guess shows provable improvement over standard PC with experts `better than random' (Thm.~\ref{theorem: pc performance}), though cascading effects prevent end-to-end guarantees.    


    \item \textbf{Augment PC for Better Expert Guidance (gPC-Guess)}: We demonstrate that the rigid, statistically optimized structure of PC limits the potential gains from expert guidance (Sec.~\ref{subsec: lapcg2g}). To overcome this, we propose a redesigned algorithm, gPC-Guess (Alg.~\ref{algo: guess-sgs}), which modifies PC's core procedure to be more susceptible to expert input. 
    This approach fully achieves all three criteria \ref{criterion:consistency}-\ref{criterion:finite-sample}, with
    provable end-to-end finite-sample performance improvements that increase monotonically with expert quality (Thms.~\ref{theorem: guess acc correct}, \ref{theorem: sgs performance}).
    
     \item \textbf{Empirical Validation and Insights}:
     Our experiments (Sec.~\ref{sec: exps}) validate the theoretical distinction between algorithm augmentation and redesign for expert guidance. PC-Guess shows modest gains (up to 5\%)
limited by PC's inherent rigidity. In contrast, gPC-Guess fully achieves all three criteria, with up to 30\% performance gains when experts are accurate, across both synthetic and real-world data. These results persist with LLM experts, confirming that full achievement of our criteria requires algorithmic redesign rather than simple augmentation.

\end{itemize}

\vspace{-5pt}
\section{Related Work}
\vspace{-5pt}



\textbf{Global Causal Discovery\;\;} 
Global causal discovery methods—constraint-based, score-based, and \emph{functional causal model} (FCM) based—all face finite-sample challenges from error propagation in sequential statistical tests. Constraint-based methods perform conditional independence tests \citep{peter_spirtes_causation_2000,spirtes_algorithm_1991,spirtes_anytime_2001,lee_constraint_2025}, score-based methods make sequential edge comparisons \citep{chickering_optimal_2002}, and FCM methods conduct residual independence tests \citep{zhang_identiability_2009, suj_2024}, all suffering from diminishing test power in super-exponential search spaces \citep{lee_constraint_2025,chickering_statistically_2020}. While prior work addressed this by eliminating order-dependence (i.e., PC-Stable \citep{colombo2014order}), we instead optimize test sequences using expert predictions. This approach applies broadly since sequential testing underlies these constraint-based, hybrid score-based \citep{tsamardinos_max-min_2006, chickering_statistically_2020, zhu_hybrid_2024}, and hybrid FCM methods \citep{peters_causal_2014, suj_2025_losam} (Appendix~\ref{appendix: sequential test issue}).

\textbf{Expert-Aided Discovery\;\;}
Existing frameworks for integrating expert predictions into causal discovery face two primary challenges. \emph{Direct hard or soft constraint-based approaches} \citep{ban_causal_2023, susanti2025can, takayama_integrating_2025} use expert outputs to replace or bias statistical procedures, risking unbounded error propagation with incorrect experts \citep{knowledge_guided_discovery_2024} and suffering from poorly calibrated confidence scores \citep{campbell2024overprecision, wu2025llm}. \emph{Guidance-based approaches} \citep{constantinou2023impact, wu2025llm, ejaz_less_2025} use predictions for algorithm guidance without replacing tests, but provide limited benefits—initialization helps only with near-perfect experts, while heuristic guidance lacks performance guarantees. Both approaches rely on empirical validation rather than robust statistical foundations, limiting reliability (Appendix~\ref{appendix: expert error violation}).

\textbf{Algorithms with Predictions\;\;}
Our work builds on the 
\emph{algorithms with predictions} or the \emph{learning-augmented algorithms} paradigm \citep{algorithms_predictions_2020}, which integrates predictions into classical algorithms to improve performance while preserving worst-case guarantees. 
In online settings where data or requests arrive sequentially, this approach provides \emph{approximation ratios} guarantees via consistency (near-optimal performance with accurate predictions) and robustness (bounded worst-case performance with poor predictions) \citep{competitive_caching_2018, wei2020optimal, jin2022online, liu2024online}. In offline settings, it reduces \emph{computational} or \emph{query complexity} while maintaining guarantees \citep{kraska_case_2018, lykouris_competitive_2021, balcan_learning_2021, chen_learning_2022}. 
We adapt this framework to causal discovery by using expert predictions to optimize statistical test sequences, primarily targeting improved estimation accuracy rather than computational benefits. While learning-augmented methods have been applied to causal intervention design \citep{active-learning-advice-arnab-2023}, ours is the first adaptation to purely observational causal discovery.

\vspace{-5pt}
\section{Problem Setup and Guess2Graph}\label{sec: setup}
\vspace{-5pt}




Unless otherwise mentioned, we denote random variables by lowercase letters and sets of variables by uppercase letters. A \emph{directed acyclic graph} (DAG)
$\mathcal{G} = (V, E)$ consists of nodes $V$ and edges $E$.
We use $e_{i,j}$ to denote the \emph{directed} edge from $x_i$ to $x_j$ and $n_{i,j}$ to denote the \emph{undirected} edge between $x_i$ and $x_j$, regardless of whether these edges exist in $\mathcal{G}$.
The model is defined by structural equations: 
for each $x_i \in V$,
$x_i = f(\mathrm{Pa}(x_i), \varepsilon_i)$, with jointly independent noise terms
$\bm{\varepsilon}$.

The skeleton $\mathcal{S}$ of $\mathcal{G}$ is its undirected version.
For any partial skeleton $\mathcal{C}$, let
$\mathrm{adj}(\calC, x_i)$ be
the adjacency of $x_i$ in $\calC$, 
and $\adj_{-j}(\calC, x_i) = \mathrm{adj}(\calC, x_i) \setminus \{x_j\}$ be the adjacency set excluding $x_j$.
Let $[A]_k$ denote all size-$k$ subsets of set $A$, and $[A]_{i:k} = \bigcup_{j=i}^k [A]_j$.
A \emph{conditional independence test} (CIT), $\text{CIT}(x,y|Z)$, tests the null hypothesis that $x\indep y|Z$.  
An edge ordering $\mathsf{O}$ is a sequence of undirected edges, while a subset list $\mathsf{L}$ is a sequence of variable subsets.
Finally, a domain expert (or LLM) is modeled as a predictor $\psi$
that, given a set of variables $V$, outputs a prediction $\hcalG$, while a causal discovery algorithm outputs a prediction $\widetilde{\mathcal{G}}$.
\vspace{-5pt}
\subsection{Problem Statement and Design Criteria}
\vspace{-5pt}

\label{subsec: desiderata} 
We consider the problem of causal discovery from a finite-sample dataset $\mathcal{X}$, generated by some underlying causal system. Under the standard assumptions (Markov condition, acyclicity, faithfulness, and causal sufficiency, Def.s~\ref{def:markov}-\ref{def:sufficiency}), there exists a true causal DAG $\mathcal{G}^*$ that perfectly characterizes the conditional independence structure via d-separation (Def.~\ref{def:d_separation}), and all direct common causes of variables in $V$ are contained within $V$.

In practice, finite-sample conditional independence tests are error-prone, making exact recovery of $\mathcal{G}^*$ challenging. We address this by augmenting causal discovery with predictions from an expert $\psi$. While inspired by learning-augmented algorithms, our setting differs in two key aspects: 1) we face statistical (not adversarial) data, with potentially adversarial expert quality, and 2) we require no uncertainty quantification and treat experts as a black box (unlike typical learning-augmented approaches that require tunable confidence parameters). This yields three key criteria:
\begin{enumerate}[leftmargin=*,itemsep=0pt,parsep=0pt, topsep=0pt,partopsep=0pt, label=\textbf{C\arabic*}]
    \item \textbf{Statistical Consistency:}\label{criterion:consistency} As sample size grows, the recovery of the true graph is guaranteed, regardless of expert quality: $\lim_{n \to \infty} \mathbb{P}[\tilde{\mathcal{G}} = \mathcal{G}^*] = 1$. 
    \item \textbf{Monotonic Improvement:} The algorithm's finite-sample performance improves monotonically with expert accuracy.\label{criterion:monotone}
    \item \textbf{Finite-Sample Robustness:} There exists an expert accuracy threshold such that, for finite samples, the algorithm's performance with expert guidance is not worse in expectation than without it when expert accuracy exceeds this threshold.\label{criterion:finite-sample}
\end{enumerate}
Criterion \ref{criterion:consistency} ensures the algorithm remains fundamentally sound even with poor experts, while \ref{criterion:finite-sample} ensures practical utility with sufficiently accurate experts. Criterion \ref{criterion:monotone} connects these guarantees, ensuring a smooth transition between regimes.
We note that traditional frameworks for incorporating expert knowledge as hard or soft constraints violate \ref{criterion:consistency}, as we illustrate in Appendix~\ref{appendix: expert error violation}.

\vspace{-5pt}
\subsection{Guess2Graph Framework}\label{subsec: framework}
\vspace{-5pt}


We now propose our Guess2Graph (G2G) framework, which enables algorithms to satisfy our three criteria by strategically incorporating expert guidance while maintaining statistical foundations. The core insight is that many causal discovery algorithms contain subroutines that perform sequences of statistical tests, often with orders sampled uniformly at random. In subroutines where any valid sequence maintains asymptotic consistency, we can replace random sampling with expert-guided ordering while preserving theoretical guarantees.

The G2G framework operates in three steps: (1) identify a subroutine of an asymptotically correct causal discovery algorithm that performs sequences of statistical tests; 
(2) request expert $\psi$ to predict a causal structure $\widehat{\mathcal{G}}$; and (3) extract and use an ordering from $\widehat{\mathcal{G}}$ in place of random sampling.
This approach automatically ensures statistical consistency (\ref{criterion:consistency}) by keeping all decisions grounded in test outcomes rather than expert judgments.\footnote{While we focus on deterministic  predictions, G2G can be extended with expert selection/validation (App.~\ref{appendix: cd-guess framework extensions}).}
However, achieving monotonic improvement (Criterion~\ref{criterion:monotone}) and finite-sample guarantees (Criterion~\ref{criterion:finite-sample}) requires careful algorithmic design within this framework.

Although the framework is general, i.e., applicable to any discovery algorithm maintaining consistency across test sequences, extracting effective orderings requires careful analysis of each subroutine's role. We therefore focus on constraint-based methods in this paper, with extensions to score-based and FCM-based algorithms discussed in Appendices~\ref{appendix: cd-guess score extension} and~\ref{appendix: cd-guess anm extension}.

Next, in Section~\ref{sec: g2g cb theory}, we demonstrate how this framework can be applied to constraint-based algorithms by identifying common subroutines that can incorporate learning augmentation. In Section \ref{sec: method guarantees}, we show how the framework can be applied to the PC algorithm to partially achieve \ref{criterion:monotone} and \ref{criterion:finite-sample}. By further modifying PC to create our gPC-Guess variant, we show that both \ref{criterion:monotone} and \ref{criterion:finite-sample} can be fully achieved.

\begin{figure}[t]
\centering
\begin{tikzpicture}[
    scale=0.9, transform shape,
    box/.style={rectangle, draw, rounded corners, minimum width=2.8cm, minimum height=1cm, align=center, font=\small},
    testbox/.style={rectangle, draw, minimum width=2.5cm, minimum height=0.6cm, align=left, font=\footnotesize},
    arrow/.style={->, thick, >=stealth},
    node/.style={circle, draw, minimum size=0.4cm, inner sep=1pt, font=\footnotesize}
]

\node[font=\small\bfseries] at (-2, 3.5) {Expert Initial};
\node[font=\small\bfseries] at (-2, 3.1) {Graph Guess};
\node[node] (x1) at (-3, 2.5) {$x_1$};
\node[node] (x2) at (-2, 2.5) {$x_2$};
\node[node] (x3) at (-1, 2.5) {$x_3$};
\draw[thick] (x1) -- (x2);
\draw[thick] (x2) -- (x3);

\node[font=\small\bfseries] at (2, 3.5) {Subroutine's Random};
\node[font=\small\bfseries] at (2, 3.1) {Test Sequence};
\node[testbox, fill=red!10] (t1) at (2, 2.5) {1. Test $x_1 \indep x_3 | x_2$};
\node[testbox, fill=yellow!10, below=0.1cm of t1] (t2) {2. Test $x_2 \indep x_3| x_1$};
\node[testbox, fill=green!10, below=0.1cm of t2] (t3) {3. Test $x_1 \indep x_2| x_3$};

\draw[arrow, very thick, blue!60] (-2, 2.1) -- (-2, 0.8) node[midway, left, font=\footnotesize] {};
\draw[arrow, very thick, blue!60] (0.3, 1.3) -- (-2, 0.8) node[midway, right, font=\footnotesize] {};

\node[font=\small\bfseries, blue!70!black] at (-2, 0.3) {G2G's Reordered};
\node[font=\small\bfseries, blue!70!black] at (-2, -0.1) {Test Sequence};
\node[testbox, fill=green!10] (g1) at (-2, -0.7) {1. Test $x_1 \indep x_2|x_3$};
\node[testbox, fill=yellow!10, below=0.1cm of g1] (g2) {2. Test $x_2 \indep x_3|x_1$};
\node[testbox, fill=red!10, below=0.1cm of g2] (g3) {3. Test $x_1 \indep x_3|x_2$};

\draw[arrow, very thick, blue!60] (-0.05, -1.2) -- (1.35, -1.2) node[midway, above, font=\footnotesize] {Causal} node[midway, below, font=\footnotesize] {Discovery};
\node[font=\small\bfseries] at (3.15, 0.15) {Discovered Graph};
\node[node] (d1) at (2.15, -0.8) {$x_1$};
\node[node] (d2) at (3.15, -0.8) {$x_2$};
\node[node] (d3) at (4.15, -0.8) {$x_3$};
\draw[thick] (d1) to[bend right=45] (d3);
\draw[thick] (d2) -- (d3);

\end{tikzpicture}
\caption{Guess2Graph uses expert graph predictions to reorder test sequences in causal discovery subroutines. Example above for constraint-based discovery.}
\vspace{-\baselineskip}
\label{fig:g2g_framework}
\end{figure}
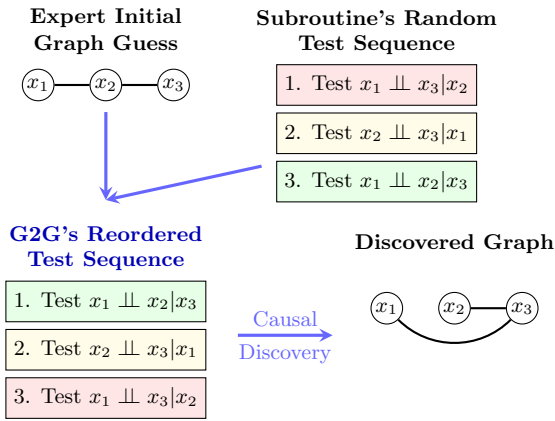

\vspace{-5pt}
\section{G2G in Constraint-Based Discovery}\label{sec: g2g cb theory}
\vspace{-5pt}

In this section, we instantiate the G2G framework for constraint-based methods \citep{spirtes_anytime_2001}. We focus specifically on skeleton discovery for three reasons: it bears the primary computational burden, suffices for many causal tasks, and improvements propagate to edge orientation since orientations derive from skeleton tests.
We observe that skeleton discovery decomposes into two core subroutines that rely on uniformly sampled orderings: \emph{Edge Prune} (EP) and \emph{Edge Loop} (EL). 
The EP subroutine (Subroutine~\ref{algo: edge-ordering}) tests edge $e_{i,j}$ with conditioning sets of size $k$ following subset ordering $\mathsf{L}$. 
The EL subroutine (Subroutine~\ref{algo: subset-ordering}) sequences edge testing following edge ordering $\mathsf{O}$ while iterating through conditioning set sizes, from $k_{\min}$ to $k_{\max}$, calling EP for each $k$.
For example, in the PC algorithm, $k_{\min}$ and $k_{\max}$ are both set to $\ell$ in iteration $\ell$. In PC, an edge $e_{i,j}$ is considered valid in iteration $\ell$ if $n_{i,j}$ remains in the current skeleton $\calC$ and the adjacency set of $x_i$ (excluding $x_j$) is sufficiently large to test conditioning sets of size $\ell$ (i.e., $|\adj_{-j}(\calC, x_i)| \geq \ell$).
PC calls the EL subroutine up to $|V|$ times.
Different constraint-based algorithms vary in their invocation of these subroutines and validity rules $R$.
See Appendix~\ref{appendix: cb subroutine decomposition} for further decomposition details.

Under oracle conditional independence tests, EL and EP produce identical results regardless of orderings $\mathsf{O}, \mathsf{L}$ when starting from a complete graph (Lemmas~\ref{lemma: EL perfect}, \ref{lemma: EP perfect}). However, with finite-sample errors, orderings critically impact performance \citep{colombo2014order}: edge removals update the skeleton, changing adjacency sets for subsequent tests. While PC-Stable \citep{colombo2014order} addresses this by batching edge removals, we leverage this order-dependence for guidance. We develop modified subroutines that use expert predictions to generate $\mathsf{O}, \mathsf{L}$ by prioritizing edges and subsets the expert considers relevant, rather than using uniform random orderings.

\floatname{algorithm}{Subroutine}
\begin{algorithm}[t]
\caption{Edge Loop (EL)}\label{algo: edge-ordering}
\begin{algorithmic}[1]
\State\textbf{Input}: Current skeleton $\calC$, edge ordering $\mathsf{O}$, conditioning set sizes $[k_{\min}, \ldots, k_{\max}]$, subset ordering $\mathsf{L}$, validity rule $R$, EP subroutine
\For{$k = k_{\min}$ to $k_{\max}$}
    \For{each undirected edge $n_{i,j}$ in order $\mathsf{O}$}
        \If{$R(\calC, e_{i,j}, k)$}
            \State $\calC \leftarrow \text{EP}(\calC, e_{i,j}, k, \mathsf{L})$
        \EndIf
    \EndFor
\EndFor
\State \Return $\calC$
\end{algorithmic}
\end{algorithm}

\floatname{algorithm}{Subroutine}  
\begin{algorithm}[t]
\caption{Edge Prune (EP)}\label{algo: subset-ordering}
\begin{algorithmic}[1]
\State \textbf{Inputs}: Current skeleton $\calC$, directed edge $e_{i,j}$, conditioning set size $k$, subset ordering $\mathsf{L}$
\State Let $A \leftarrow \adj_{-j}(\calC, x_i)$
\For{each subset $W \in [A]_k$ in order $\mathsf{L}$}
    \If{$\text{CIT}(x_i, x_j \mid W)$ returns independent}
        \State Remove $n_{i,j}$ from $\calC$
        \State \Return $\calC$
    \EndIf
\EndFor
\State \Return $\calC$
\end{algorithmic}
\end{algorithm}

\vspace{-5pt}
\subsection{A Tractable Metric for Analyzing Ordering Effects}\label{subsec:metric}
\vspace{-5pt}

To analyze how orderings affect algorithm performance, we require a metric that captures correctness while remaining analytically tractable.
We evaluate algorithm performance by the probability of \emph{perfect recovery} of the true skeleton $\calS^*$.
For any candidate skeleton $\mathcal{C}$ produced by the algorithm, perfect recovery occurs when $\mathcal{C}$ and $\mathcal{S}^*$ are identical on all edges. For each edge $n_{i,j}$ (undirected, or directed with $e_{i,j}$), we define the correctness indicator:
$$Y_{n_{i,j}}=\mathbbm{1}\{n_{i,j}\in\calC \land n_{i,j}\in \calS^*\} + \mathbbm{1}\{n_{i,j}\notin \calC \land n_{i,j}\notin \calS^*\}.$$
This indicator equals $1$ when the edge $n_{i,j}$'s status in $\calC$ matches the ground truth in $\mathcal{S}^*$, and $0$ otherwise.
The perfect recovery probability is then defined as:
$$\Phi = \mathbb{P} \left[\prod_{n_{i,j}:i\neq j}Y_{n_{i,j}}=1\right],$$
representing the probability that all edges are correctly specified. 

This metric has two key analytical advantages. First, the perfect recovery probability factors into a product of conditional probabilities along the edge ordering $\mathsf{O}$ used by the algorithm. Abusing the notation, if edges are processed in order $n_1, n_2, \dots, n_m$, then: 
\begin{align}
    \Phi =& \mathbb{P}(Y_{n_1}=1)\cdot \mathbb{P}(Y_{n_2}=1|Y_{n_1}=1)\cdots\notag\\ &\cdot \mathbb{P}(Y_{n_m}=1|Y_{n_1}=1, \cdots, Y_{n_{m-1}}=1).\label{eq:perfect_recovery}
\end{align}
This factorization enables compositional analysis by studying the algorithm's behavior sequentially. Second, perfect recovery avoids error propagation entirely. Error propagation is difficult to analyze because false positive and false negative errors have opposing effects on adjacency sets---false positives inflate them while false negatives shrink them. This makes the overall impact of different errors dependent on the underlying graph structure, rendering the analysis of error-tolerant metrics (e.g., $\mathbb{E}[\sum Y_{n_{i,j}}]$) challenging without graph-specific assumptions (Appendix~\ref{appendix: edge loop error propogation}).

\vspace{-5pt}
\subsection{Guiding Edge Loop}\label{subsec: edge-ordering}
\vspace{-5pt}

We now apply this metric to develop a principled approach for guiding EL 
(Subroutine~\ref{algo: edge-ordering}). 
We start by
characterizing how edge orderings affect the perfect recovery probability. This enables us to
identify ordering modifications that provably improve this metric for any underlying graphical structure.
We demonstrate that leveraging an expert to guide these modifications leads to monotonic improvement with expert accuracy.

\textbf{Graph-independent Ordering Principles.}
Building on our perfect recovery metric $\Phi$, we investigate ordering principles that improve $\Phi$ regardless of the underlying graph structure.
Specifically, we analyze when correctly specifying edge $n_{i,j}$ first increases the probability of correctly specifying edge $n_{g,h}$ second---i.e., when $\mathbb{P}(Y_{n_{g,h}}=1 \mid Y_{n_{i,j}} = 1) > \mathbb{P}(Y_{n_{g,h}}=1)$. 

The key insight is an asymmetry in how false versus true edge decisions affect subsequent tests: correctly removing false edges reduces adjacency sets (simplifying future tests), while correctly retaining true edges leaves adjacency sets unchanged. Combined with the fact that true edges become easier to retain with smaller adjacency sets (Lemma~\ref{lemma: edge-asymmetry}), this implies that removing false edges first can only increase the probability of correctly retaining subsequent true edges, while retaining a true edge first does not affect the success probability of removing a false edge (Lemma~\ref{lemma: edge-ordering-asymmetry}). 

By placing false edges before true edges in $\mathsf{O}$, we can only increase 
the perfect recovery probability $\Phi$,
which we formalize in Lemma~\ref{lemma: false-before-true}: for any ordering $\mathsf{O}$, swapping adjacent edges to place a false edge before a true edge is never worse and sometimes strictly better.

\textbf{Expert-guided Algorithm with Monotonicity Guarantees.} Based on Lemma~\ref{lemma: false-before-true}, we modify the EL subroutine to incorporate expert predictions as shown in Subroutine~\ref{algo: expert-edge-ordering1}. 
The algorithm operates as follows: given an expert graph $\widehat{\mathcal{G}}$, it extracts the skeleton $\widehat{\mathcal{S}}$ and partitions the current skeleton $\mathcal{C}$ into edges the expert believes are false ($\mathcal{C} \setminus \widehat{\mathcal{S}}$) and true ($\mathcal{C} \cap \widehat{\mathcal{S}}$), then processes the false edges before the true edges while maintaining random order within each group.

We make the assumption that the expert $\psi$ acts as a symmetric binary channel: for any edge $n_{i,j}$, the expert independently predicts if it exists in the true skeleton $\mathcal{S}^*$ with accuracy 
$p_\psi$. 
Under this assumption, we establish the following monotonicity guarantee:
\floatname{algorithm}{Subroutine}
\begin{algorithm}[t]
\caption{Edge Loop Guess (EL-G)}\label{algo: expert-edge-ordering1}
\begin{algorithmic}[1]
\State \textbf{Inputs}: Current skeleton $\calC$, expert graph $\hcalG$, conditioning set sizes $[k_{\min}, \ldots, k_{\max}]$, subset ordering $\mathsf{L}$, validity rule $R$, EP subroutine
\State Extract skeleton $\widehat{\mathcal{S}}$ from $\hcalG$. Randomly order $\calC$, then set $\mathsf{O} = \calC \setminus \widehat{\mathcal{S}} + \calC \cap \widehat{\mathcal{S}}$
\State \Return EL($\calC$, $\mathsf{O}$, $[k_{\min}, \ldots, k_{\max}]$, $\mathsf{L}$, $R$, EP)
\end{algorithmic}
\end{algorithm}


\begin{restatable}[Monotonicity of Perfect Recovery in Expert Accuracy]{lemma}{EdgeOrderingAccuracyMonotonicity}\label{lemma: edge-accuracy-monotonic}
For a fixed partial skeleton $\calC$ and true DAG $\mathcal{G}^*$, 
let $\Phi_{\text{EL-G}}(p_{\psi})$ denote the perfect recovery probability when we sample an expert graph $\widehat{\mathcal{G}}$ from expert $\psi$ with accuracy $p_{\psi}$, draw $n$ samples from $\mathcal{G}^*$, and run EL-G. 
Then $\mathbb{E}[\Phi_{\text{EL-G}}(p_{\psi})]$ increases monotonically with $p_{\psi}$, strictly increasing when $\mathcal{C}$ contains false edges adjacent to true edges.
\end{restatable}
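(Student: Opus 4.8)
The plan is to prove monotonicity by a \emph{monotone coupling} of the expert channel across accuracy levels, reducing the comparison to a sequence of adjacent edge-swaps whose effect on $\Phi$ is controlled by Lemma~\ref{lemma: false-before-true}. Fix $\mathcal{C}$, $\mathcal{G}^*$, and the sample size $n$. For each edge $n_{i,j}\in\mathcal{C}$ draw an independent uniform $U_{i,j}\in[0,1]$ and declare the edge correctly classified at accuracy $p$ iff $U_{i,j}\le p$; since the channel is symmetric and per-edge independent, this reproduces the law of $\widehat{\mathcal{S}}$ while making the set of correctly classified edges monotone nondecreasing in $p$. Under this coupling the EL-G partition in Subroutine~\ref{algo: expert-edge-ordering1} shifts monotonically: as $p$ increases, each \emph{actual} false edge can only move from the believed-true block $\mathcal{C}\cap\widehat{\mathcal{S}}$ (processed second) into the believed-false block $\mathcal{C}\setminus\widehat{\mathcal{S}}$ (processed first), and each \emph{actual} true edge can only move from the first block into the second. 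It therefore suffices to show that each such single-edge move cannot decrease $\mathbb{E}[\Phi_{\text{EL-G}}]$, where the expectation averages over the within-block uniform orders and the data.

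To analyze one move, I would couple the within-block orders by a single uniform ranking $\pi$ of $\mathcal{C}$ and list each block in $\pi$-order (a stable sort); moving one edge $e$ between blocks then changes the overall ordering only by transporting $e$ past the edges it overtakes, and this transport decomposes into adjacent transpositions of $e$ with each overtaken edge. I classify these transpositions by the \emph{true} status of the pair involved. When an actual false $e$ is advanced past an actual true edge, or an actual true $e$ is delayed past an actual false edge, the transposition places a false edge before a true edge, so by Lemma~\ref{lemma: false-before-true} it cannot decrease $\Phi$. Transpositions between two true edges are exactly neutral: retaining a true edge leaves $\mathcal{C}$ unchanged, so on the perfect-recovery event the conditioning (adjacency) sets used to test either true edge, and hence the CIT outcomes defining their joint retention, are identical under both orders (the $\Phi$-level form of Lemma~\ref{lemma: edge-ordering-asymmetry}).

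The one remaining case, transpositions between two actual false edges, is the main obstacle, because removing a false edge alters the adjacency set available when testing the next one, so the two orders are \emph{not} outcome-identical pointwise. I would resolve this with a companion to Lemma~\ref{lemma: edge-asymmetry}: advancing a false edge only deletes a non-adjacency (a d-separated neighbor) from the conditioning pool of the subsequent false-edge test, which cannot destroy a valid separating set contained in the true adjacencies and only removes spurious conditioning options, so advancing a false edge does not decrease the probability of correctly removing the false edges it overtakes. Granting this, every transposition induced by a monotone-coupling move is $\Phi$-nondecreasing, and chaining the moves across $[p_\psi,p_\psi']$ gives $\mathbb{E}[\Phi_{\text{EL-G}}(p_\psi)]\le\mathbb{E}[\Phi_{\text{EL-G}}(p_\psi')]$ for $p_\psi\le p_\psi'$.

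For strictness, suppose $\mathcal{C}$ contains a false edge $f$ sharing a node with a true edge $t$. With probability bounded away from $0$ (for $0<p_\psi<1$), the channel realizes a classification in which, at the larger accuracy, $f$ is advanced ahead of $t$ while at the smaller accuracy it is not; for such realizations the relevant transposition is the adjacent false-before-true swap that Lemma~\ref{lemma: false-before-true} guarantees is \emph{strictly} improving precisely because $f$ and $t$ are adjacent (removing $f$ strictly shrinks $t$'s conditioning adjacency, cf.\ Lemma~\ref{lemma: edge-asymmetry}). Since increasing $p_\psi$ strictly increases the weight on such favorably sorted classifications, $\mathbb{E}[\Phi_{\text{EL-G}}(p_\psi)]$ is strictly increasing, completing the proof.
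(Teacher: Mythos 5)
Your high-level strategy---a monotone coupling of the expert channel followed by a decomposition into adjacent transpositions controlled by Lemma~\ref{lemma: false-before-true}, then passing to expectations (the paper does this via Strassen's theorem)---matches the paper's proof. However, there is a genuine gap in how you couple the \emph{within-block} orders, and it breaks the key step. The paper samples two separate permutations, $\pi_T$ over the actual true edges and $\pi_F$ over the actual false edges, and reuses \emph{both} across the two accuracy levels. As a result, the relative order of any two actual false edges (and of any two actual true edges) is identical in the two coupled orderings; the orderings differ only in the interleaving of true versus false edges, so the higher-accuracy ordering is reachable from the lower-accuracy one using \emph{only} adjacent swaps that move a false edge leftward past a true edge---exactly the swaps Lemma~\ref{lemma: false-before-true} covers (this is what makes the inversion-subset/weak-Bruhat argument close). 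You instead use a single global ranking $\pi$ with a stable sort. Under that coupling, when an actual false edge $f$ becomes correctly classified at the higher accuracy and moves into the first block, it generically overtakes \emph{other actual false edges} (e.g., a false edge $f'$ already in block~1 with larger $\pi$-rank: at $p_{\psi_1}$ the order is $f'$ then $f$, at $p_{\psi_2}$ it is $f$ then $f'$). So false-false transpositions are unavoidable in your decomposition, and everything hinges on the companion claim you invoke to handle them.

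That companion claim is false within the paper's model, and false-false swaps can in fact strictly \emph{decrease} $\Phi$. Take false edges $f_1 = e_{i,j}$ and $f_2 = e_{j,h}$ sharing vertex $x_j$. If $f_1$ is processed first and correctly removed, then $x_i$ is deleted from $\adj_{-h}(\mathcal{C}, x_j)$, so $f_2$ is subsequently tested over a strictly smaller pool of conditioning sets. By Lemma~\ref{lemma: edge-asymmetry}---the paper's own lemma---shrinking the adjacency set of a \emph{false} edge strictly decreases the probability of removing it, because removal is the union over all executed tests of the event that a test returns independence, and fewer conditioning sets means fewer such events; your oracle-style reasoning (``only spurious conditioning options are removed'') ignores that in the finite-sample model every test, including those on non-d-separating sets, contributes positive probability of returning independence. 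Concretely, processing $f_2$ first gives joint success probability $q_{f_2}^{+}\, q_{f_1}$, while processing $f_1$ first gives $q_{f_1}\, q_{f_2}^{-}$ with $q_{f_2}^{-} < q_{f_2}^{+}$, so the transposition you need to be $\Phi$-nondecreasing is strictly decreasing, and pointwise monotonicity of your coupling fails. The repair is precisely the paper's construction: fix the within-type relative orders via separate shared permutations $\pi_T$ and $\pi_F$ (marginals are preserved since the relative orders of disjoint subsets of a uniform permutation are independent and uniform), so that no false-false or true-true transpositions ever arise and every swap in the chain is a false-past-true swap.
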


\textbf{Proof sketch.} 
The proof (App.~\ref{proof: edge-accuracy-monotonic}) establishes monotonicity via a coupling argument between experts with accuracies $p_{\psi_1} < p_{\psi_2}$. Both experts observe the same true skeleton $\mathcal{S}^*$ and use identical randomness for edge classification, but the higher-accuracy expert makes fewer errors. This ensures that every edge correctly classified by the weaker expert is also correctly classified by the stronger expert.
Consequently, the better expert's edge ordering has fewer ``inversions'' (true edges incorrectly placed before false edges). These orderings are related by the weak Bruhat order, meaning the better ordering can be obtained through a sequence of adjacent swaps that move false edges leftward past true edges. By Lemma~\ref{lemma: false-before-true}, each such swap weakly improves the perfect recovery probability $\Phi_{\text{EL-G}}$, with strict improvement when swapped edges share vertices (since removing false edges first shrinks adjacency sets and reduces false negative probabilities).
Since the better expert's ordering is reachable through beneficial swaps, it achieves pointwise improvement for any fixed realization of data and expert predictions. Strassen's Coupling theorem then implies that $\Phi_{\text{EL-G}}(p_{\psi_2})$ stochastically dominates $\Phi_{\text{EL-G}}(p_{\psi_1})$, yielding the monotonicity in expectation.

\vspace{-5pt}
\subsection{Guiding Edge Prune}\label{subsec: subset-ordering}
\vspace{-5pt}

We now develop a principled approach to guiding EP (Subroutine~\ref{algo: subset-ordering}). Unlike EL, where ordering affects accuracy, EP's ordering only impacts runtime, which can also be decreased by leveraging expert predictions. 
Since our focus is on accuracy improvements through expert guidance, we briefly outline how expert predictions can accelerate EP by prioritizing promising conditioning sets, and defer the complete theoretical analysis of runtime to Appendix~\ref{appendix: subset ordering details}.

The EP subroutine requires an ordering $\mathsf{L}$ to sequence conditional independence tests for edge $e_{i,j}$. Given adjacency set $\adj_{-j}(\calC, x_i)$ and conditioning set size $k$, let $\text{CIT}^k_{\adj_{-j}(\calC, x_i)} := \{\text{CIT}(x_i, x_j \mid W): W \subseteq \adj_{-j}(\calC, x_i), |W|=k\}$ denote all possible independence tests of size $k$.
Since EP removes an edge only if any test in $\text{CIT}^k_{\adj_{-j}(\calC, x_i)}$ returns independence, 
and the timing of test execution does not affect test outcomes, the edge recovery accuracy $\mathbb{P}(Y_{e_{i,j}}=1)$ 
remains constant across all orderings (Lemma \ref{lemma: orderingindaccuracy}).



Although EP orderings cannot affect accuracy, they can impact computational runtime (Lemma~\ref{lemma: no-dseps}). Orderings that place d-separating sets of $x_i, x_j$ earlier achieve lower expected runtimes (Lemma~\ref{lemma: optimal-sequence}). We therefore guide EP by prioritizing conditioning sets predicted to d-separate $x_i, x_j$ according to the expert's graph $\widehat{\mathcal{G}}$, as formalized in Subroutine~\ref{algo: expert-subset-ordering1}.

We make similar assumptions in d-separation prediction analogously to edge prediction: the expert acts as a binary symmetric channel with accuracy $p_{\text{d-sep}}$ for identifying d-separating sets. Under a common technical condition (Definition~\ref{assumption:cit_power_independence}) from testing literature \citep{brown2008strategy, li2009controlling, strobl2019estimating}, a coupling argument similar to Lemma~\ref{lemma: edge-accuracy-monotonic} shows that EP-Guess's expected runtime decreases monotonically with $p_{\text{d-sep}}$ (Lemma~\ref{lemma: expert-accuracy}).




\floatname{algorithm}{Subroutine}
\begin{algorithm}[t]
\caption{Edge Prune Guess (EP-G)}\label{algo: expert-subset-ordering1}
\begin{algorithmic}[1]
\State \textbf{Inputs}: Current skeleton $\calC$, directed edge $e_{i,j}$, conditioning set size $k$, expert graph $\hcalG$
\State Let $A \leftarrow \adj_{-j}(\calC, x_i)$
\State Extract all d-sep. sets $\widehat{\mathcal{D}}_{ij}$ from $\hcalG$. Randomly order $[A]_k$, then set $\mathsf{L} = [A]_k \cap \widehat{\mathcal{D}}_{ij} + [A]_k \setminus \widehat{\mathcal{D}}_{ij}$
\State \Return EP($\calC$, $e_{i,j}$, $k$, $\mathsf{L}$)
\end{algorithmic}
\end{algorithm}

\vspace{-5pt}
\section{Expert Augmented Algorithms}\label{sec: method guarantees}
\vspace{-5pt}
We introduce PC-Guess (Alg.~\ref{algo: guess-pc}, Section~\ref{subsec: pcg2g}) and gPC-Guess (Alg.~\ref{algo: guess-sgs}, Section~\ref{subsec: lapcg2g}), which implement the expert-guided framework from Section~\ref{sec: g2g cb theory} using a unified extraction subroutine (Subroutine \ref{algo: extract-orderings}) that generates both orderings $\mathsf{O}$ and $\mathsf{L}$ from $\widehat{\mathcal{G}}$. We provide theoretical guarantees in Section~\ref{subsec: theory guarantees}.



\vspace{-5pt}
\subsection{PC-Guess}\label{subsec: pcg2g}
\vspace{-5pt}

We instantiate the G2G framework as PC-Guess (Alg.~\ref{algo: guess-pc}), which modifies PC's skeleton discovery by integrating expert guidance through orderings $\mathsf{O}$ and $\mathsf{L}$ from Subroutine~\ref{algo: extract-orderings}. The algorithm then iterates through conditioning set sizes $\ell$, at each stage calling EL with validity rule $R_{PC}^{\ell}$ that determines whether an edge qualifies for testing by checking two conditions: (1) the edge remains in the current skeleton $\mathcal{C}$, and (2) at least one endpoint's adjacency set (excluding the other endpoint) has size at least $\ell$ to enable conditioning sets of size $\ell$.
\floatname{algorithm}{Algorithm}
\begin{algorithm}[t]
\caption{PC-Guess}
\label{algo: guess-pc}
\begin{algorithmic}[1]
\State \textbf{Inputs}: Expert $\psi$, complete skeleton $\calC$
\State $\mathsf{O}$, $\mathsf{L} \leftarrow$ Subroutine \ref{algo: extract-orderings}($\psi$, $\calC$)
\State Define 
$R_{PC}^{\ell}(\calC, e_{i,j}) = n_{i,j} \in \calC \land |\adj_{-j}(\calC, x_i)| \geq \ell$
\For{$\ell = 0$ to $d-1$}
    \State $\calC \leftarrow$ EL($\calC$, $\mathsf{O}$, [$\ell$, $\ell$], $\mathsf{L}$, $R_{PC}^{\ell}$, EP)
    \State \textbf{if} no edges satisfy $R_{PC}^{\ell}$ \textbf{then break}
\EndFor
\State \Return $\calC$
\end{algorithmic}
\end{algorithm}

PC-Guess inherits PC's iterative structure for processing conditioning sets, which constrains how expert predictions are utilized. Due to the curse of dimensionality \citep{li2020nonparametric}, conditional independence tests become less reliable with larger conditioning sets. PC addresses this through a "statistical conditioning" bias that prioritizes smaller conditioning sets first—a common design pattern in causal discovery algorithms.

While this approach maximizes test reliability without expert knowledge, it limits the potential benefit of expert predictions: false edges requiring larger conditioning sets for removal cannot be eliminated early, forcing unnecessary tests at lower conditioning levels and inflating adjacency sets for neighboring edges (Appendix \ref{appendix: pcg2g suboptimal}). This limitation motivates removing the level-by-level constraint to enable earlier removal of false edges with non-trivial minimal d-separating sets.

\subsection{gPC-Guess}\label{subsec: lapcg2g}
To address the limitations of PC's level-by-level constraint, we propose gPC-Guess (Alg. \ref{algo: guess-sgs}), which enables immediate action on expert predictions. Like PC-Guess, gPC-Guess extracts orderings $\mathsf{O}$ and $\mathsf{L}$ from $\widehat{\mathcal{G}}$ via Subroutine~\ref{algo: extract-orderings}, but replaces PC's iterative structure with a single-pass approach that tests all edges using conditioning sets from size $0$ to $|V|-1$.

This design eliminates the statistical conditioning bias in favor of expert responsiveness: gPC-Guess uses a simplified validity rule $R_{gPC}$ that only checks edge presence, allowing false edges with non-trivial minimal d-separating sets to be removed immediately when placed early in $\mathsf{O}$. While this can reduce adjacency set inflation and improve accuracy with good expert guidance, it risks testing edges with unnecessarily large conditioning sets when expert predictions are inaccurate, potentially compromising test reliability.


\floatname{algorithm}{Algorithm}
\begin{algorithm}[t]
\caption{Guided PC (gPC-Guess)}
\label{algo: guess-sgs}
\begin{algorithmic}[1]
\State \textbf{Inputs}: Expert  $\psi$, complete skeleton $\calC$
\State $\mathsf{O}$, $\mathsf{L} \leftarrow$ Subroutine \ref{algo: extract-orderings}($\psi$, $\calC$)
\State Define $R_{gPC}(\calC, e_{i,j}) = n_{i,j} \in \calC$
\State $\calC \leftarrow$ EL($\calC$, $\mathsf{O}$, [0, $|V|-1$], $\mathsf{L}$, $R_{gPC}$, EP)
\State \Return $\calC$
\end{algorithmic}
\end{algorithm}
\setcounter{algorithm}{4}

\subsection{Theoretical Guarantees}\label{subsec: theory guarantees}

\textbf{Correctness and Statistical Consistency (\ref{criterion:consistency}).} Both PC-Guess and gPC-Guess maintain the theoretical guarantees of their base algorithms regardless of expert quality, converging to the true graph with consistent conditional independence tests:

\begin{restatable}[Asymptotic Correctness]{theorem}{GuessPCCorrectness}\label{theorem: guess acc correct}
    Under a consistent conditional independence test, for both PC-Guess and gPC-Guess, $\lim_{n \to \infty} \mathbb{P}(\widetilde{\mathcal{G}} = \mathcal{G}^*) = 1$. 
    
\end{restatable}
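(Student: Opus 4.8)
The plan is to reduce the finite-sample behavior of both algorithms to their behavior under an oracle (population) conditional independence test, and then exploit the order-independence established in Lemmas~\ref{lemma: EL perfect} and~\ref{lemma: EP perfect}. First I would note that since $V$ is fixed, there are only finitely many distinct tests $\text{CIT}(x_i, x_j \mid W)$ that either algorithm could ever invoke. Consistency of the CI test means that for each such test, the probability that its finite-sample outcome disagrees with the corresponding d-separation fact in $\mathcal{G}^*$ tends to $0$ as $n \to \infty$. A union bound over this finite collection then shows that the ``good event'' $E_n$, on which every test simultaneously returns its oracle answer, satisfies $\mathbb{P}(E_n) \to 1$.

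On $E_n$, every call to $\text{CIT}$ made by PC-Guess or gPC-Guess returns exactly the oracle verdict, so each algorithm executes identically to the version in which all tests are replaced by the population oracle. The second step is to show that both oracle executions recover the skeleton $\mathcal{S}^*$ irrespective of the expert-supplied orderings $\mathsf{O}, \mathsf{L}$. Here I would invoke Lemmas~\ref{lemma: EL perfect} and~\ref{lemma: EP perfect}: under oracle tests and starting from the complete skeleton, the outputs of EL and EP do not depend on $\mathsf{O}$ or $\mathsf{L}$. Since the expert-guided subroutines EL-G and EP-G merely instantiate particular valid orderings, their oracle output coincides with the uniformly-random-ordering output. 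For PC-Guess, whose outer loop reproduces PC's level-by-level schedule, this identifies the oracle output with that of standard PC, which is known to recover $\mathcal{S}^*$ and the correct CPDAG \citep{peter_spirtes_causation_2000}.

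For gPC-Guess the outer structure differs (a single pass over conditioning-set sizes $0$ through $|V|-1$ with validity rule $R_{gPC}$), so I would verify oracle correctness directly rather than by appeal to PC. Under faithfulness, a true edge $n_{i,j} \in \mathcal{S}^*$ admits no d-separating set, so no oracle test ever removes it; consequently the invariant $\mathcal{S}^* \subseteq \calC$ is maintained throughout, and in particular every true neighbor of $x_i$ stays in $\adj(\calC, x_i)$. For a false edge $n_{i,j} \notin \mathcal{S}^*$, the non-adjacent pair $x_i, x_j$ is d-separated by $\mathrm{Pa}(x_i)$ (or $\mathrm{Pa}(x_j)$), a set of true neighbors that therefore remains inside $\adj_{-j}(\calC, x_i)$; because gPC-Guess sweeps all conditioning-set sizes, this separating set is tested at the size $k = |\mathrm{Pa}(x_i)|$, at which point the oracle returns independence and the edge is removed. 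Hence gPC-Guess also recovers $\mathcal{S}^*$ under the oracle. Finally, since the orientation phase (v-structure detection together with the standard Meek orientation rules) is a deterministic function of the recovered skeleton and separating sets, correct skeleton recovery yields the correct CPDAG; combining with $\mathbb{P}(E_n) \to 1$ gives $\lim_{n\to\infty}\mathbb{P}(\widetilde{\mathcal{G}} = \mathcal{G}^*) = 1$ (recovery understood up to Markov equivalence).

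I expect the main obstacle to be the direct oracle-correctness argument for gPC-Guess: its single-pass design can test large conditioning sets before smaller-set pruning has finished, so one must confirm that this reordering cannot prematurely delete a true edge nor leave a false edge's separating set outside the running adjacency set. The invariant $\mathcal{S}^* \subseteq \calC$ under oracle tests is exactly what closes this gap, and it is worth stating and verifying carefully; the union-bound reduction and the appeal to Lemmas~\ref{lemma: EL perfect}--\ref{lemma: EP perfect} are then routine.
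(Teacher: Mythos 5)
Your proposal is correct and takes essentially the same route as the paper's proof: reduce to oracle-test behavior, where order-independence (Lemmas~\ref{lemma: EL perfect} and~\ref{lemma: EP perfect}, plus PC's known correctness under arbitrary orderings) gives exact recovery regardless of the expert-supplied $\mathsf{O}, \mathsf{L}$, and then apply a union bound over the finitely many possible CITs, each of whose error probabilities vanishes by consistency. Your direct verification of gPC-Guess's oracle correctness simply reproduces the argument inside Lemma~\ref{lemma: EL perfect}, and your explicit attention to the orientation phase and Markov-equivalence is a refinement the paper's proof leaves implicit.
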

The proof (Appendix~\ref{proof: guess acc correct}) follows from both algorithms eventually considering all possible edges and CITs regardless of ordering. This ensures asymptotic performance is never compromised by potentially poor expert guidance, satisfying \ref{criterion:consistency}.

\textbf{Monotonic Improvement and Finite-Sample Robustness (\ref{criterion:monotone}, \ref{criterion:finite-sample}).}
We characterize how expert quality affects finite-sample performance, with per-iteration guarantees for PC-Guess and end-to-end guarantees for gPC-Guess. Consider a fixed DAG $\mathcal{G}^*$ with $d$ variables and expert $\psi$ with edge accuracy $p_\psi$ and d-separation accuracy $p_{\text{d-sep}}$, drawing $n$ samples from $\mathcal{G}^*$ and prediction $\widehat{\mathcal{G}}$ from $\psi$. We compare guided variants (PC-Guess, gPC-Guess) against unguided baselines (PC, gPC) denoted with overbars $\bar{(\cdot)}$.


For PC-Guess, fix iteration $\ell \in  
\{0, 1, \ldots, d-1\}$
and suppose both algorithms start with an identical partial skeleton $\mathcal{C}$. Let $\Phi_\ell$ and $\bar{\Phi}_\ell$ denote perfect recovery probabilities in iteration $\ell$ for PC-Guess and PC, respectively.
Let $t_\ell$ denote the number of tests run in PC-Guess.
\begin{restatable}[Performance of PC-Guess]{theorem}{PCGuessIterationPerformance}\label{theorem: pc performance}
PC-Guess 
satisfies \ref{criterion:monotone}-\ref{criterion:finite-sample} at per-iteration level:
\textbf{(a)} $\mathbb{E}[\Phi_\ell]$ increases monotonically with $p_\psi$; \textbf{(b)} For fixed $p_\psi$, $\mathbb{E}[t_\ell]$ decreases monotonically with $p_{\text{d-sep}}$; \textbf{(c)} When $p_\psi \geq 0.5$, $\mathbb{E}[\Phi_\ell] \geq \mathbb{E}[\bar{\Phi}_\ell]$.
\end{restatable}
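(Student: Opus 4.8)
The plan is to handle the three claims separately after one structural observation: when PC-Guess enters iteration $\ell$ from the fixed partial skeleton $\calC$, its call EL$(\calC,\mathsf{O},[\ell,\ell],\mathsf{L},R_{PC}^{\ell},\text{EP})$ acts on $\calC$ exactly as a single-size EL-G call (Subroutine~\ref{algo: expert-edge-ordering1}). Indeed, $\mathsf{O}$ restricted to the edges still present in $\calC$ is again the expert's ``false-before-true'' ordering of $\calC$, and a uniformly random order of the complete skeleton restricted to $\calC$ is a uniformly random order of $\calC$, so the within-group randomness matches EL-G in distribution. With this identification, $\Phi_\ell$ is precisely the perfect-recovery probability $\Phi_{\text{EL-G}}(p_\psi)$ produced by running EL-G with conditioning sizes $[\ell,\ell]$.

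For \textbf{(a)}, I would invoke Lemma~\ref{lemma: edge-accuracy-monotonic} directly. Its monotonicity proof (the Strassen coupling across expert accuracies combined with the adjacent-swap improvement of Lemma~\ref{lemma: false-before-true}) holds verbatim for any fixed range of conditioning sizes, in particular the singleton $[\ell,\ell]$; hence $\mathbb{E}[\Phi_\ell]$ increases monotonically in $p_\psi$ with no new argument. For \textbf{(b)}, the key preliminary is that fixing $p_\psi$ fixes the realized edge ordering $\mathsf{O}$, and that $p_{\text{d-sep}}$ (which only reshuffles the subset ordering $\mathsf{L}$) cannot change which edges are pruned: within any EP call the removal decision depends only on whether some size-$\ell$ CIT returns independence, not on the order subsets are tried (Lemma~\ref{lemma: orderingindaccuracy}). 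Consequently, conditioning on the data and on $\mathsf{O}$, the entire trajectory of adjacency sets, and therefore the sequence of EP calls, is pathwise identical across all values of $p_{\text{d-sep}}$. I would then write $t_\ell=\sum_{e}(\text{tests in the EP call for }e)$ over this fixed sequence, couple the d-separation predictions at $p_{\text{d-sep},1}<p_{\text{d-sep},2}$ through the binary-symmetric-channel coupling, apply the per-call runtime monotonicity of Lemma~\ref{lemma: expert-accuracy} (which uses Definition~\ref{assumption:cit_power_independence}) to each summand, and conclude by linearity of expectation that $\mathbb{E}[t_\ell]$ decreases monotonically in $p_{\text{d-sep}}$.

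For \textbf{(c)}, I would combine (a) with a baseline-matching identity, namely that $\mathbb{E}[\Phi_\ell]$ evaluated at $p_\psi=\tfrac12$ equals $\mathbb{E}[\bar{\Phi}_\ell]$; given this, monotonicity from (a) yields $\mathbb{E}[\Phi_\ell]\ge\mathbb{E}[\bar{\Phi}_\ell]$ for every $p_\psi\ge\tfrac12$, which is the claim. I expect this identity to be the main obstacle. At $p_\psi=\tfrac12$ the symmetric-channel expert labels each edge present/absent by an independent fair coin regardless of its true status, so I must show that stably partitioning a uniformly random permutation of $\calC$ into ``expert-false then expert-true'' again yields a uniformly random permutation; only then does PC-Guess coincide in law with random-ordering PC (the subset ordering being irrelevant to accuracy by Lemma~\ref{lemma: orderingindaccuracy}). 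I would establish this by assigning each edge an independent key $V_e=\mathbbm{1}\{\text{expert-true}\}+U_e$ with $U_e\sim\mathrm{Unif}[0,1]$, so that the stable partition is exactly the argsort of the i.i.d.\ continuous variables $V_e$, whose induced permutation is uniform. The remaining care is simply to confirm that the unguided baseline PC indeed draws its edge ordering uniformly at random, so the two orderings match in distribution and the perfect-recovery expectations coincide.
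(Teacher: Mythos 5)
Your proposal is correct and follows essentially the same route as the paper's proof: part (a) by direct invocation of Lemma~\ref{lemma: edge-accuracy-monotonic} on the per-iteration EL-G call, part (b) by noting that the EP-call trajectory is unaffected by $p_{\text{d-sep}}$ (Lemma~\ref{lemma: orderingindaccuracy}) and then applying Lemma~\ref{lemma: expert-accuracy} per call, and part (c) by showing PC-Guess at $p_\psi = \tfrac12$ induces a uniformly random edge ordering, hence matches the baseline in distribution, and combining with (a). Your argsort-key argument for the uniformity claim in (c) is a nice, more explicit justification of a step the paper simply asserts, but it is a refinement of the same argument rather than a different approach.
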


Let $\Phi$ and $\bar{\Phi}$ denote perfect skeleton recovery probabilities for gPC-Guess and gPC, respectively, and let $t$ denote the total number of tests run in gPC-Guess.
\begin{restatable}[Performance of gPC-Guess]{theorem}{gPCGuessPerformance}\label{theorem: sgs performance}
gPC-Guess
satisfies \ref{criterion:monotone}-\ref{criterion:finite-sample}: \textbf{(a)} $\mathbb{E}[\Phi]$ increases monotonically with $p_\psi$; \textbf{(b)} For fixed $p_\psi$, $\mathbb{E}[t]$ decreases monotonically with $p_{\text{d-sep}}$; \textbf{(c)} When $p_\psi \geq 0.5$, $\mathbb{E}[\Phi] \geq \mathbb{E}[\bar{\Phi}]$.
\end{restatable}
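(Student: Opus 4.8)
The plan is to exploit that gPC-Guess is, up to the choice of subset ordering $\mathsf{L}$, a \emph{single} invocation of EL-G (Subroutine~\ref{algo: expert-edge-ordering1}): it starts from the complete skeleton $\calC$, uses the validity rule $R_{gPC}$, and runs one EL pass with conditioning sizes $[0,|V|-1]$ and the same false-before-true ordering $\mathsf{O}$. This is precisely the structural difference from PC-Guess, which calls EL once per level $\ell$ so that the skeleton entering level $\ell+1$ already depends on the removals made at level $\ell$; that cross-level cascade is what blocked an end-to-end guarantee there. Because gPC-Guess performs only one EL pass from a fixed starting skeleton, the coupling argument of Lemma~\ref{lemma: edge-accuracy-monotonic} applies to the whole algorithm rather than level-by-level.

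For part~(a), I would invoke Lemma~\ref{lemma: edge-accuracy-monotonic} directly, instantiated with the fixed partial skeleton equal to the complete skeleton and conditioning sizes $[0,|V|-1]$. Since the perfect-recovery metric $\Phi$ is independent of $\mathsf{L}$ (Lemma~\ref{lemma: orderingindaccuracy}), the fact that gPC-Guess extracts $\mathsf{L}$ from the expert rather than receiving it as input is irrelevant to $\Phi$, so $\mathbb{E}[\Phi]$ inherits the lemma's monotonicity in $p_\psi$ over the entire run. For part~(b), I would first fix $p_\psi$, which fixes the edge ordering $\mathsf{O}$. Because EP removes an edge iff \emph{some} size-$k$ conditioning set yields independence, the removal outcomes---and hence the entire sequence of EP calls together with their adjacency sets---are invariant to $\mathsf{L}$ (again Lemma~\ref{lemma: orderingindaccuracy}), thus invariant to $p_{\text{d-sep}}$. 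Coupling two d-separation accuracies with shared randomness, the stronger expert places correctly identified d-separating sets weakly earlier inside each EP call, so by the runtime analysis underlying Lemma~\ref{lemma: expert-accuracy} each EP call terminates no later; summing per-call runtimes over the fixed EP sequence and taking expectations gives $\mathbb{E}[t]$ decreasing in $p_{\text{d-sep}}$.

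For part~(c), I would combine part~(a) with the baseline-equivalence claim $\mathbb{E}[\Phi(0.5)] = \mathbb{E}[\bar{\Phi}]$. The key is that EL-G draws a uniform random permutation of $\calC$ and then \emph{stably} partitions it into predicted-false edges followed by predicted-true edges. When $p_\psi = 0.5$ each edge is labeled by an independent fair coin, independent of the ground truth and of the permutation, so the guided ordering is a uniform order of a coin-determined subset concatenated with a uniform order of its complement. A short symmetry computation shows this reproduces the uniform measure exactly: with $m = |\calC|$ edges, any fixed permutation arises with probability $\sum_{s=0}^{m} 2^{-m}/(s!\,(m-s)!) = 1/m!$. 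Hence at the threshold the guided ordering is distributed identically to gPC's uniform random ordering, giving the equivalence; monotonicity from part~(a) then extends it to $\mathbb{E}[\Phi] \ge \mathbb{E}[\bar{\Phi}]$ for all $p_\psi \ge 0.5$.

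The step I expect to be the main obstacle is the baseline equivalence in part~(c): it must hold at the level of \emph{distributions} over edge orderings, not merely in expectation, so one has to verify that the coin-flip-plus-stable-partition construction in EL-G reproduces the uniform measure exactly, making the guided and unguided algorithms identically distributed at $p_\psi = 0.5$. A secondary subtlety is that the coupling in part~(b) only tightens runtime for false edges possessing a genuine d-separating set---true edges exhaust all $\binom{|A|}{k}$ candidate tests regardless of $\mathsf{L}$---so care is needed to confirm that the per-call monotonicity aggregates cleanly across the invariant EP sequence.
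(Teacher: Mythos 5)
Your proposal is correct and follows essentially the same route as the paper's proof: part~(a) by applying Lemma~\ref{lemma: edge-accuracy-monotonic} to the complete skeleton (gPC-Guess being a single EL-G pass), part~(b) by reducing to the runtime coupling of Lemma~\ref{lemma: expert-accuracy} with the EP-call structure held fixed since $p_\psi$ is fixed, and part~(c) by showing the guided ordering at $p_\psi = 0.5$ is distributed exactly as a uniform ordering and then invoking monotonicity. Your explicit $\sum_{s=0}^{m} 2^{-m}/(s!\,(m-s)!) = 1/m!$ computation simply makes rigorous the distributional-equivalence claim that the paper asserts by a symmetry argument, so the two proofs coincide in substance.
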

Proofs appear in App. \ref{proof: pc performance} and \ref{proof: sgs performance}, following directly from Lemmas \ref{lemma: edge-accuracy-monotonic} and \ref{lemma: expert-accuracy}.
\begin{remark}
All montonicity relations and inequalities are strict for nonempty and non-fully connected graphs. The key distinction lies in guarantee scope: while PC-Guess achieves improvements per iteration, these may not compose into end-to-end guarantees due to complex cascading effects across iterations. For example, correctly removing false edges early helps retain true edges later but may make it harder to remove persistent false edges. In contrast, gPC-Guess provides end-to-end guarantees, fully satisfying criteria \ref{criterion:consistency}-\ref{criterion:finite-sample} for the final output. This reflects the fundamental tradeoff between PC-Guess's statistical conditioning bias and gPC-Guess's expert responsiveness.
\end{remark}

\section{Experiments}\label{sec: exps}

\begin{figure*}[t!]
    \centering
    \begin{subfigure}[b]{0.32\textwidth}
        \centering
        \includegraphics[width=\textwidth]{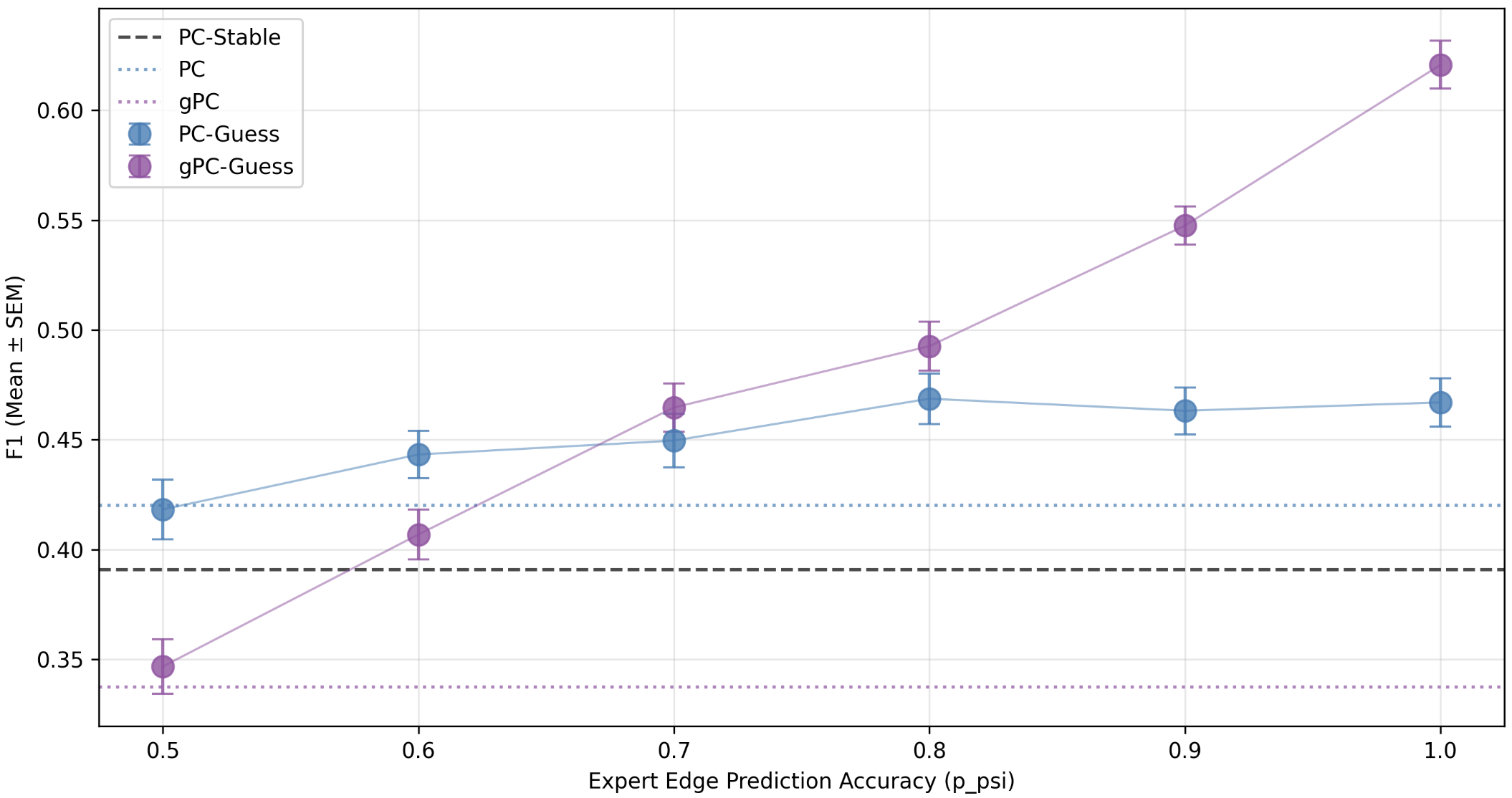}
        \caption{ER3 data with simulated expert.}
        \label{fig:first}
    \end{subfigure}
    \begin{subfigure}[b]{0.32\textwidth}
        \centering
        \includegraphics[width=\textwidth]{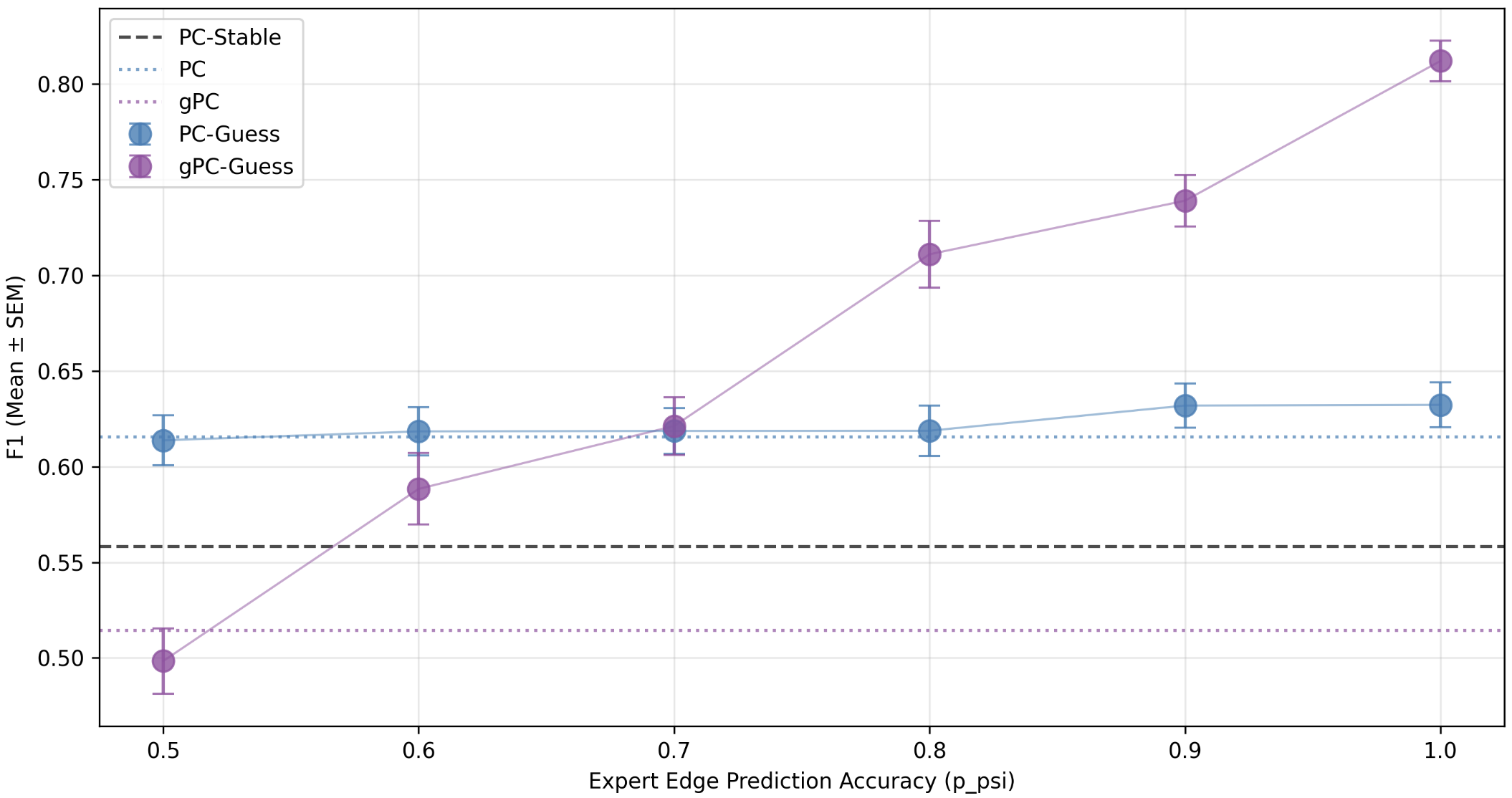}
        \caption{Sachs data with simulated expert.}
        \label{fig:second}
    \end{subfigure}
    \begin{subfigure}[b]{0.32\textwidth}
        \centering
        \includegraphics[width=\textwidth]{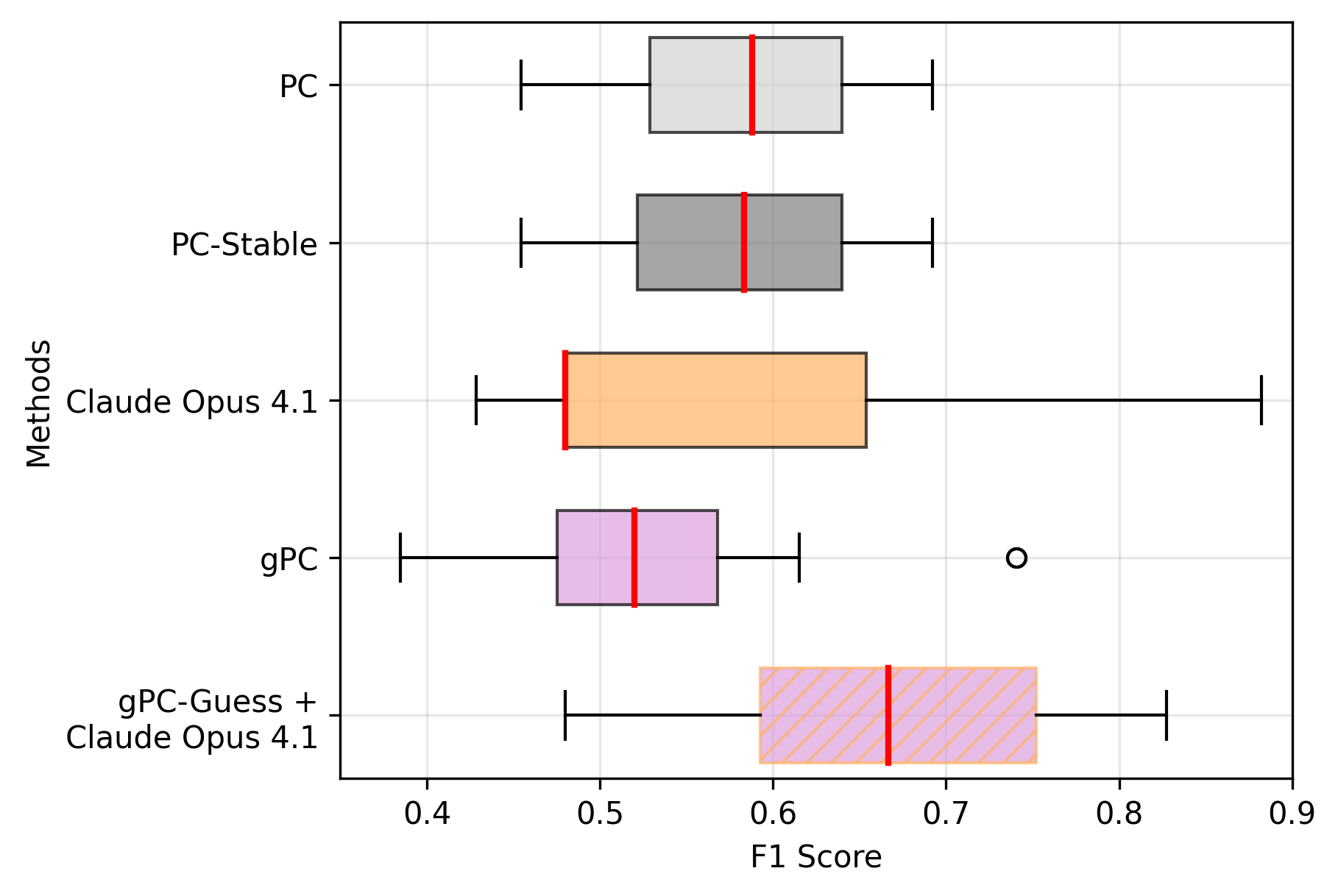}
    \caption{Sachs data with LLM expert.}\label{fig:third}
    \end{subfigure}
\caption{Performance improvement of PC-Guess and gPC-Guess with expert guidance.}
\end{figure*}

We evaluate how our algorithms satisfy criteria \ref{criterion:consistency}-\ref{criterion:finite-sample} through experiments with synthetic and real-world data. Our results validate that gPC-Guess fully achieves \ref{criterion:consistency}-\ref{criterion:finite-sample}, while PC-Guess empirically exceeds its guarantees. LLM-boosted gPC-Guess outperforms both the LLM alone and data-driven methods.






\textbf{Datasets and Experts.}
We evaluate on synthetic data (linear Gaussian models on Erdos-Renyi graphs, \citealt{erdos_renyi}) and real-world benchmarks. Experiments in the main text focus on sparse ER graphs ($d=20$ variables, $n=100$ samples) and subsampled Sachs protein data \citep{sachs2005causal} ($d=11, n=100$). We test two expert types: (1) simulated experts $\psi$ with manually-tuned prediction accuracy (we focus on better than random edge prediction, i.e., $p_{\psi} \geq 0.5$ to validate \ref{criterion:monotone}, and hold constant d-separating set prediction accuracy as entirely random, i.e., $p_{\text{d-sep}}=0.5$), and (2) a LLM expert, specifically Claude Opus 4.1 \citep{anthropic2025claude4}. Appendix \ref{appendix: experimental details} provides full simulation parameters, information about real data, and details on how predictions are generated (for both simulated and LLM experts). We explore additional experiments in App. \ref{appendix: additional experimental results}: varying sample size to validate \ref{criterion:consistency} (statistical consistency), dimensionality, d-separating accuracy ($p_{\text{d-sep}}$), and worst-case performance with experts below the \ref{criterion:finite-sample} threshold ($p_{\psi} \leq 0.5$).

\textbf{Methods and Metrics.}
We compare PC-Guess and gPC-Guess against the order-independent baseline PC-Stable \citep{ramsey2006adjacency}. We include PC and gPC as baseline versions of PC-Guess and gPC-Guess where the edge predictions supplied to both methods are uniformly sampled, i.e. $p_{\psi}=0.5$. All methods use identical CI tests (see Appendix \ref{appendix: implementations}), with $\alpha=0.05$. Performance is measured using skeleton F1 scores (see Appendix \ref{appendix: runtime results} for runtime).






\textbf{Simulated Expert Results.}
Figures \ref{fig:first} and \ref{fig:second} demonstrate how augmented algorithm performance changes as synthetic experts provide increasingly accurate edge predictions on synthetic and real-world datasets. Figure \ref{fig:first} shows that on synthetic data, both PC-Guess and gPC-Guess increase monotonically in F1 score with expert accuracy (verifying \ref{criterion:monotone}). Despite PC-Guess only having theoretical guarantees for per-iteration improvement, it empirically achieves monotonic improvement that is no worse than baseline for $p_{\psi} \geq 0.5$ (empirically satisfying \ref{criterion:finite-sample}). As predicted by theory, gPC-Guess benefits more from guidance, achieving the highest accuracy when expert quality is sufficient ($p_{\psi} \geq 0.7$). These results replicate in Figure \ref{fig:second} on the Sachs real-world dataset: we again observe monotonic gains in both algorithms, but PC-Guess remains relatively flat while gPC-Guess's F1 increases by over 30 percentage points, confirming that algorithmic redesign is necessary to fully realize \ref{criterion:monotone}.




\textbf{LLM Expert Results.}
Figure \ref{fig:third} explores how DAG guesses from real-world expert Claude Opus 4.1 benefit our augmented algorithms on the Sachs dataset. gPC-Guess achieves a 15\% performance boost when combined with Claude's predictions, outperforming the baselines by roughly $10$ percentage points. This demonstrates that our framework extends beyond theory and has potential for combination with existing LLM experts in real-world applications. 

\textbf{Additional Experiments.}
Appendix \ref{appendix: additional experimental results} presents results across varying sparsity, sample sizes, dimensionality, and d-separation accuracy. We report the key findings: both algorithms retain monotonic improvement with expert accuracy in sparse graphs, though gains are reduced (App. \ref{appendix: sparse graph results}). Performance gains from expert predictions diminish with larger samples as all methods converge to the true graph, confirming \ref{criterion:consistency} (App. \ref{appendix: sample size results}). Expert guidance value increases with dimensionality, with greater improvements in high-dimensional, low-sample settings (App. \ref{appendix: dimensionality results}). Below the \ref{criterion:finite-sample} threshold ($p_{\psi}\leq 0.5$), performance drops $\sim8\%$ in F1, but this penalty remains modest due to robust correctness guarantees (App. \ref{appendix: worst case results}).
\textbf{Discussion.} We introduced G2G and applied it to constraint-based discovery to develop PC-Guess and gPC-Guess, which provably leverage unreliable expert predictions with robust guarantees. Future work includes extensions to score/FCM-based algorithms, and integration with hard/soft constraint approaches.
\bibliographystyle{plainnat}  
\bibliography{references}

\clearpage
\appendix
\thispagestyle{empty}

\renewcommand{\thefigure}{\Alph{section}.\arabic{figure}}
\makeatletter
\@addtoreset{figure}{section}
\makeatother

\renewcommand{\thealgorithm}{\Alph{section}.\arabic{algorithm}}
\makeatletter
\@addtoreset{algorithm}{section}
\makeatother

\onecolumn
\aistatstitle{Appendix}
\section{Introduction}\label{appendix: introduction}

\subsection{Unbounded Error Example}\label{appendix: unbounded error}

To illustrate why hard constraints from unreliable experts can cause unbounded error, we provide a concrete example using the PC algorithm. Recall that PC discovers the skeleton of the underlying DAG by running conditional independence tests, removing edges between pairs of variables when there exists at least one conditioning set that renders them conditionally independent.

\textbf{Example construction.} Consider the true DAG on four variables $\{x_1, x_2, x_3, x_4\}$ where $x_1$ is a common cause: $x_1 \rightarrow x_2$, $x_1 \rightarrow x_3$, $x_1 \rightarrow x_4$. In this structure, $x_1$ confounds the relationships between $x_2$, $x_3$, and $x_4$. Now suppose an expert provides a hard constraint specifying that $x_1$ is an isolated node with no edges, implying $x_1 \perp\!\!\!\perp x_2$, $x_1 \perp\!\!\!\perp x_3$, $x_1 \perp\!\!\!\perp x_4$.

\textbf{Cascading failure.} Under this incorrect hard constraint, PC with oracle independence tests will incorrectly infer edges between all pairs in $\{x_2, x_3, x_4\}$: specifically, edges $(x_2, x_3)$, $(x_2, x_4)$, and $(x_3, x_4)$. This occurs because the only conditioning set that renders these variables pairwise independent is $\{x_1\}$. However, PC's adjacency-based testing procedure means that when testing whether to remove edge $(x_i, x_j)$, the algorithm only considers conditioning sets composed of variables adjacent to at least one of $x_i$ or $x_j$ in the current skeleton. Since the expert's hard constraint excludes $x_1$ from all adjacency sets by decree, the critical conditioning set $\{x_1\}$ is never tested, and all three spurious edges are incorrectly retained.

\textbf{Unbounded error.} The resulting skeleton exhibits maximum possible error: every edge in the recovered skeleton is absent from the true graph (three false positives), and every edge in the true graph is absent from the recovered skeleton (three false negatives). This demonstrates unbounded error in the sense that expert misinformation leads to arbitrarily poor performance relative to running PC without any expert guidance—even with oracle conditional independence tests and infinite samples.

\textbf{General mechanism of error propagation.} This example illustrates a fundamental vulnerability of hard-constraint methods: causal discovery algorithms typically leverage results from early tests to determine which tests are run (and how their results are interpreted) at later stages. This sequential dependence is often necessary to efficiently navigate the super-exponential search space of DAGs. However, it creates a pathway for error propagation, where mistakes in discovering one part of the causal structure (such as incorrectly excluding edges based on expert constraints) cause: (i) crucial tests to never be performed, and (ii) results of later tests to be misinterpreted or leveraged incorrectly. When expert advice in the form of a hard constraint incorrectly determines parts of the causal graph, it initiates a cascade of errors that propagates throughout the algorithm, potentially leading to catastrophic failure even in the large-sample limit. This motivates our approach of using expert guidance to optimize test sequences rather than to replace statistical testing with hard constraints.

\subsection{Sequential Testing in Causal Discovery}\label{appendix: sequential test issue}

Here we discuss how sequential statistical testing underlies the three major paradigms of causal discovery: constraint-based, score-based, and functional causal model (FCM) based methods. In each paradigm, the tests performed at later stages fundamentally depend on the outcomes of earlier tests, creating a sequential dependence structure susceptible to error propagation.

\textbf{Constraint-based methods.} Constraint-based algorithms operate by performing conditional independence tests (CITs) to iteratively refine a candidate graph structure. Critically, which CITs are performed at any given stage depends on the results of CITs performed in earlier stages. This sequential dependence manifests primarily through \emph{adjacency sets}: when testing whether to remove an edge between variables $x_i$ and $x_j$, algorithms such as PC \citep{spirtes_anytime_2001} only condition on subsets of variables currently adjacent to $x_i$ or $x_j$ in the working skeleton. When earlier tests incorrectly remove (or retain) edges, the adjacency sets used in subsequent tests are corrupted. For instance, if a true edge $x_i \rightarrow x_k$ is incorrectly removed early in the algorithm, then $x_k$ will be excluded from the adjacency set when later testing edges involving $x_i$, potentially causing the algorithm to miss the conditioning set needed to correctly identify other edges. This cascading effect means that errors in early tests propagate through the entire discovery process via their influence on which variables are considered in later conditioning sets.

\textbf{Score-based methods.} Score-based algorithms attempt to optimize a scoring function (such as BIC) by iteratively adding, deleting, or reversing edges in the candidate graph \citep{chickering_optimal_2002}. At each step, algorithm evaluates the score change associated with each possible modification and selects the move that most improves the score. Crucially, the score assigned to any proposed edge modification depends on the \emph{current parent sets} of the variables involved, which are themselves determined by all prior edge additions and deletions. For example, when considering whether to add edge $x_i \rightarrow x_j$, the score change is computed based on $x_j$'s current parent set $\text{Pa}(x_j)$; if previous iterations incorrectly modified $\text{Pa}(x_j)$, the score for this new edge will be miscalculated. Recent work has formalized score-based methods as performing implicit conditional independence tests \citep{chickering_statistically_2020}, making the connection to sequential testing even more explicit. Like constraint-based methods, score-based approaches exhibit cascading errors where early mistakes in edge selection corrupt the parent sets used for scoring future edge modifications.

\textbf{FCM-based methods.} Functional causal model (FCM) based methods, specifically those based on additive noise models (ANM) \citep{zhang_distinguishing_2008}, typically operate by constructing a topological ordering of variables through recursive identification of roots (variables with no parents) or leaves (variables with no children \citep{montagna_scalable_2023,montagna_causal_2023, suj_2024, suj_2025_losam}). The standard approach maintains a set of unsorted variables and, at each iteration, tests which of these variables satisfy the root or leaf condition by checking whether they are independent of other unsorted variables conditional on already-sorted variables. For instance, when identifying leaves, a variable $x_i$ is classified as a leaf if its residuals (after regressing on each other unsorted variable individually) are independent of those regressors. The key sequential dependence arises because: (i) the set of candidate variables tested for root/leaf status at each iteration depends on which variables were identified in previous iterations, and (ii) the conditioning sets used in these independence tests (the already-sorted variables) are built up incrementally based on prior test results. If an earlier iteration incorrectly identifies a non-leaf as a leaf, this error propagates by corrupting both the candidate set and the conditioning sets used in all subsequent iterations. This can be viewed as a series of tests for root/leaf status, where the outcome of each test determines which variables and conditioning sets are used in future tests.

\textbf{Implications for expert guidance.} This universal reliance on sequential testing across all three paradigms creates both a challenge and an opportunity. The challenge is that error propagation can cause early mistakes to cascade throughout the algorithm. The opportunity is that by optimizing the \emph{sequence} in which tests are performed—without changing the tests themselves—we can improve finite-sample performance while preserving asymptotic correctness. Our G2G framework exploits this insight by using expert predictions to guide test sequences rather than to replace statistical testing, making it broadly applicable across constraint-based, score-based, and FCM-based methods.

\subsection{Expert Error Violating Guarantees for Expert-Aided Discovery with Soft Constraints}\label{appendix: expert error violation}

Traditional expert-aided discovery methods integrate expert predictions through either hard constraints (which enforce expert beliefs directly) or soft constraints (which bias statistical procedures toward expert beliefs). In Appendix~\ref{appendix: unbounded error}, we demonstrated how hard constraints can lead to unbounded error through error propagation. Here, we discuss how soft constraint approaches fundamentally compromise the theoretical guarantees of causal discovery algorithms in both score-based and constraint-based methods. In what follows we summarize the analysis of \citet{wu2025llm}, who demonstrate that soft constraint methods break the mathematical foundations underlying both paradigms.

\textbf{Soft constraints in score-based methods.} Score-based methods incorporating soft constraints typically modify the scoring function by adding a prior term based on expert predictions:
$$\sigma(G; D, \lambda) = \sigma(G; D) + \sigma(G; \lambda)$$
where $\sigma(G; D)$ evaluates how well graph $G$ fits the data $D$, and $\sigma(G; \lambda)$ rewards or penalizes structures based on expert constraints $\lambda$. This direct summation creates three fundamental problems that undermine theoretical guarantees:

First, \emph{the terms operate in incompatible probability spaces}. The data term $\sigma(G; D)$ reflects log-likelihood under sample data, while the prior term $\sigma(G; \lambda)$ encodes expert beliefs independent of data. These quantities have different statistical foundations and cannot be meaningfully combined through simple addition. The scale mismatch is severe: for example, in the BIC score $\sigma_{BIC}(G; D) = \log P(D|G) - \frac{k}{2}\log N$, the sample size $N$ dramatically affects the magnitude of the data term, while $\sigma(G; \lambda)$ is independent of $N$. Without principled normalization, one term typically dominates, either drowning out expert guidance or allowing poor expert predictions to overwhelm statistical evidence.

Second, \emph{existing scoring functions already contain implicit priors that conflict with expert priors}. For instance, the BDeu score \citep{scanagatta2014minbdeu} assumes Dirichlet uniform priors and prior independence of all variables—assumptions that directly contradict typical expert predictions about causal relationships. Introducing $\sigma(G; \lambda)$ creates competing prior specifications with no principled mechanism for reconciliation, requiring ad-hoc hyperparameters to balance multiple conflicting priors and data fit.

Third, \emph{adding expert priors breaks critical structural properties}. Traditional scoring functions satisfy \emph{decomposability}, allowing the score to be written as $\sigma(G; D) = \sum_{i=1}^n \sigma(x_i, \text{Pa}(x_i); D)$, which enables efficient local optimization. Expert priors $\sigma(G; \lambda)$ often impose global constraints across multiple variables that cannot be decomposed into local contributions, rendering many optimization algorithms inapplicable. Additionally, soft constraints violate \emph{score local consistency}, which guarantees that improving local fit to data improves the overall score. When expert priors incorrectly emphasize certain relationships, local changes that better reflect data may receive lower scores, breaking the connection between score optimization and causal structure recovery.

\textbf{Soft constraints in constraint-based methods.} While hard constraints in constraint-based methods create error propagation pathways (Appendix~\ref{appendix: unbounded error}), soft constraint approaches that incorporate edge priors from experts into conditional independence testing face a different but equally fundamental issue: they invalidate the distributional theory underlying hypothesis tests.

Some approaches modify conditional independence test statistics by incorporating expert edge priors. For example, methods that integrate soft constraints into the $G^2$ test adjust the statistic as:
$$G^2(X, Y | Z) - p > \chi^2_{\alpha, f}$$
where $p$ encodes prior beliefs about the strength or likelihood of the causal relationship between $X$ and $Y$. While this appears to simply modulate the rejection region based on expert confidence, it fundamentally invalidates the statistical theory underlying the test.

The core issue is that \emph{subtracting $p$ distorts the asymptotic distribution}. The $G^2$ statistic has well-established asymptotic properties—specifically, it follows a chi-squared distribution under the null hypothesis of conditional independence. Subtracting an arbitrary prior term $p$ based on expert beliefs destroys these properties: the modified statistic $G^2(X, Y | Z) - p$ no longer follows a chi-squared distribution, invalidating hypothesis tests based on comparing to $\chi^2_{\alpha, f}$ critical values. Even if the modified statistic could be shown to asymptotically follow some chi-squared distribution, the degrees of freedom $f$ and critical values would need to be rederived from first principles—a non-trivial task that existing methods do not address. Without valid asymptotic theory, there are no guarantees about Type I error rates, power, or consistency.

This problem extends beyond the $G^2$ test to any approach that incorporates expert edge priors by modifying test statistics. Whether using Fisher's Z-test, permutation tests, or other conditional independence tests, directly adjusting the test statistic or critical values based on soft expert constraints breaks the calibration that ensures valid statistical inference. The result is that even with infinite data, these methods cannot guarantee asymptotic correctness when expert predictions are inaccurate.

\textbf{Implications.} Both score-based and constraint-based soft constraint approaches sacrifice the rigorous statistical foundations that provide guarantees for purely data-driven methods. In score-based methods, soft constraints introduce mathematically inconsistent score combinations and break structural properties needed for optimization. In constraint-based methods, soft constraints invalidate the distributional theory underlying hypothesis tests. These are not merely technical concerns—they represent fundamental incompatibilities between integrating unreliable expert predictions through soft constraints and maintaining statistical guarantees. Our G2G framework addresses these issues by using expert predictions to guide test \emph{sequences} rather than test \emph{outcomes}, preserving the statistical validity of each individual test while leveraging expert knowledge to improve finite-sample efficiency.

\newpage
\section{Definitions}\label{appendix: problem setup}
This section details the core assumptions underlying our causal structure learning framework.

\begin{definition}[Causal Markov Condition, \citealt{spirtes_anytime_2001}]\label{def:markov}
    A causal graph \(\mathcal{G} = (V, E)\) satisfies the Causal Markov Condition if and only if every variable \(x_i \in V\) is independent of its non-descendants given its parents \(\mathrm{Pa}(x_i)\). This implies that the joint distribution \(p(V)\) factorizes as:
    \begin{equation}
        p(V) = \prod_{i=1}^{d} p(x_i \mid \mathrm{Pa}(x_i)).
    \end{equation}
\end{definition}

\begin{definition}[Acyclicity, \citealt{spirtes_anytime_2001}]\label{def:acyclicity}
    A causal graph \(\mathcal{G}\) is acyclic if it contains no directed paths starting and ending at the same node.
\end{definition}

\begin{definition}[Faithfulness, \citealt{spirtes_anytime_2001}]\label{def:faithfulness}
    A distribution \(p\) is faithful to a graph \(\mathcal{G}\) if every conditional independence relation present in \(p\) is entailed by the Causal Markov Condition applied to \(\mathcal{G}\). That is, for any disjoint sets of variables \(x, y, Z\):
    \[
    x \perp\!\!\!\perp y \mid Z \text{ in } p \quad \rightarrow \quad Z \text{ d-separates } x \text{ and } y \text{ in } \mathcal{G}.
    \]
\end{definition}

\begin{definition}[Causal Sufficiency, \citealt{spirtes_anytime_2001}]\label{def:sufficiency}
    A set of variables \(V\) is causally sufficient if there exist no unobserved confounders for any pair of variables in \(V\). Formally, for any \(x_i, x_j \in V\), there is no unmeasured variable \(U \notin V\) that is a direct cause of both \(x_i\) and \(x_j\) in the true causal graph.
\end{definition}

\begin{definition}[d-Separation, \citealt{spirtes_anytime_2001}]\label{def:d_separation}
    A set of variables \(\mathbf{Z}\) d-separates variables \(x\) and \(y\) in a graph \(\mathcal{G}\) if and only if \(\mathbf{Z}\) blocks all paths between \(x\) and \(y\). This graphical condition implies the conditional independence \(x \perp\!\!\!\perp y \mid \mathbf{Z}\) in every distribution that is Markov with respect to \(\mathcal{G}\).
\end{definition}

\begin{definition}[Markov Equivalence Class, \citealt{spirtes_anytime_2001}]\label{def:markov_equivalence}
    The Markov equivalence class (MEC) of a DAG \(\mathcal{G}\) is the set of all DAGs that imply the same set of conditional independence statements via d-separation. Under the assumptions of causal Markov, faithfulness, and causal sufficiency, the MEC can be identified from perfect conditional independence tests and is typically represented by a Completed Partially Directed Acyclic Graph (CPDAG).
\end{definition}

\begin{definition}[Sufficient Power and Mutual Independence of CITs]\label{assumption:cit_power_independence}
    We assume that conditional independence tests satisfy two properties:

  \begin{enumerate}
    \item \textbf{Sufficient Specificity}: CITs are adequately powered such that $1 - \alpha > \beta$, where $\alpha$ is the Type I error rate (false positive rate) and $\beta$ is the Type II error rate (false negative rate) of each CIT. 
    
    \item \textbf{Mutual Independence}: The outcomes of different CITs are mutually independent. Formally, for CITs indexed by $i \in \{1, \ldots, m\}$:
    \[
    \mathbb{P}(\text{CIT}_1, \ldots, \text{CIT}_m) = \prod_{i=1}^m \mathbb{P}(\text{CIT}_i)
    \]

\end{enumerate}

    The `sufficient specificity' assumption claims that each CIT has a false positive and false negative rate of $\alpha,\beta$ respectively, and ensures the true negative rate exceeds the false negative rate, i.e., that there is enough data collected such that CITs that condition on d-separating sets return independence with higher probability than those that do not.

      The `mutual independence' assumption is a simplifying assumption for theoretical analysis commonly made in causal discovery \citep{brown2008strategy, li2009controlling, strobl2019estimating, cooper1999causal}. The statement holds exactly with infinite data or when using data-splitting (where each CIT uses an independent sample), but is an approximation when CITs reuse the same finite dataset, as is common in causal discovery. 
      
      We invoke these assumptions for our runtime analysis (Subroutine \ref{algo: expert-subset-ordering1}, Lemma \ref{lemma: optimal-sequence}), but note that they are both \textit{not} used in our accuracy analysis (Subroutine \ref{algo: expert-edge-ordering1}, Lemma \ref{lemma: edge-accuracy-monotonic}).
\end{definition}

\newpage
\section{CD-GUESS Framework and Application to Constraint-Based Discovery}\label{appendix: framework and CB discovery}



\subsection{Extensions of CD-GUESS Framework}\label{appendix: cd-guess framework extensions}
Here we outline a few possible extensions of the CD-GUESS Framework.

\subsubsection{Heuristic Selection of Experts and Pruning of Guesses}

When integrating LLM predictions with data-driven causal discovery, we identify three distinct regimes: World 1 where data-driven methods alone perform best (LLM guidance degrades performance), World 2 where hybrid approaches excel (combining data and LLM guidance), and World 3 where the LLM guess alone suffices (outperforming empirical methods). Which world applies depends on the interplay between LLM guess quality, sample size, and the underlying graph structure—with higher-quality guesses and smaller samples favoring Worlds 2 and 3. Our primary concern is preventing World 1 scenarios by detecting when LLM predictions are harmful, as this represents the most critical failure mode where expert guidance actively degrades baseline performance. There are two main approaches to trying to prevent World 1 from occurring. The first is validate whether the specific guess given by an expert is better than random (Checking Guess Quality). If this is infeasible, we can attempt to identify whether the expert is any good in this domain (Checking Expert Competency).

\textbf{Guess Quality.} This approach evaluates specific causal structure predictions by assigning fitness scores that correlate with graph accuracy—higher scores indicate more accurate graphs. To assess whether a given score is meaningful, we construct a null distribution by sampling random graphs and testing whether the expert's guess scores significantly above this baseline. Two primary methodologies exist: likelihood-based and nonparametric approaches, which differ in computational complexity and modeling flexibility.

Likelihood-based methods \citep{huang_generalized_2018} assume parametric models for the underlying functional relationships (e.g., residing in specific kernel spaces or exponential families) and derive scores that measure graph-data compatibility. Random graphs of similar density provide the comparison baseline. Notably, likelihood scores do not monotonically increase with graph accuracy as measured by skeleton or edge metrics, serving instead as relative comparison measures between candidate structures.

Nonparametric approaches, exemplified by permutation-based methods \citep{eulig2025toward}, evaluate graphs by counting how many conditional independence tests in the data align with the d-separation statements implied by each graph. When sampling comparison graphs, we can either generate them uniformly at random or preserve specific properties of the expert guess, such as sparsity constraints or causal ordering structure. While these methods provide monotonic relationships between scores and accuracy (measured by satisfied CI statements), they incur higher computational costs and may exhibit greater sensitivity to finite-sample effects.

Formally, we suggest to compute a $p$-value as $p = \frac{1}{m}\sum_{i=1}^m \mathbbm{1}[\text{score}(\widehat{\mathcal{G}}_{\mathcal{R}_i}) \geq \text{score}(\widehat{\mathcal{G}})]$, where $\{\widehat{\mathcal{G}}_{\mathcal{R}_i}\}_{i=1}^m$ are randomly sampled graphs preserving relevant properties (e.g., edge density), and $\widehat{\mathcal{G}}$ is the expert prediction. If $p < \alpha_{\text{val}}$ for significance level $\alpha_{\text{val}}$ (e.g., 0.05), indicating the expert guess scores significantly better than random, we incorporate the guidance into test prioritization; otherwise, we proceed without expert guidance to maintain baseline performance guarantees.

We note that there are roughly 3 possible scenarios where it is difficult to directly assess the quality of a guess. The first is that data-driven methods may require strong assumptions on the DGP, such as parametric functional assumptions, that may not be satisfied. Additionally, some types of graph structures, such as dense graphs, provide relatively few CIs for nonparametric scores to validate, leading to low power in those regimes. Additionally, while there are many methods for assigning a score to causal graphs, there are fewer methods for assessing the fittingness of causal orderings, a fundamentally more difficult problem. Therefore, in these cases where it is hard to leverage the data itself, we propose to rely on a data-independent measure: the self-consistency of the expert.

\textbf{Expert Self-Consistency.} As suggested by \cite{faller2024self}, we assess expert reliability by evaluating self-consistency across predictions on overlapping variable subsets. This data-free approach requires the expert to produce causal graphs for multiple variable subsets and measures agreement where these subsets overlap—paralleling recent findings that LLM response consistency under prompt variations correlates with reliability. We suggest to compute a consistency score $\mathcal{C}(\psi) = \frac{1}{|\mathcal{P}|} \sum_{(S_1, S_2) \in \mathcal{P}} \text{sim}(\widehat{\mathcal{G}}_\psi(S_1 \cup S_2)|_{S_1 \cap S_2}, \widehat{\mathcal{G}}_\psi(S_1 \cap S_2))$, where $\psi$ is the expert, $\mathcal{P}$ is a collection of overlapping variable subset pairs, and $\text{sim}(\cdot, \cdot)$ is some sort of measure of structural similarity on the shared variables. This metric provides theoretical guarantees: a perfectly accurate expert must exhibit perfect self-consistency. If $\mathcal{C}(\psi) > \tau_c$ for a pre-specified threshold $\tau_c$ (e.g., 0.7), we proceed with expert-guided discovery; otherwise, we default to vanilla PC to preserve baseline performance. While this approach effectively screens for domain competence, it faces two limitations: the computational cost of querying numerous subsets may be prohibitive, and it evaluates general expert reliability rather than the quality of specific structural predictions.

\subsubsection{Integrating Uncertainty Quantification}

We note that PC-Guess can be extended to leverage confidence estimates, for use in situations where an expert can do uncertainty quantification for their predictions. For example, the ordering algorithms (Subroutines \ref{algo: expert-edge-ordering1}, \ref{algo: expert-subset-ordering1})
can be modified such that the orderings are not drawn uniformly from two buckets, but rather according to confidence score. More formally, when the expert provides edge confidence scores $w_\psi(x_i, x_j) \in [0,1]$ and ordering confidence $\pi_\psi(x_i \prec x_j) \in [0,1]$, we suggest to modify test prioritization to sample probabilistically rather than deterministically. For edge testing order, we sample the next pair $(x_i, x_j)$ with probability proportional to $1 - w_\psi(x_i, x_j)$, favoring likely non-edges. For conditioning set selection, we sample set $S$ with probability proportional to $\prod_{z \in S} \pi_\psi(z \prec \{x_i, x_j\})$, favoring sets containing likely ancestors. This probabilistic approach naturally interpolates between expert-guided and random search based on prediction confidence, maintains theoretical guarantees (all tests have non-zero probability), and becomes exploratory under high uncertainty. With uniform uncertainties, it reduces to standard PC.

\subsubsection{Leveraging Causal Reasoning}

Additionally, when economically or computationally feasible, after a number of structural decisions (e.g., edge confirmation/removal) are made, we could in theory reprompt the expert to provide a new guess using the updated partial structure, repeating the process until the algorithm terminates. This iterative refinement would then leverage the compositional reasoning abilities of experts to produce progressively better predictions as structural information is confirmed, allowing them to correct earlier mistakes and incorporate validated constraints. To fully exploit expert capabilities, we reprompt the expert for updated predictions after each edge decision during discovery. This iterative approach offers two key advantages: first, as the structure is progressively verified, the problem dimensionality decreases, where LLMs demonstrably achieve higher accuracy on smaller graphs; second, the confirmed structural information provides additional context that allows the expert to correct earlier mistakes and make more informed predictions about remaining edges. Thus, rather than using a static initial guess, we dynamically update expert guidance throughout the discovery process.

\subsection{Extension to Score-Based Methods}\label{appendix: cd-guess score extension}

\textbf{Method description.} Score-based algorithms like GES \citep{chickering_optimal_2002} discover causal structure by greedily optimizing a scoring function, iteratively adding or removing edges that maximize the score improvement. A variant called SE-GES \citep{chickering_statistically_2020} restricts the search to statistically efficient operators—those conditioning on the fewest variables—to improve finite-sample performance:

{
\renewcommand{\thealgorithm}{A}
\begin{algorithm}[h!]
\caption{Statistically Efficient Greedy Equivalence Search (SE-GES)}
\label{algo: se-ges}
\begin{algorithmic}[1]
\State \textbf{Input}: Data $D$
\State \textbf{Output}: CPDAG $\mathcal{C}$
\State $\mathcal{C} \leftarrow$ FINDIMAP($D$)
\State $M^{-1} \leftarrow$ UNDEFINED for all node pairs
\State $k \leftarrow 0$
\Repeat
    \State $M^k \leftarrow$ UPDATESEPARATORS($M^{k-1}$, $k$)
    \State $\mathcal{C} \leftarrow$ SE-BES($\mathcal{C}$, $k$)
    \If{every node in $\mathcal{C}$ has $\leq k$ parents}
        \State \Return $\mathcal{C}$
    \Else
        \State $k \leftarrow k + 1$
    \EndIf
\Until{convergence}
\end{algorithmic}
\end{algorithm}
\addtocounter{algorithm}{-1}
}

SE-GES (Algorithm 2) operates as follows. First (Line 1), it applies FINDIMAP to identify an initial graph structure which contains the conditional independencies in the data. It then progressively iterates through increasing values of bound $k$ on the conditioning set size, starting from $k=0$. At each iteration, two key steps occur:

\textit{Line 5 - UPDATESEPARATORS}: This subroutine identifies all order-$k$ separators $M^k$ by testing conditional independence statements of size $k$. These separators capture which variable pairs are conditionally independent given $k$ conditioning variables, forming the statistical basis for edge removal decisions. Specifically, UPDATESEPARATORS tests CI statements of the form CIT($x_i, x_j \mid W$) where $|W| = k$, storing those that return independent as order-$k$ separators. This collection $M^k$ grows as $k$ increases, accumulating evidence about which edges can be safely removed.

\textit{Line 6 - SE-BES}: Using the separators $M^k$, SE-BES performs backward elimination by evaluating deletion operators (edge removals) while restricting to those of order at most $k$, optimizing a scoring function that balances fit and complexity.

After SE-BES completes, Line 7 checks whether the resulting CPDAG is consistent with bound $k$—whether every node has at most $k$ parents. If consistent, the algorithm terminates; otherwise, it increments $k$ and repeats, testing higher-order conditional independencies.

\textbf{Computational challenge and proposed modification.} As graph size grows, exhaustively testing all $k$-order CI statements becomes computationally prohibitive. This necessitates imposing a budget $Q$ on the number of CI tests per edge per round, inducing a preference over which tests to prioritize and in what order to execute them.

We propose using G2G-guided test ordering as a subroutine within SE-GES. Specifically, for each conditioning set size $k$, rather than testing CI statements uniformly at random, we apply the ordering principles from PC-Guess (Section~\ref{sec: g2g cb theory}) to prioritize tests: (1) extract edge predictions from expert graph $\widehat{\mathcal{G}}$, (2) prioritize testing edges the expert believes are false (likely to yield independence), and (3) within each edge's tests, prioritize conditioning sets the expert predicts are d-separating.

\textbf{Potential benefits.} This modification offers several advantages while preserving SE-GES's theoretical guarantees:
\begin{itemize}
    \item \emph{Budget efficiency}: Under limited testing budgets, prioritizing tests more likely to reveal true independencies increases the probability of correctly identifying removable edges within the allocated $Q$ tests per edge
    \item \emph{Search space reduction}: Earlier identification of true non-edges prunes the graph faster, reducing downstream computational costs
    \item \emph{Preserved guarantees}: Since SE-GES's correctness depends only on testing sufficient CI statements (not their order), expert-guided prioritization cannot compromise asymptotic correctness
\end{itemize}

\textbf{Conjectured theoretical properties.} Analogous to our constraint-based results (Theorems~\ref{theorem: guess acc correct}, \ref{theorem: sgs performance}), we conjecture that expert-guided SE-GES satisfies similar guarantees. Specifically, we expect that as sample size $n \to \infty$ and testing budget $Q \to \infty$, the algorithm maintains statistical consistency and recovers the true skeleton regardless of expert quality, since all edges eventually receive their full budget of tests. For fixed $Q$ and $n$, we conjecture that the expected probability of correct skeleton recovery increases monotonically with expert edge prediction accuracy $p_\psi$—better experts identify true non-edges earlier, increasing the probability that budget-limited tests discover removable edges. Finally, we expect finite-sample robustness: when $p_\psi \geq 0.5$, expert-guided test ordering should achieve performance at least as good as random ordering, since experts performing at chance produce orderings distributionally equivalent to random, with strictly better performance when $p_\psi > 0.5$. While we have not completed formal proofs of these properties, the intuition directly mirrors our constraint-based analysis, and we believe the results could be established using similar coupling arguments we use in our proofs.

Prior work has proposed expert-augmented score-based methods through heuristics like initialization with expert guesses or preferring operations agreeing with expert predictions \citep{constantinou2023impact, ejaz_less_2025}, but without formal guarantees. Our G2G framework suggests a principled approach with conjectured theoretical properties worth investigating in score-based methods.

\subsection{Extension to ANM-Based Methods}\label{appendix: cd-guess anm extension}

\textbf{Method description.} ANM-based algorithms discover causal structure by exploiting asymmetries in functional relationships under additive noise models. A prominent example is RESIT \citep{peters_causal_2014}, which identifies the causal DAG by iteratively finding leaf nodes (variables with no children):

{
\renewcommand{\thealgorithm}{B}
\begin{algorithm}[h!]
\caption{RESIT (Simplified)}
\label{algo: resit}
\begin{algorithmic}[1]
\State \textbf{Input}: Data $D$ on variables $V = \{x_1, \ldots, x_d\}$
\State \textbf{Output}: Parent sets $\{\text{Pa}(x_i)\}_{i=1}^d$
\State $S \leftarrow V$, $\pi \leftarrow []$ \Comment{$S$ tracks remaining variables, $\pi$ builds topological order}
\State \textbf{Phase 1: Determine topological order}
\Repeat
    \For{each $x_k \in S$}
        \State Regress $x_k$ on $S \setminus \{x_k\}$
        \State Test independence between residuals and $S \setminus \{x_k\}$
        \State Record dependence strength
    \EndFor
    \State Let $x_{k^*}$ be variable with weakest dependence
    \State $\text{Pa}(x_{k^*}) \leftarrow S \setminus \{x_{k^*}\}$
    \State $S \leftarrow S \setminus \{x_{k^*}\}$
    \State $\pi \leftarrow [x_{k^*}, \pi]$ \Comment{Prepend to topological order}
\Until{$S = \emptyset$}
\State \textbf{Phase 2: Prune superfluous edges} (analogous to constraint-based edge removal)
\State \Return $\{\text{Pa}(x_i)\}_{i=1}^d$
\end{algorithmic}
\end{algorithm}
\addtocounter{algorithm}{-1}
}

RESIT (Algorithm B) operates in two phases. Phase 1 (Lines 4-13) iteratively identifies leaf nodes by testing all remaining variables to find which has residuals most independent of the others—this variable is a leaf and removed from consideration. The algorithm builds a topological ordering $\pi$ by prepending each identified leaf. Phase 2 (Line 14) then prunes unnecessary edges from the identified parent sets.

The critical computational bottleneck is Line 5: at each iteration with $|S|$ remaining variables, RESIT must test all $|S|$ candidates, requiring $O(d^2)$ total tests across iterations. Moreover, the order in which variables are tested affects accuracy—if a non-leaf is incorrectly identified as a leaf due to test errors (false positive), this error propagates through subsequent iterations.

\textbf{Computational challenge and proposed modification.} The key insight is that the order in which variables are tested in Line 5 critically impacts both efficiency and accuracy. Testing true leaves first reduces the number of tests (since the leaf is found immediately) and prevents false positive errors from incorrectly identifying non-leaves.

We propose modifying RESIT to use G2G-guided variable ordering. Specifically: (1) query expert $\psi$ for a predicted topological ordering $\hat{\pi}$ over variables $V$, (2) in Line 5, test variables in the order specified by $\hat{\pi}$ rather than arbitrary order, prioritizing variables the expert predicts appear later in topological order (more likely to be leaves). This modification directly parallels our edge ordering principle from Section~\ref{sec: g2g cb theory}—prioritizing tests more likely to succeed.

\textbf{Potential benefits.} This modification offers several advantages:
\begin{itemize}
    \item \emph{Computational efficiency}: Testing true leaves first reduces the expected number of independence tests per iteration from $O(|S|)$ to $O(1)$ when expert predictions are accurate
    \item \emph{Error reduction}: Identifying true leaves early prevents false positives on non-leaves, reducing error propagation through the topological ordering
    \item \emph{Preserved guarantees}: Since RESIT's correctness depends on testing all variables (not their order), expert-guided prioritization cannot compromise asymptotic correctness
\end{itemize}

\textbf{Conjectured theoretical properties.} Analogous to our constraint-based results (Theorems~\ref{theorem: guess acc correct}, \ref{theorem: sgs performance}), we conjecture that expert-guided RESIT satisfies similar guarantees. Let $p_\pi$ denote the expert's accuracy in predicting topological orderings (probability that for a random pair $(x_i, x_j)$, if $x_i$ is an ancestor of $x_j$, then $\hat{\pi}(x_i) < \hat{\pi}(x_j)$). We expect that as sample size $n \to \infty$, the algorithm maintains statistical consistency and recovers the true DAG structure regardless of $p_\pi$, since all variables are eventually tested. For fixed $n$, we conjecture that the expected probability of correct DAG recovery increases monotonically with $p_\pi$—better topological ordering predictions place true leaves earlier in the test sequence, increasing the probability of correct identification before errors accumulate. We also expect finite-sample robustness: when $p_\pi \geq 0.5$, expert-guided ordering should achieve performance at least as good as random variable ordering, since random orderings provide the baseline ($p_\pi = 0.5$) with improvement when $p_\pi > 0.5$. Finally, we conjecture that the expected number of tests decreases monotonically with $p_\pi$ because testing true leaves first terminates the inner loop earlier, providing computational efficiency gains. While we have not completed formal proofs of these properties, the intuition follows directly from our core insights, and we believe the results are straightforward to establish using similar analytical techniques.

This extension demonstrates how G2G principles—using expert predictions to guide test sequences—apply beyond constraint-based methods to functional causal model approaches, suggesting potential broad applicability of the framework.

\subsection{Decomposition of Constraint-Based Discovery into Edge Prune and Edge Loop}\label{appendix: cb subroutine decomposition}

We demonstrate how two constraint-based algorithms—PC and rPC-approx—decompose into the Edge Loop (EL) and Edge Prune (EP) subroutines defined in Section~\ref{sec: g2g cb theory}.

\textbf{PC Algorithm.} The PC algorithm iteratively tests edges with increasing conditioning set sizes $\ell$, removing edges when independence is found.

{
\renewcommand{\thealgorithm}{C}
\begin{algorithm}[h!]
\caption{PC (Skeleton Discovery)}
\label{algo: pc-decomposed}
\begin{algorithmic}[1]
\State \textbf{Input}: Complete skeleton $\mathcal{C}$ over variables $V$
\State Sample $\mathsf{O}$ uniformly, sample $\mathsf{L}$ uniformly
\State Define $R_{PC}^{\ell}(\mathcal{C}, e_{i,j}) = \mathbbm{1}[n_{i,j} \in \mathcal{C}] \land \mathbbm{1}[|\adj_{-j}(\mathcal{C}, x_i)| \geq \ell]$
\For{$\ell = 0$ to $|V|-1$}
    \State $\mathcal{C} \leftarrow$ EL($\mathcal{C}$, $\mathsf{O}$, [$\ell$, $\ell$], $\mathsf{L}$, $R_{PC}^{\ell}$, EP)
    \State \textbf{if} no edges satisfy $R_{PC}^{\ell}$ \textbf{then break}
\EndFor
\State \Return $\mathcal{C}$
\end{algorithmic}
\end{algorithm}
\addtocounter{algorithm}{-1}
}

\textbf{rPC-approx Algorithm.} The rPC-approx algorithm \citep{sondhi2019reduced} bounds conditioning set sizes to $\eta < |V|-1$ and modifies the conditioning set search space to use local neighborhoods.

{
\renewcommand{\thealgorithm}{D}
\begin{algorithm}[h!]
\caption{rPC-approx (Skeleton Discovery)}
\label{algo: rpc-decomposed}
\begin{algorithmic}[1]
\State \textbf{Input}: Complete skeleton $\mathcal{C}$ over variables $V$, maximum size $\eta$
\State Sample $\mathsf{O}$ uniformly, sample $\mathsf{L}$ uniformly
\State Define $R_{\text{rPC}}(\mathcal{C}, e_{i,j}) = \mathbbm{1}[n_{i,j} \in \mathcal{C}]$
\For{$\ell = 0$ to $\eta$}
    \State $\mathcal{C} \leftarrow$ EL($\mathcal{C}$, $\mathsf{O}$, [$\ell$, $\ell$], $\mathsf{L}$, $R_{\text{rPC}}$, EP)
\EndFor
\State \Return $\mathcal{C}$
\end{algorithmic}
\end{algorithm}
\addtocounter{algorithm}{-1}
}

\textbf{Key differences in decomposition:}
\begin{itemize}
    \item \emph{Conditioning set size bound}: PC iterates up to $|V|-1$; rPC-approx stops at $\eta$.
    \item \emph{Validity rule}: PC's $R_{PC}^{\ell}$ requires $|\adj_{-j}(\mathcal{C}, x_i)| \geq \ell$ (sufficient adjacency); rPC-approx's $R_{\text{rPC}}$ only checks edge presence.
    \item \emph{Conditioning set source}: In EP, PC draws subsets from $\adj_{-j}(\mathcal{C}, x_i)$; rPC-approx draws from $\mathrm{adj}(\mathcal{C}, x_i) \cup \mathrm{adj}(\mathcal{C}, x_j) \setminus \{x_i, x_j\}$ (union of both endpoints' neighborhoods).
\end{itemize}

Both algorithms follow the same template—iteratively calling EL with increasing $\ell$—differing only in their validity rules, conditioning set size bounds, and the search space for conditioning sets within EP.

\subsection{Complexities of Error Propagation in Edge Loop}\label{appendix: edge loop error propogation}

We explain why error-tolerant metrics—such as expected number of correct edges $\mathbb{E}[\sum_{n_{i,j}} Y_{n_{i,j}}]$—are analytically intractable for ordering optimization, motivating our focus on perfect recovery probability $\Phi$ in Section~\ref{subsec:metric}.

\textbf{Notation.} Recall that true edges $n_{i,j} \in \mathcal{S}^*$ are edges present in the true skeleton, while false edges $n_{i,j} \notin \mathcal{S}^*$ are edges absent from the true skeleton but present in the current candidate skeleton $\mathcal{C}$ being tested.

\textbf{Requirements for analyzing error-tolerant metrics.} To optimize orderings under error-tolerant metrics, we must:
\begin{enumerate}
    \item[(a)] \emph{Identify qualitatively different error types}: Constraint-based algorithms make two types of errors—false positives (incorrectly retaining false edges) and false negatives (incorrectly removing true edges)
    \item[(b)] \emph{Determine the magnitude of each error type's impact}: Quantify how each error affects the probability of correctly classifying subsequent edges
    \item[(c)] \emph{Characterize how sequence changes affect errors}: Predict which sequence modifications reduce overall error rates
\end{enumerate}

\textbf{Opposing downstream effects (Challenge for (a)).} False positive and false negative errors have conflicting impacts on future tests. Consider an edge ordering where edge $n_k$ is incorrectly decided at position $k$. For any subsequent edge $n_j$ where $j > k$:

\emph{False Positive} (incorrectly retaining false edge $n_k \notin \mathcal{S}^*$):
\begin{itemize}
    \item The retained false edge artificially inflates $|\mathrm{adj}(\mathcal{C}, x_m)|$ for vertices $x_m$ adjacent to $n_k$
    \item By Lemma~\ref{lemma: edge-asymmetry}, larger adjacency sets increase the number of CI tests, which:
    \begin{itemize}
        \item \emph{Helps} remove subsequent false edges (more tests $\rightarrow$ higher chance of finding independence)
        \item \emph{Hurts} retention of subsequent true edges (more tests $\rightarrow$ higher chance of spurious independence due to finite-sample errors)
    \end{itemize}
\end{itemize}

\emph{False Negative} (incorrectly removing true edge $n_k \in \mathcal{S}^*$):
\begin{itemize}
    \item The removed true edge artificially deflates $|\mathrm{adj}(\mathcal{C}, x_m)|$ for vertices $x_m$ adjacent to $n_k$
    \item By Lemma~\ref{lemma: edge-asymmetry}, smaller adjacency sets decrease the number of CI tests, which:
    \begin{itemize}
        \item \emph{Helps} retention of subsequent true edges (fewer tests $\rightarrow$ lower chance of spurious independence)
        \item \emph{Hurts} removal of subsequent false edges (fewer tests $\rightarrow$ lower chance of finding true independence; may also exclude crucial d-separating variables from adjacency sets)
    \end{itemize}
\end{itemize}

Since the two error types have opposing effects—false positives benefit false edge detection but harm true edge detection, while false negatives do the reverse— its not clear whether a single ordering strategy dominates without knowing how the effect of the true graph structure, i.e. whether there are many true edges or false edges.

\textbf{Unknown Error Magnitudes (Challenge for (b)).} The magnitude of error propagation depends critically on graph topology. Consider testing edges $n_{i,j}$ and $n_{g,h}$:
\begin{itemize}
    \item If $n_{i,j}$ and $n_{g,h}$ share vertices, errors on $n_{i,j}$ directly modify the adjacency sets used when testing $n_{g,h}$
    \item The severity depends on whether the removed/retained edge contains d-separating variables for $n_{g,h}$
    \item The true/false edge ratio in the graph determines whether false positive or false negative errors dominate overall performance
    \item The error rate values ($\alpha,\beta$ play a role in the likliehood of different types of errors; if the $\beta$ is high or low, this might affect whether we optimize the sequence towards preventing false positives or false negatives.
\end{itemize}

\textbf{Analytical Intractability (Challenge for (c)).} The challenges for (a) and (b) make it difficult to analyze theoretically whether a one sequence dominates another, as (a) presents a challenge to the notion that, even with perfect information of the ground truth, its not clear how to determine the effect of swapping edges in a sequence, and (b) shows that there is important information such as graph structure and error rates missing that may be crucial to the analysis.

\textbf{Conclusion.} This graph-dependent complexity motivates our restriction to perfect recovery probability $\Phi = \mathbb{P}[\prod Y_{n_{i,j}} = 1]$ in Section~\ref{subsec:metric}. By conditioning on no prior errors (perfect recovery up to position $i-1$), we eliminate error propagation from the analysis, enabling the clean characterization in Lemma~\ref{lemma: false-before-true}: placing false edges before true edges is universally beneficial regardless of graph structure.
\newpage
\section{Lemmas, Theorems, and Proofs}\label{appendix: proofs}

\subsection{Proof of Lemma \ref{lemma: EL perfect}}\label{proof: EL perfect}
\begin{restatable}[]{lemma}{ELperfect}\label{lemma: EL perfect}
Under oracle CITs, when $\mathcal{C}$ is the complete graph over $V$, EL($\mathcal{C}$, $\mathsf{O}$, $[0, |V|-1]$, $\mathsf{L}$, $R$, EP) returns the same skeleton for any edge ordering $\mathsf{O}$ (and any $\mathsf{L}$ by Lemma~\ref{lemma: EP perfect}).
\end{restatable}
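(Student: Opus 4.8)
The plan is to prove the stronger statement that EL returns exactly the true skeleton $\calS^*$ under oracle CITs, independent of $\mathsf{O}$; order-independence is then immediate, since a single fixed output cannot depend on the ordering. The dependence on the subset ordering $\mathsf{L}$ is dispatched by Lemma~\ref{lemma: EP perfect}, which guarantees each EP call removes an edge iff \emph{some} size-$k$ conditioning subset certifies independence, so only the \emph{set} of tests reachable at each stage matters, not their internal order. It then suffices to establish two ordering-independent facts: (i) every true edge $n_{i,j}\in\calS^*$ survives to the end, and (ii) every false edge $n_{i,j}\notin\calS^*$ is eventually removed.

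For (i), if $n_{i,j}\in\calS^*$ then $x_i,x_j$ are adjacent in $\mathcal{G}^*$ and hence cannot be d-separated by any set (Def.~\ref{def:d_separation}); under faithfulness and oracle CITs (Def.~\ref{def:faithfulness}), no test $\cit(x_i,x_j\mid W)$ ever returns independence, so EP never deletes $n_{i,j}$ irrespective of when it is examined. This claim also establishes the key invariant for (ii): since no true edge is ever removed, at every point during execution the true neighborhood of each vertex is contained in its current candidate neighborhood, i.e. $\mathrm{adj}_{\calS^*}(x_i)\subseteq\mathrm{adj}(\calC,x_i)$.

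For (ii), I would invoke the standard DAG fact that any non-adjacent pair $x_i,x_j$ admits a d-separating set $W$ contained in the true neighbors of one endpoint, with $|W|\le|V|-1$. By the invariant from (i), $W$ remains a subset of that endpoint's candidate adjacency throughout, and in particular $x_j\notin W$, so $W\subseteq\adj_{-j}(\calC,x_i)$ at all times. When EL's outer loop reaches conditioning size $k=|W|$ (which lies in $[0,|V|-1]$), either the edge has already been deleted, or it is still present---in which case the validity rule is satisfied (the adjacency set has size at least $|W|$) and EP enumerates all size-$k$ subsets of the adjacency, including $W$; the oracle CIT on $W$ returns independence and the edge is removed. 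Either way $n_{i,j}$ is gone by the end, regardless of $\mathsf{O}$.

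The main obstacle is controlling the fact that adjacency sets shrink dynamically as false edges are pruned, which could in principle eliminate a needed d-separating set before it is tested. The invariant from step (i) is exactly what rules this out: removals only ever discard false edges, so the true neighbors---and therefore the particular d-separating set $W$---persist until the relevant conditioning-set size is reached. A secondary point to handle carefully is the orientation asymmetry of EP, which draws conditioning sets only from $\adj_{-j}(\calC,x_i)$; since the guaranteed $W$ may live in the neighborhood of $x_j$ rather than $x_i$, the argument relies on EL examining both orientations of each undirected edge so that a conditioning set within either endpoint's neighborhood is eventually tried.
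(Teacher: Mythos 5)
Your proof takes essentially the same route as the paper's: true edges are never removed because adjacent pairs admit no d-separating set under oracle CITs, and each false edge is removed because a parent-based d-separating set of one endpoint consists of true neighbors that persist throughout execution (your invariant) and is enumerated once the conditioning-set size reaches its cardinality. If anything, your explicit treatment of the orientation asymmetry—that the guaranteed separator may lie in $\adj_{-i}(\calC, x_j)$ rather than $\adj_{-j}(\calC, x_i)$, so EL must effectively examine both directed versions of each undirected edge—is more careful than the paper's proof, which asserts $S \subseteq \adj_{-j}(\calC, x_i)$ even in the case $S = \mathrm{Pa}(x_j)$.
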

\begin{proof}
We prove that when given oracle CITs, if  $\mathcal{C}$ is complete then the set of edges removed by EL is independent of $\mathsf{O}$. We show that each edge's fate is determined solely by the existence of d-separating sets, not by the order of processing.

\textbf{All true edges are retained:} Consider any edge $n_{i,j} \in \mathcal{C}$ that is a true edge in the skeleton of $\mathcal{G}$. By faithfulness, $x_i \not\!\perp\!\!\!\perp x_j \mid S$ for all $S \subseteq V \setminus \{x_i, x_j\}$. When EL processes this edge (at any position in $\mathsf{O}$), all CI tests return dependent under perfect tests, so $n_{i,j}$ is never removed. Thus, all true edges remain in $\mathcal{C}$ throughout execution regardless of $\mathsf{O}$.

\textbf{All false edges are removed:} Consider any edge $n_{i,j} \in \mathcal{C}$ that is not in the true skeleton. By faithfulness and causal sufficiency, there exists $S \subseteq V \setminus \{x_i, x_j\}$ such that $x_i \perp\!\!\!\perp x_j \mid S$. By the Markov Condition (Definition \ref{def:markov}), one such set must exist, for example at least one of $\text{Pa}(x_i)$ or $\text{Pa}(x_j)$. Let $S$ be such a d-separating set.

Since $\mathcal{C}$ is initially complete and all parent edges are true edges in the skeleton, these parent edges remain in $\mathcal{C}$ throughout execution (as all true edges are retained). Therefore, regardless of when EL processes edge $n_{i,j}$ in the ordering $\mathsf{O}$, we have $S \subseteq \adj_{-j}(\mathcal{C}, x_i)$ when this edge is tested. EP will test with conditioning set $S$, the test returns independent under perfect CI tests, and $n_{i,j}$ is removed.

Since both the retention of true edges and removal of false edges are independent of $\mathsf{O}$ when $\mathcal{C}$ is complete, EL returns the same skeleton for any edge ordering.
\end{proof}

\subsection{Proof of Lemma \ref{lemma: EP perfect}}\label{proof: EP perfect}
\begin{restatable}[]{lemma}{EPperfect}\label{lemma: EP perfect}
Under oracle CITs, for any edge $e_{i,j}$ and conditioning set size $k$, EP($\mathcal{C}$, $e_{i,j}$, $k$, $\mathsf{L}$) returns the same result for any subset ordering $\mathsf{L}$.
\end{restatable}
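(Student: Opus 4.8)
The plan is to observe that within a single invocation of EP the pool of candidate conditioning sets is fixed, and that under oracle CITs each test outcome is deterministic; the returned skeleton therefore depends only on \emph{whether} a d-separating subset of the prescribed size exists, not on the order $\mathsf{L}$ in which subsets are examined.

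First I would note that EP computes $A \leftarrow \adj_{-j}(\calC, x_i)$ once, at the start of the call, and that $\calC$ is modified only at the single line removing $n_{i,j}$, which is immediately followed by a return. Consequently $A$, and hence the collection $[A]_k$ of size-$k$ subsets the loop ranges over, is invariant throughout the call and identical for every choice of $\mathsf{L}$. Second, under oracle CITs the outcome of $\cit(x_i, x_j \mid W)$ is a deterministic function of $W$: it returns independent precisely when $x_i \indep x_j \mid W$, i.e.\ when $W$ d-separates $x_i$ and $x_j$ in $\mathcal{G}^*$. I would then define the fixed set $I = \{W \in [A]_k : \cit(x_i, x_j \mid W)\text{ returns independent}\}$, which by the two preceding observations does not depend on $\mathsf{L}$.

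The core step is to characterize EP's output purely through $I$. The loop deletes $n_{i,j}$ and returns exactly when it first encounters some $W \in I$, and such an encounter happens at some position of the iteration if and only if $I \neq \emptyset$. Hence EP returns $\calC$ with $n_{i,j}$ removed whenever $I \neq \emptyset$, and returns $\calC$ unchanged whenever $I = \emptyset$. In either case the returned skeleton is fixed by the single condition ``$I$ empty or not,'' which is independent of $\mathsf{L}$; therefore any two subset orderings produce identical results.

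I anticipate no substantive obstacle here: the only points demanding care are (i) confirming that the adjacency set $A$ is genuinely frozen for the duration of the call (guaranteed by the immediate return after removal, so no intermediate edge deletions can alter $A$), and (ii) that oracle tests eliminate all randomness from individual outcomes. Together these reduce the claim to the order-independence of an existential quantifier over a fixed finite collection, which is immediate. This lemma also discharges the ``(and any $\mathsf{L}$)'' clause invoked in Lemma~\ref{lemma: EL perfect}.
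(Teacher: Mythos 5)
Your proof is correct and follows essentially the same route as the paper's: both reduce EP's output to the existence (or not) of a d-separating set among the fixed collection $[A]_k$, which under oracle CITs is determined independently of the ordering $\mathsf{L}$. Your explicit observation that $A$ is frozen for the duration of the call (since removal is immediately followed by a return) is a small formalization the paper leaves implicit, but it does not change the argument.
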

\begin{proof}
EP tests all subsets $s \subseteq [\adj_{-j}(\mathcal{C}, x_i)]_k$ and removes edge $n_{i,j}$ if any CIT($x_i, x_j \mid s$) returns independent. With perfect CITs, the outcome of each test is deterministic and depends only on the conditioning set $s$, not on when it is tested. 

If there exists $s \subseteq [\adj_{-j}(\mathcal{C}, x_i)]_k$ such that $x_i \perp\!\!\!\perp x_j \mid s$, then EP will remove $n_{i,j}$ when it tests this conditioning set, regardless of which ordering $\mathsf{L}$ is used to sequence the tests. 

If no such $s$ exists, then all tests return dependent and $n_{i,j}$ remains, again regardless of $\mathsf{L}$.

Since the decision to remove or retain the edge depends only on the existence of a d-separating set among the tested conditioning sets, not the order in which they are tested, EP returns the same result for any $\mathsf{L}$ under perfect CI tests.
\end{proof}

\subsection{Proof Lemma \ref{lemma: edge-asymmetry}}\label{proof: edge-asymmetry}
\begin{restatable}[]{lemma}{edgeasymmetry}\label{lemma: edge-asymmetry}
    For a true edge $n_{i,j}$ (where $n_{i,j} \in \mathcal{S}^*$), adding vertices to the adjacency set $\mathrm{adj}(\mathcal{C}, x_i)$ strictly decreases $\mathbb{P}(Y_{n_{i,j}}=1)$, while for a false edge $n_{i,j}$ (where $n_{i,j} \notin \mathcal{S}^*$) adding vertices strictly increases $\mathbb{P}(Y_{n_{i,j}}=1)$.
\end{restatable}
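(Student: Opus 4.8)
The plan is to reduce the claim to a purely combinatorial monotonicity statement about the \emph{collection} of conditional independence tests performed on the edge, and then handle the true-edge and false-edge cases separately using faithfulness. First I would invoke Lemma~\ref{lemma: orderingindaccuracy}: since the fate of edge $n_{i,j}$ depends only on \emph{whether some} examined conditioning set returns independent, and not on the order of testing, the event $\{Y_{n_{i,j}}=1\}$ is a deterministic function of the collection $[\adj_{-j}(\calC, x_i)]_k$ of conditioning sets examined. The key structural observation is that enlarging $\adj(\calC, x_i)$ can only enlarge this collection: if $A \subseteq A'$ then $[A]_k \subseteq [A']_k$, so passing from $A$ to $A'$ adds new testable conditioning sets (whenever $|A| \ge k-1$ and $k\ge 1$) without removing any.

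For a \textbf{true edge} $n_{i,j}\in\calS^*$, faithfulness (Def.~\ref{def:faithfulness}) implies no conditioning set d-separates $x_i$ and $x_j$, so every test on this edge is a test of a dependent pair, and the edge is retained ($Y_{n_{i,j}}=1$) exactly when all examined tests return ``dependent.'' Because the collection of tests only grows when vertices are added, the event ``all tests return dependent'' over the larger collection is contained in the same event over the smaller one, giving $\mathbb{P}(Y_{n_{i,j}}=1\mid A')\le \mathbb{P}(Y_{n_{i,j}}=1\mid A)$ by monotonicity of probability alone. For a \textbf{false edge} $n_{i,j}\notin\calS^*$, the Markov condition and causal sufficiency (Definitions~\ref{def:markov} and~\ref{def:sufficiency}) guarantee a d-separating set exists, and correct removal ($Y_{n_{i,j}}=1$) occurs exactly when \emph{at least one} examined test returns ``independent''; this is a monotone-increasing event in the collection, so enlarging the adjacency set weakly increases $\mathbb{P}(Y_{n_{i,j}}=1)$. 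Both weak directions thus follow from pure event inclusion, requiring no distributional assumptions on the tests.

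The main obstacle is upgrading these to the \emph{strict} inequalities claimed, since strictness fails only if the newly created conditioning sets never flip the outcome, e.g., if their verdicts were perfectly determined by the already-examined tests. To rule this out I would invoke the sufficient-specificity part of Def.~\ref{assumption:cit_power_independence}, which guarantees each test has false-negative rate $\beta\in(0,1)$ and Type-I rate $\alpha\in(0,1)$, so every newly added test has strictly positive, strictly sub-unit probability of returning either verdict. Under the mutual-independence part of the same definition this is cleanest: I would write $\mathbb{P}(Y_{n_{i,j}}=1)=(1-\beta)^{|[A]_k|}$ in the true-edge case and $\mathbb{P}(Y_{n_{i,j}}=1)=1-\prod_{S\in[A]_k}p_S$, with each factor $p_S\in\{\alpha,\,1-\beta\}\subset(0,1)$, in the false-edge case, so adding even one new conditioning set multiplies the product by a factor strictly in $(0,1)$ and moves the probability strictly in the asserted direction. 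If one prefers to avoid the independence assumption (as the paper does for its main accuracy lemma), strictness can instead be secured from a conditional-positivity argument: conditioned on any fixed outcome of the old tests, each newly added test still has positive probability of flipping the retention/removal decision, which suffices to strictly separate the two probabilities.
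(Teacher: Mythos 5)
Your proposal is correct and follows essentially the same route as the paper's proof: both cases reduce to the observation that enlarging $\mathrm{adj}(\mathcal{C}, x_i)$ enlarges the family $[A]_k$ of conditioning sets tested, so that retention of a true edge (``all tests return dependent'') is a shrinking event and removal of a false edge (``at least one test returns independent'') is a growing event, handled separately via faithfulness. The one place you diverge is the strictness step: the paper simply asserts strict inequality from each test having positive probability of returning independent, which (as you correctly flag) is insufficient without ruling out that the newly added tests' verdicts are determined by the already-examined ones; your product-formula argument under Definition~\ref{assumption:cit_power_independence} repairs this, though it sits in mild tension with the paper's explicit remark that this definition is \emph{not} invoked in the accuracy analysis. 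Your fallback conditional-positivity argument avoids that tension and is essentially the assumption the paper's own strictness claim implicitly requires, so on this point your write-up is, if anything, more careful than the paper's.
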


\begin{proof}
We say an edge $n_{i,j}$ is a \emph{true edge} if $n_{i,j} \in \mathcal{S}^*$ and a \emph{false edge} if $n_{i,j} \notin \mathcal{S}^*$.

Note in the $k^{th}$ step of Subroutine \ref{algo: subset-ordering}, a true edge $n_{i,j}$ is kept if no CIT conditioning on a subset of size $k$ from adjacency set $\mathrm{adj}_{-j}(\mathcal{C}, x_i)$ returns independent. Given that $n_{i,j}$ is a true edge, all CITs of size $k$ return independent with the same false negative rate $\beta$. Then, increasing the size of $\mathrm{adj}(\mathcal{C}, x_i)$ can only increase the number of CITs run, which can only increase the probability that $n_{i,j}$ is incorrectly removed, which therefore decreases $\mathbb{P}(Y_{n_{i,j}}=1)$.

For a false edge $n_{i,j}$, correct specification (i.e., $Y_{n_{i,j}}=1$) occurs when the edge is removed, which happens if and only if at least one CIT in the $k^{th}$ step returns independent. Let $A_1 \subset A_2$ where $A_1 = \mathrm{adj}_{-j}(\mathcal{C}_1, x_i)$ and $A_2 = \mathrm{adj}_{-j}(\mathcal{C}_2, x_i)$ denote two adjacency sets differing by the inclusion of additional vertices in $A_2$. Denote by $\mathcal{T}_1 = [A_1]_k$ and $\mathcal{T}_2 = [A_2]_k$ the corresponding sets of size-$k$ conditioning sets. Since $A_1 \subset A_2$, we have $\mathcal{T}_1 \subset \mathcal{T}_2$. The probability of correct removal is:
$$\mathbb{P}(Y_{n_{i,j}}=1 \mid A) = \mathbb{P}\left(\bigcup_{W \in [A]_k} \{\text{CIT}(x_i, x_j \mid W) = \text{independent}\}\right)$$
For any fixed false edge $n_{i,j} \notin \mathcal{S}^*$, each CIT has a positive probability of returning independent. Since $\mathcal{T}_1 \subset \mathcal{T}_2$, the union over $\mathcal{T}_2$ includes all events in the union over $\mathcal{T}_1$ plus additional events corresponding to tests on conditioning sets in $\mathcal{T}_2 \setminus \mathcal{T}_1$. Therefore:
$$\mathbb{P}(Y_{n_{i,j}}=1 \mid A_1) < \mathbb{P}(Y_{n_{i,j}}=1 \mid A_2)$$.
Thus, increasing $|\mathrm{adj}(\mathcal{C}, x_i)|$ strictly increases $\mathbb{P}(Y_{n_{i,j}}=1)$ for false edges.
\end{proof}

\subsection{Proof of Lemma \ref{lemma: edge-ordering-asymmetry}}\label{proof: edge-ordering-asymmetry}
\begin{restatable}[]{lemma}{EdgeOrderingAsymmetry}\label{lemma: edge-ordering-asymmetry}
    For an ordering $\mathsf{O}$ containing true edge $n_{g,h}$ (where $n_{g,h} \in \mathcal{S}^*$) and false edge $n_{i,j}$ (where $n_{i,j} \notin \mathcal{S}^*$), we have that $\mathbb{P}(Y_{n_{i,j}}=1|Y_{n_{g,h}}=1) = \mathbb{P}(Y_{n_{i,j}}=1)$ while $\mathbb{P}(Y_{n_{g,h}}=1|Y_{n_{i,j}}=1) \geq \mathbb{P}(Y_{n_{g,h}}=1)$.    
    Further, the inequality is strict when $n_{i,j}$ and $n_{g,h}$ share a vertex.
\end{restatable}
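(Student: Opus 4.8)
The plan is to prove the two claims separately, in each case reducing a conditional probability to a statement about how processing one edge reshapes the adjacency set seen by the other, and then invoking Lemma~\ref{lemma: edge-asymmetry}. Throughout I would exploit the \emph{temporal} structure of EL: in the factorization of $\Phi$ along $\mathsf{O}$, conditioning on $Y_{n_{g,h}}=1$ corresponds to processing the true edge $n_{g,h}$ \emph{before} the false edge $n_{i,j}$, whereas conditioning on $Y_{n_{i,j}}=1$ corresponds to processing $n_{i,j}$ first. The right-hand sides $\mathbb{P}(Y_{n_{i,j}}=1)$ and $\mathbb{P}(Y_{n_{g,h}}=1)$ are read as \emph{base} probabilities, i.e. the correctness probabilities when the edge is tested against the common prefix skeleton before either of the two edges is touched.

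For the equality (first claim), I would argue that correctly \emph{retaining} a true edge is a no-op on the skeleton. Conditioning on $Y_{n_{g,h}}=1$ means $n_{g,h}$ was not removed, so after its processing the current skeleton $\mathcal{C}$—and in particular every adjacency set—coincides with the base skeleton. Hence when $n_{i,j}$ is processed next it is tested with exactly its base adjacency set $\mathrm{adj}_{-j}(\mathcal{C},x_i)$, so its family of candidate CITs is unchanged. The remaining point is that conditioning on the event $\{Y_{n_{g,h}}=1\}$, which is measurable with respect to the tests on the pair $(x_g,x_h)$, does not perturb the outcomes of the tests on the distinct pair $(x_i,x_j)$ that determine $Y_{n_{i,j}}$; this yields $\mathbb{P}(Y_{n_{i,j}}=1\mid Y_{n_{g,h}}=1)=\mathbb{P}(Y_{n_{i,j}}=1)$.

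For the inequality (second claim), I would use that correctly \emph{removing} a false edge \emph{does} alter the skeleton. Conditioning on $Y_{n_{i,j}}=1$ deletes $n_{i,j}$ from $\mathcal{C}$. If $n_{i,j}$ and $n_{g,h}$ are vertex-disjoint, this deletion leaves the adjacency sets relevant to $n_{g,h}$ untouched, so $n_{g,h}$ faces its base adjacency set and equality holds. If instead they share a vertex, the deletion strictly shrinks the adjacency set used to test $n_{g,h}$, which loses the other endpoint of $n_{i,j}$ as a candidate conditioning variable. Since $n_{g,h}$ is a true edge, Lemma~\ref{lemma: edge-asymmetry} shows that a smaller adjacency set strictly increases $\mathbb{P}(Y_{n_{g,h}}=1)$, because there are strictly fewer size-$k$ conditioning sets on which a finite-sample false negative could trigger an erroneous removal. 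Comparing the shrunk-adjacency retention probability to the base one gives $\mathbb{P}(Y_{n_{g,h}}=1\mid Y_{n_{i,j}}=1)\ge \mathbb{P}(Y_{n_{g,h}}=1)$, strict exactly in the shared-vertex case.

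The step I expect to be the crux is the independence bookkeeping under conditioning: isolating the \emph{structural} effect (the change in adjacency sets) from any statistical coupling between the two edges' tests. The clean way is to note that $Y_{n_{i,j}}$ and $Y_{n_{g,h}}$ depend on tests of two distinct variable pairs and interact \emph{only} through the shared skeleton state, so that—conditional on the adjacency set each edge sees when it is processed—their outcomes are governed by independent test randomness (exactly under data-splitting or in the infinite-sample limit, and as the standard modeling approximation otherwise). This is precisely what lets me replace ``condition on $Y$'' with ``fix the induced adjacency set and apply Lemma~\ref{lemma: edge-asymmetry},'' and it is also where I must check that, in the shared-vertex case, the vanished endpoint genuinely participates in at least one size-$k$ conditioning set, so that the resulting inequality is strict rather than merely weak.
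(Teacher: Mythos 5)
Your proposal is correct and follows essentially the same route as the paper's proof: correctly retaining the true edge is a no-op on the skeleton (so the tests for $n_{i,j}$ and hence $\mathbb{P}(Y_{n_{i,j}}=1)$ are unchanged), while correctly removing the false edge shrinks the shared-vertex adjacency set, and Lemma~\ref{lemma: edge-asymmetry} then gives the (strict) gain for the true edge. The only difference is presentational: you make explicit the independence bookkeeping between tests on the two distinct variable pairs (via data-splitting/modeling approximation) and the strictness check that the removed endpoint actually enters some size-$k$ conditioning set, both of which the paper's proof invokes implicitly.
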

\begin{proof}
    We say an edge is a \emph{true edge} if it is in $\mathcal{S}^*$ and a \emph{false edge} if it is not in $\mathcal{S}^*$.
    
    We define $\mathbb{P}(Y_{n_{i,j}}=1|Y_{n_{g,h}}=1)$ as the probability of removing a false edge $n_{i,j}$ after correctly retaining a true edge $n_{g,h}$, and $\mathbb{P}(Y_{n_{i,j}}=1)$ as the probability of removing the false edge $n_{i,j}$ without running a test on $n_{g,h}$.

    If $n_{g,h}$ does not share a vertex with $n_{i,j}$ in $\mathcal{C}$, then
    the adjacency sets of $n_{g,h}$ and $n_{i,j}$ do not overlap. This implies that the event of correctly removing a false edge $n_{i,j}$ is independent of the event of correctly retaining a true edge $n_{g,h}$, as these two events depend only on the data on the node itself and its neighbors. However, if at least one vertex in $n_{g,h}$ is already in the adjacency set of one of the vertices $n_{i,j}$ in $\mathcal{C}$, then correctly retaining $n_{g,h}$ does not change the adjacency set of either vertex in $n_{i,j}$, since retaining an edge means keeping it in the skeleton without modification, which preserves all existing adjacency relationships and therefore does not alter which vertices are available for conditioning. This means which tests are run for $n_{i,j}$ don't change after correctly testing $n_{g,h}$, which means the probability of removing $n_{i,j}$ doesn't change. 
    
    We now define $\mathbb{P}(Y_{n_{g,h}}=1|Y_{n_{i,j}}=1)$ as the probability of retaining true edge $n_{g,h}$ after correctly removing false edge $n_{i,j}$, while $\mathbb{P}(Y_{n_{g,h}}=1)$ as the probability of correctly retaining true edge $n_{g,h}$ without testing $n_{i,j}$. Note that if $n_{i,j}$ does not share any vertex with $n_{g,h}$ in $\mathcal{C}$, then correctly removing $n_{i,j}$ does not affect the probability of retaining $n_{g,h}$. However, if they do share at least one vertex, then correctly removing $n_{i,j}$ reduces the adjacency set of at least one vertex of $n_{g,h}$. Then by Lemma \ref{lemma: edge-asymmetry} the probability of correctly retaining $n_{g,h}$ strictly increases.
\end{proof}

\subsection{Proof of Lemma \ref{lemma: false-before-true}}\label{proof: false-before-true}
\begin{restatable}[]{lemma}{FalseEdgeBeforeTrueEdge}\label{lemma: false-before-true}
    Given a sequence of edges $\mathsf{O}$ that are either true edges (in $\mathcal{S}^*$) or false edges (not in $\mathcal{S}^*$), for any pair of adjacent edges consisting of a true edge followed by a false edge, the sequence generated by swapping the pair is no worse in terms of $\mathbb{P}\left[\bigcap_{n \in \text{sequence}} Y_n = 1\right]$, and strictly better if the false edge and true edge share a node.
\end{restatable}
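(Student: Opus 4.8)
The plan is to exploit the factorization of the perfect recovery probability along the processing order (Eq.~\ref{eq:perfect_recovery}) and reduce the comparison of the two orderings to a single pair of adjacent factors, which is then controlled directly by Lemma~\ref{lemma: edge-ordering-asymmetry}. Write the original sequence as $\mathsf{O} = (\text{prefix}, A, B, \text{suffix})$, where $A = n_{g,h} \in \mathcal{S}^*$ is a true edge immediately followed by the false edge $B = n_{i,j} \notin \mathcal{S}^*$, and let $\mathsf{O}'$ denote the swapped sequence $(\text{prefix}, B, A, \text{suffix})$. The central observation is that, conditioning on $\bigcap_n Y_n = 1$, the skeleton present when any edge is tested is the deterministic ``correct-so-far'' skeleton obtained from the initial $\mathcal{C}$ by deleting exactly those false edges already processed; in particular it depends only on the \emph{set} of previously processed edges, not on the order in which they were handled.

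First I would compare the two products term by term. For every position strictly before the pair, $\mathsf{O}$ and $\mathsf{O}'$ process the same edge against the same correct-so-far skeleton, so those conditional factors are identical. For every position strictly after the pair, the set of edges processed beforehand equals $\{\text{prefix}\} \cup \{A, B\}$ in both orderings, hence the set of removed false edges---and thus the correct-so-far skeleton---is identical; these suffix factors therefore agree as well. This isolates the entire difference between $\Phi(\mathsf{O})$ and $\Phi(\mathsf{O}')$ into the two factors arising from the pair.

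Next I would evaluate the pair's contribution in each ordering. In $\mathsf{O}$, edge $A$ is tested first (with $B$ still present) and then $B$ is tested (with $A$ retained), so by the chain rule together with the equality in Lemma~\ref{lemma: edge-ordering-asymmetry} the joint factor is $\mathbb{P}(Y_A=1)\,\mathbb{P}(Y_B=1\mid Y_A=1) = \mathbb{P}(Y_A=1)\,\mathbb{P}(Y_B=1)$. In $\mathsf{O}'$, edge $B$ is removed first (shrinking the relevant adjacency set when $A,B$ share a vertex) and then $A$ is tested, giving $\mathbb{P}(Y_B=1)\,\mathbb{P}(Y_A=1\mid Y_B=1)$. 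By the inequality in Lemma~\ref{lemma: edge-ordering-asymmetry}, $\mathbb{P}(Y_A=1\mid Y_B=1)\ge\mathbb{P}(Y_A=1)$, with strict inequality exactly when $A$ and $B$ share a vertex (ultimately via Lemma~\ref{lemma: edge-asymmetry}). Since the common factor $\mathbb{P}(Y_B=1)$ is nonnegative, the swapped pair factor is at least the original, strictly so when they share a vertex. Multiplying back the identical prefix and suffix factors yields $\Phi(\mathsf{O}') \ge \Phi(\mathsf{O})$, strict when the false and true edge share a node.

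The main obstacle is the bookkeeping in the second step: I must verify that conditioning on perfect recovery really does make the skeleton at each non-pair position order-independent, so that swapping two adjacent edges perturbs only the two factors between them. This hinges on the fact that a correctly retained true edge leaves the skeleton unchanged while a correctly removed false edge deletes the same edge regardless of \emph{when} it is processed---so the sequence of skeleton states visited outside the pair is invariant under the swap. Once this invariance is established, the remaining inequality is an immediate application of Lemma~\ref{lemma: edge-ordering-asymmetry}, and no assumption on the joint behaviour of the conditional independence tests (e.g.\ mutual independence) is needed beyond what that lemma already encodes.
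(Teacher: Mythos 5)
Your proposal is correct and follows essentially the same route as the paper's proof: both isolate the effect of the swap to the two adjacent factors (using that prefix factors are untouched and that, conditioned on both edges being correctly classified, the resulting skeleton—and hence all suffix factors—is order-invariant), and both then conclude via the equality/inequality pair in Lemma~\ref{lemma: edge-ordering-asymmetry}, with strictness from shared vertices. Your write-up merely makes the factorization of Eq.~\ref{eq:perfect_recovery} and the ``correct-so-far skeleton'' bookkeeping more explicit than the paper does.
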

\begin{proof}
To establish the lemma, it suffices to show that for any adjacent pair where a true edge $n_t$ precedes a false edge $n_f$, swapping their order weakly improves the joint probability $\mathbb{P}\left[\bigcap_{n \in \text{sequence}} Y_n = 1\right]$, with strict improvement when $n_t$ and $n_f$ share a vertex.

We say an edge is a \emph{true edge} if it is in $\mathcal{S}^*$ and a \emph{false edge} if it is not in $\mathcal{S}^*$. Consider a sequence $\mathsf{O} = \ldots, n_i, n_t, n_f, n_j, \ldots$ and the swapped sequence $\mathsf{O}' = \ldots, n_i, n_f, n_t, n_j, \ldots$, where $n_t$ is a true edge and $n_f$ is a false edge.

The key observation is that we need only compare how the swap affects $\mathbb{P}(Y_{n_t} = 1, Y_{n_f} = 1 \mid \text{all prior edges in $\mathsf{O}$ correctly classified})$. This is because: (1) edges processed before $n_t$ and $n_f$ are unaffected by their relative ordering, and (2) conditioned on both $n_t$ and $n_f$ being correctly classified, the resulting skeleton state is identical regardless of their processing order—correctly retaining $n_t$ preserves adjacency sets while correctly removing $n_f$ removes it from adjacency sets, and these operations commute. Therefore, the conditional probabilities for all subsequent edges $n_k$ remain unchanged given correct classification of both $n_t$ and $n_f$.

By Lemma~\ref{lemma: edge-ordering-asymmetry}, we have:
$$\mathbb{P}(Y_{n_f} = 1 | Y_{n_t} = 1, \text{prior edges correct}) = \mathbb{P}(Y_{n_f} = 1 | \text{prior edges correct})$$
since correctly retaining true edge $n_t$ does not modify adjacency sets and therefore does not affect the probability of removing false edge $n_f$. However, also by Lemma~\ref{lemma: edge-ordering-asymmetry}:
$$\mathbb{P}(Y_{n_t} = 1 | Y_{n_f} = 1, \text{prior edges correct}) \geq \mathbb{P}(Y_{n_t} = 1 | \text{prior edges correct})$$
with strict inequality when $n_t$ and $n_f$ share a vertex, since correctly removing $n_f$ reduces the adjacency set of $n_t$, which by Lemma~\ref{lemma: edge-asymmetry} strictly increases the probability of correctly retaining $n_t$.

Therefore, placing $n_f$ before $n_t$ (sequence $\mathsf{O}'$) achieves weakly better joint probability than the original ordering (sequence $\mathsf{O}$), with strict improvement when $n_t$ and $n_f$ share a vertex.
\end{proof}

\subsection{Proof of Lemma \ref{lemma: edge-accuracy-monotonic}}\label{proof: edge-accuracy-monotonic}
\EdgeOrderingAccuracyMonotonicity*
\begin{proof}

We establish that the expected perfect recovery probability $\mathbb{E}[\Phi_{\text{EL-G}}(p_\psi)]$ increases monotonically with expert accuracy $p_\psi$.

\textbf{Setup and notation.} For a fixed partial skeleton $\mathcal{C}$ and true DAG $\mathcal{G}^*$, consider running EL-G with expert accuracy $p_\psi$. The randomness in this process comes from three sources: (1) the expert's prediction $\widehat{\mathcal{G}}$ sampled according to accuracy $p_\psi$, (2) the finite-sample data sampled for use in conditional independence tests, and (3) the random shuffling used when EL-G generates the initial permutation of edges in $\mathcal{C}$ within each partition. Let $\Phi_{p_\psi}$ denote the probability that EL-G produces the true skeleton $\mathcal{S}^*$ after sampling from each of the three sources of randomness. The expectation $\mathbb{E}[\Phi_{p_\psi}]$ is taken over all three sources of randomness.

\textbf{Goal.} To show monotonicity, we must prove that for $p_{\psi_2} > p_{\psi_1}$, we have $\mathbb{E}[\Phi_{p_{\psi_2}}] \geq \mathbb{E}[\Phi_{p_{\psi_1}}]$. It suffices to show that $\Phi_{p_{\psi_2}}$ stochastically dominates $\Phi_{p_{\psi_1}}$.

\textbf{Coupling and stochastic dominance.} We employ a coupling argument. The following classical result provides our main tool (see \citep{lindvall1999strassen} for a more abstract discussion of the result, and see Theorem 4.2.3. in \citep{levin2023coupling} for a more direct formulation):

\begin{theorem}[Strassen's Coupling Theorem]
The real random variable $X$ stochastically dominates $Y$ if and only if there exists a coupling $(\widehat{X}, \widehat{Y})$ of $X$ and $Y$ such that $\mathbb{P}[\widehat{X} \geq \widehat{Y}] = 1$. We refer to $(\widehat{X}, \widehat{Y})$ as a monotone coupling of $X$ and $Y$.
\end{theorem}

A coupling of random variables $X$ and $Y$ is a joint distribution $(\widehat{X}, \widehat{Y})$ where $\widehat{X}$ and $\widehat{Y}$ are two entirely different random variables whose marginal distributions coincide with the distributions of $X$ and $Y$, respectively. More formally, a coupling is a probability measure on the product space whose projections onto each coordinate recover the original distributions. In simpler terms, $(\widehat{X}, \widehat{Y})$ is constructed such that $\widehat{X}$ has the same distribution as $X$, $\widehat{Y}$ has the same distribution as $Y$, but $\widehat{X}$ and $\widehat{Y}$ may be dependent.

\textbf{Example: Bernoulli couplings.} Consider Bernoulli random variables $X$ and $Y$ with $\mathbb{P}[X=1] = q$ and $\mathbb{P}[Y=1] = r$ where $0 \leq q < r \leq 1$.
\begin{itemize}
    \item \emph{Independent coupling}: We can construct $(\widehat{X}, \widehat{Y})$ where $\widehat{X}$ has the same distribution as $X$ and $\widehat{Y}$ has the same distribution as $Y$ and they are independent. This gives joint probabilities $\mathbb{P}[(\widehat{X}, \widehat{Y}) = (i,j)] = \mathbb{P}[\widehat{X}=i]\mathbb{P}[\widehat{Y}=j]$ for $i,j \in \{0,1\}$.
    \item \emph{Monotone coupling}: Alternatively, sample $U$ uniformly from $[0,1]$, and set $\widehat{X} = \mathbb{1}\{U \leq q\}$ and $\widehat{Y} = \mathbb{1}\{U \leq r\}$. Then $(\widehat{X}, \widehat{Y})$ is a coupling with $\mathbb{P}[\widehat{X} \leq \widehat{Y}] = 1$, where $\widehat{X}$ and $\widehat{Y}$ still follow the same Bernoulli distributions as $X$ and $Y$ respectively. This demonstrates that a single source of randomness can induce dependence while preserving marginals.
\end{itemize}

\textbf{Proof strategy.} Our proof proceeds in four steps:
\begin{enumerate}
    \item Describe a hypothetical process for generating a monotone coupling $(\widehat{\Phi}_{p_{\psi_2}}, \widehat{\Phi}_{p_{\psi_1}})$ using shared randomness (analogous to the monotone Bernoulli coupling above). This construction assumes access to the ground truth skeleton $\mathcal{S}^*$ and is purely for theoretical analysis. Importantly, Strassen's theorem only requires showing that such a monotone coupling is \emph{possible} to construct, not that we can construct it in practice with knowledge of only finite samples.
    \item Verify that the marginal distributions coincide with the original distributions: $\widehat{\Phi}_{p_{\psi_2}} \stackrel{d}{=} \Phi_{p_{\psi_2}}$ and $\widehat{\Phi}_{p_{\psi_1}} \stackrel{d}{=} \Phi_{p_{\psi_1}}$.
    \item Show the coupling is monotone: $\mathbb{P}[\widehat{\Phi}_{p_{\psi_2}} \geq \widehat{\Phi}_{p_{\psi_1}}] = 1$.
    \item Conclude from Strassen's theorem that $\mathbb{E}[\Phi_{p_{\psi_2}}] \geq \mathbb{E}[\Phi_{p_{\psi_1}}]$.
\end{enumerate}

\subsubsection{Step 1: Describing the Process for Generating the Montone Coupling $(\widehat{\Phi}_{p_{\psi_2}}, \widehat{\Phi}_{p_{\psi_1}})$}

We construct a coupling between two random variables $\widehat{\Phi}_{p_{\psi_1}}$ and $\widehat{\Phi}_{p_{\psi_2}}$ by describing a hypothetical generative process that uses shared randomness. In our case, both $\widehat{\Phi}_{p_{\psi_1}}$ and $\widehat{\Phi}_{p_{\psi_2}}$ are random variables taking values in $[0,1]$ representing the probability of perfect skeleton recovery, and will be designed to have the same marginal distributions as $\Phi_{p_{\psi_1}}$ and $\Phi_{p_{\psi_2}}$ respectively.

\textbf{Sources of shared randomness.} We fix the expert accuracies $p_{\psi_1}$ and $p_{\psi_2}$, but allow the expert predictions $\widehat{\mathcal{G}}$ to vary. Our coupling uses three sources of shared randomness:
\begin{enumerate}
    \item Let $c$ be the number of edges in $\mathcal{C}$. Suppose there are $k$ true edges in $\mathcal{C}$ (edges in $\mathcal{S}^*$ and $\mathcal{C}$) and $c - k$ false edges (edges in $\mathcal{C}$ but not in $\mathcal{S}^*$). Let $L = [l_1, l_2, \ldots, l_{c}]$ be a random variable corresponding to a uniformly sampled permutation of $k$ ones and $c - k$ zeros. That is, $L$ contains exactly $k$ entries equal to 1 and $c - k$ entries equal to 0, where the ordering is uniformly random among all such permutations.
    \item A collection of independent uniform random variables $\mathcal{R} = \{r_1, r_2, \ldots, r_{c}\}$ where each $r_i \sim \text{Uniform}[0,1]$. These will determine, for each position in $L$ independently, whether the expert correctly classifies the corresponding edge.
    \item Two uniform random permutations: $\pi_T$ over the $k$ true edges in $\mathcal{S}^*$, and $\pi_F$ over the $c - k$ false edges not in $\mathcal{S}^*$. These determine the relative ordering within each partition.
\end{enumerate}

\textbf{Generating the expert prediction at accuracy $p_{\psi_1}$.} We now describe how to sample an expert graph $\widehat{\mathcal{S}}^{(p_{\psi_1})}$ with accuracy $p_{\psi_1}$ using the shared randomness $(L, \mathcal{R}, \pi_T, \pi_F)$. For each position $i$ in $L$, the expert independently classifies the corresponding edge correctly with probability $p_{\psi_1}$:
\begin{itemize}
    \item If $l_i = 1$ (corresponds to a true edge in $\mathcal{S}^*$): the expert correctly predicts this edge exists if $r_i \leq p_{\psi_1}$, otherwise incorrectly predicts it does not exist.
    \item If $l_i = 0$ (corresponds to a false edge not in $\mathcal{S}^*$): the expert correctly predicts this edge does not exist if $r_i \leq p_{\psi_1}$, otherwise incorrectly predicts it exists.
\end{itemize}

\textbf{Constructing the edge ordering $\widehat{\mathsf{O}}^{p_{\psi_1}}$.} Given the expert prediction $\widehat{\mathcal{S}}^{(p_{\psi_1})}$, we construct the edge ordering as specified by EL-G (Subroutine \ref{algo: expert-edge-ordering1}). Initialize two empty lists $B_1 = [], B_2 = []$. For each position $i$ in $L$ (going in the order specified by $L$)
\begin{itemize}
    \item If $l_i = 1$ (true edge) and $r_i \leq p_{\psi_1}$ (correctly classified): add position $i$ to the end of $B_2$.
    \item If $l_i = 1$ (true edge) and $r_i > p_{\psi_1}$ (incorrectly classified): add position $i$ to the end of $B_1$.
    \item If $l_i = 0$ (false edge) and $r_i \leq p_{\psi_1}$ (correctly classified): add position $i$ to the end of $B_1$.
    \item If $l_i = 0$ (false edge) and $r_i > p_{\psi_1}$ (incorrectly classified): add position $i$ to the end of $B_2$.
\end{itemize}
Concatenate the buckets: $L_F^{p_{\psi_1}} = B_1 + B_2$. Within $L_F^{p_{\psi_1}}$, assign relative ordering among true edges using $\pi_T$ and among false edges using $\pi_F$ to obtain the final edge ordering $\widehat{\mathsf{O}}^{p_{\psi_1}}$.

\textbf{Computing $\widehat{\Phi}_{p_{\psi_1}}$.} Given the edge ordering $\widehat{\mathsf{O}}^{p_{\psi_1}}$, let $\widehat{\Phi}_{p_{\psi_1}}$ denote the probability of perfect skeleton recovery by EL (Subroutine \ref{algo: edge-ordering}) if were to randomly draw $n$ finite-samples of the variables $V$ from the DGP $\mathcal{G}^*$.


\textbf{Generating $\widehat{\Phi}_{p_{\psi_2}}$ using the same randomness.} We follow the exact same procedure as above, crucially reusing the same shared randomness $(L, \mathcal{R}, \pi_T, \pi_F)$. The only difference is that we use accuracy $p_{\psi_2}$ instead of $p_{\psi_1}$ when determining expert classifications, and redraw a new batch of finite-sample data. This yields a potentially different expert graph $\widehat{\mathcal{S}}^{(p_{\psi_2})}$, a potentially different edge ordering $\widehat{\mathsf{O}}^{p_{\psi_2}}$, and a potentially different recovery probability $\widehat{\Phi}_{p_{\psi_2}}$.

By this coupling procedure, we generate the joint random variable $(\widehat{\Phi}_{p_{\psi_1}}, \widehat{\Phi}_{p_{\psi_2}})$.

\subsubsection{Step 2: Showing the Marginals of the Two Variables Coincide with Original Distributions}

Our goal is to verify that $\widehat{\Phi}_{p_{\psi_1}} \stackrel{d}{=} \Phi_{p_{\psi_1}}$ and $\widehat{\Phi}_{p_{\psi_2}} \stackrel{d}{=} \Phi_{p_{\psi_2}}$. We focus on showing $\widehat{\Phi}_{p_{\psi_1}} \stackrel{d}{=} \Phi_{p_{\psi_1}}$; the argument for $p_{\psi_2}$ follows identically by symmetry.

\textbf{Reduction to orderings.} Let $\mathsf{O}^{p_{\psi_1}}$ denote the random edge ordering generated by Subroutine \ref{algo: expert-edge-ordering1} using expert accuracy $p_{\psi_1}$, and let $\widehat{\mathsf{O}}^{p_{\psi_1}}$ denote the random edge ordering generated in our coupling procedure (Step 1) using accuracy $p_{\psi_1}$. Given any fixed edge ordering, the probability of perfect skeleton recovery when randomly drawing $n$ finite samples from $\mathcal{G}^*$ is deterministically fixed. Therefore, the distributions of $\Phi_{p_{\psi_1}}$ and $\widehat{\Phi}_{p_{\psi_1}}$ are determined entirely by the distributions of $\mathsf{O}^{p_{\psi_1}}$ and $\widehat{\mathsf{O}}^{p_{\psi_1}}$ respectively. To show $\widehat{\Phi}_{p_{\psi_1}} \stackrel{d}{=} \Phi_{p_{\psi_1}}$, it suffices to show that $\widehat{\mathsf{O}}^{p_{\psi_1}} \stackrel{d}{=} \mathsf{O}^{p_{\psi_1}}$.

\textbf{Decomposition of orderings.} Any edge ordering $\mathsf{O}$ over all $c$ edges can be uniquely decomposed into three components:
\begin{enumerate}
    \item $\pi_T$: the relative ordering (permutation) among the $k$ true edges
    \item $\pi_F$: the relative ordering (permutation) among the $c-k$ false edges  
    \item $\pi_{TF}$: the relative placement of true edges versus false edges (which false/true edges come before which other true/false edges)
\end{enumerate}
Therefore, the distribution over edge orderings is uniquely determined by the joint distribution over $(\pi_T, \pi_F, \pi_{TF})$. We will show that this joint distribution is identical for both $\mathsf{O}^{p_{\psi_1}}$ and $\widehat{\mathsf{O}}^{p_{\psi_1}}$.

\textbf{Analysis of the coupling procedure.} In our coupling construction (Step 1), we explicitly sampled $\pi_T$ and $\pi_F$ uniformly and independently. The relative placement $\pi_{TF}$ is then determined by: for each position $i$ in $L$, whether $l_i$ is correctly classified (with probability $p_{\psi_1}$) determines whether that edge goes to $B_1$ or $B_2$, and then $\pi_{TF}$ corresponds to the partition structure $(B_1, B_2)$. Since each edge's classification is independent and depends only on its true label and $p_{\psi_1}$, we have:
\begin{itemize}
    \item $\widehat{\pi}_T$ is uniform over all permutations of $k$ true edges
    \item $\widehat{\pi}_F$ is uniform over all permutations of $\binom{d}{2}-k$ false edges
    \item $\widehat{\pi}_{TF}$ has distribution determined by $p_{\psi_1}$ (probability of correct classification)
    \item $\widehat{\pi}_T, \widehat{\pi}_F, \widehat{\pi}_{TF}$ are mutually independent
\end{itemize}

\textbf{Analysis of Subroutine \ref{algo: expert-edge-ordering1}.} We now show that $\mathsf{O}^{p_{\psi_1}}$ has the same distributional structure. By construction of Subroutine \ref{algo: expert-edge-ordering1}:
\begin{itemize}
    \item Edges in $\mathcal{C}$ are initially shuffled uniformly at random
    \item Each edge is independently classified as being in $\widehat{\mathcal{S}}$ or not with probability $p_{\psi_1}$ of correct classification
    \item Edges are partitioned into $B_1 = \mathcal{C} \setminus \widehat{\mathcal{S}}$ and $B_2 = \mathcal{C} \cap \widehat{\mathcal{S}}$, preserving their random ordering within each bucket
\end{itemize}

To show $\pi_T$ is uniform: Start with an initial uniform random permutation $\pi_T^{\text{init}}$ of all $k$ true edges. Some subset of size $m \sim \text{Binomial}(k, 1-p_{\psi_1})$ are misclassified and placed in $B_1$, while the remaining $k-m$ are placed in $B_2$. For any fixed $m$ and any fixed choice of which $m$ edges are misclassified, this operation defines a bijection from $\pi_T^{\text{init}}$ to the resulting permutation: given the final permutation and knowing which edges went to which bucket, we can uniquely recover $\pi_T^{\text{init}}$, and vice versa. We note that as bijections preserve uniformity, the marginal distribution of $\pi_T$ (after marginalizing over $m$ and the choice of which edges) remains uniform. By the same argument, $\pi_F$ is uniform.

For independence: The distribution of $\pi_{TF}$ is determined solely by which edges are correctly classified (controlled by $p_{\psi_1}$). For any fixed realization of $\pi_{TF}$ (i.e., fixed partition $(B_1, B_2)$), the above uniformity argument shows that $\pi_T$ and $\pi_F$ remain uniform. Therefore $(\pi_T, \pi_F)$ are independent of $\pi_{TF}$.

\textbf{Conclusion.} Both $\widehat{\mathsf{O}}^{p_{\psi_1}}$ and $\mathsf{O}^{p_{\psi_1}}$ decompose into $(\pi_T, \pi_F, \pi_{TF})$ where each component has identical marginal distributions and the same independence structure. Therefore $\widehat{\mathsf{O}}^{p_{\psi_1}} \stackrel{d}{=} \mathsf{O}^{p_{\psi_1}}$, which implies $\widehat{\Phi}_{p_{\psi_1}} \stackrel{d}{=} \Phi_{p_{\psi_1}}$. By the same argument, $\widehat{\Phi}_{p_{\psi_2}} \stackrel{d}{=} \Phi_{p_{\psi_2}}$.

\subsubsection{Step 3: Showing that the Coupling is Monotone.}

Our goal is to show that $\mathbb{P}[\widehat{\Phi}_{p_{\psi_2}} \geq \widehat{\Phi}_{p_{\psi_1}}] = 1$. We establish this by showing that the edge ordering $\widehat{\mathsf{O}}^{p_{\psi_2}}$ can be obtained from $\widehat{\mathsf{O}}^{p_{\psi_1}}$ through a sequence of beneficial swaps.

\textbf{Structure of the two orderings.} Recall from Step 1 that both $\widehat{\mathsf{O}}^{p_{\psi_1}}$ and $\widehat{\mathsf{O}}^{p_{\psi_2}}$ are generated using the same shared randomness $(L, \mathcal{R}, \pi_T, \pi_F)$. Crucially, the permutations $\pi_T$ (ordering among true edges) and $\pi_F$ (ordering among false edges) are identical in both orderings. The only difference lies in the relative placement $\pi_{TF}$ of true edges versus false edges, which is determined by which edges are correctly classified.

\textbf{More edges correctly classified at higher accuracy.} Since $p_{\psi_1} < p_{\psi_2}$, for each position $i$ in $L$:
\begin{itemize}
    \item If edge $i$ is correctly classified under accuracy $p_{\psi_1}$ (i.e., $r_i \leq p_{\psi_1}$), then it is also correctly classified under accuracy $p_{\psi_2}$ (since $r_i \leq p_{\psi_1} < p_{\psi_2}$)
    \item Additional edges may be correctly classified under $p_{\psi_2}$ that were misclassified under $p_{\psi_1}$ (those with $p_{\psi_1} < r_i \leq p_{\psi_2}$)
\end{itemize}

This means:
\begin{itemize}
    \item True edges that were correctly placed in $B_2$ in $L_F^{p_{\psi_1}}$ remain in $B_2$ in $L_F^{p_{\psi_2}}$
    \item Some true edges that were incorrectly placed in $B_1$ in $L_F^{p_{\psi_1}}$ may move to $B_2$ in $L_F^{p_{\psi_2}}$ (moving rightward)
    \item False edges that were correctly placed in $B_1$ in $L_F^{p_{\psi_1}}$ remain in $B_1$ in $L_F^{p_{\psi_2}}$
    \item Some false edges that were incorrectly placed in $B_2$ in $L_F^{p_{\psi_1}}$ may move to $B_1$ in $L_F^{p_{\psi_2}}$ (moving leftward)
\end{itemize}

\textbf{Characterizing the transformation via inversions.} To formalize how $\widehat{\mathsf{O}}^{p_{\psi_2}}$ relates to $\widehat{\mathsf{O}}^{p_{\psi_1}}$, we introduce a numerical ordering. Assign integers to edges as follows:
\begin{itemize}
    \item Assign $\{1, 2, \ldots, c-k\}$ to the false edges according to their fixed relative order $\pi_F$
    \item Assign $\{c-k+1, \ldots, \binom{d}{2}\}$ to the true edges according to their fixed relative order $\pi_T$
\end{itemize}

Under this assignment, every false edge has a smaller numerical label than every true edge. An \textbf{inversion} is any pair of edges that appears out of numerical order—specifically, a true edge appearing before a false edge in the ordering. Since $\pi_T$ and $\pi_F$ are fixed, edges of the same type (both true or both false) never form inversions relative to each other.

The transformation from $\widehat{\mathsf{O}}^{p_{\psi_1}}$ to $\widehat{\mathsf{O}}^{p_{\psi_2}}$ only moves true edges rightward and false edges leftward. This process cannot create new inversions: if a true edge precedes a false edge in $\widehat{\mathsf{O}}^{p_{\psi_2}}$, that pair must have been in the same order in $\widehat{\mathsf{O}}^{p_{\psi_1}}$. Therefore, the set of inversions in $\widehat{\mathsf{O}}^{p_{\psi_2}}$ is a subset of the inversions in $\widehat{\mathsf{O}}^{p_{\psi_1}}$.

\textbf{Weak Bruhat order and reachability.} The weak Bruhat order on permutations provides a formal framework for this relationship. A standard result (\citep{yessenov2005descents}, Proposition 2.2) states:

\begin{proposition}
For permutations $v$ and $w$, $v \leq w$ in weak Bruhat order if and only if $\mathrm{Inv}(v) \subseteq \mathrm{Inv}(w)$, where $\mathrm{Inv}(\cdot)$ denotes the set of inversions.
\end{proposition}

Equivalently, $v$ can be obtained from $w$ by a sequence of adjacent transpositions $(w_i, w_{i+1})$ where $w_i > w_{i+1}$ in the numerical ordering. In our setting, moving a false edge leftward past an adjacent true edge (moving $w_{i+1}$ left of $w_i$) is precisely such an inversion-reducing swap, since false edges have smaller numerical labels than true edges.

Since $\mathrm{Inv}(\widehat{\mathsf{O}}^{p_{\psi_2}}) \subseteq \mathrm{Inv}(\widehat{\mathsf{O}}^{p_{\psi_1}})$, it follows that $\widehat{\mathsf{O}}^{p_{\psi_2}}$ can be obtained from $\widehat{\mathsf{O}}^{p_{\psi_1}}$ through a sequence of adjacent swaps that move false edges leftward past true edges.

\textbf{Monotonicity via beneficial swaps.} By Lemma \ref{lemma: false-before-true}, each such swap (placing a false edge before a true edge) weakly improves the probability of perfect skeleton recovery, with strict improvement when the swapped edges share a vertex. Since $\widehat{\mathsf{O}}^{p_{\psi_2}}$ is reachable from $\widehat{\mathsf{O}}^{p_{\psi_1}}$ through a sequence of such beneficial swaps, we have $\widehat{\Phi}_{p_{\psi_1}} \leq \widehat{\Phi}_{p_{\psi_2}}$, establishing that $\mathbb{P}[\widehat{\Phi}_{p_{\psi_2}} \geq \widehat{\Phi}_{p_{\psi_1}}] = 1$.

\subsubsection{Step 4: Conclusion.} It follows from Strassen's Coupling Theorem that $\mathbb{E}[\Phi_{p_{\psi_2}}] \geq \mathbb{E}[\Phi_{p_{\psi_1}}]$, with strict inequality when $\mathcal{C}$ contains at least one true edge adjacent to a false edge. We note that when $\mathcal{C}$ contains false edges adjacent to true edges, there are at least one pair of orderings $\widehat{\mathsf{O}}^{p_{\psi_1}}, \widehat{\mathsf{O}}^{p_{\psi_2}}$ that can be created through the process outlined in Step 1 such that a false edge adjacent to a true edge is swapped to be earlier in the sequence. In that case we have 
$\widehat{\Phi}_{p_{\psi_1}} < \widehat{\Phi}_{p_{\psi_2}}$, which implies by the Coupling Theorem that  $\mathbb{E}[\Phi_{p_{\psi_2}}] > \mathbb{E}[\Phi_{p_{\psi_1}}]$.

\end{proof}

\subsection{Proof of Lemma \ref{lemma: orderingindaccuracy}}
\label{proof: orderindaccuracy}
\begin{restatable}[]{lemma}{orderingindaccuracy}\label{lemma: orderingindaccuracy}
Accuracy $\mathbb{P}(Y_{n_{i,j}}=1)$ is constant for all possible orderings $\mathsf{L}_1, \mathsf{L}_2, \ldots$ of $\text{CIT}^k_{\adj_{-j}(\calC, x_i)}$.
\end{restatable}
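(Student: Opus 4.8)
The plan is to prove the statement by a pointwise (per-realization) coupling argument: I will show that, once the underlying randomness is fixed, the removal decision for $n_{i,j}$ is a \emph{symmetric} function of the test outcomes, hence insensitive to the scheduling order $\mathsf{L}$, and then transfer this invariance to the probability $\mathbb{P}(Y_{n_{i,j}}=1)$. First I would fix the inputs of the single EP call in Subroutine~\ref{algo: subset-ordering}: the skeleton $\calC$, the directed edge $e_{i,j}$, and the size $k$. The crucial structural observation is that the adjacency set $A = \adj_{-j}(\calC, x_i)$ is computed once, on line~2, before any test runs, and is never modified during the call; consequently the collection of candidate conditioning sets $[A]_k$ — equivalently the test set $\text{CIT}^k_{\adj_{-j}(\calC, x_i)}$ — is identical for every ordering $\mathsf{L}$.

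Next I would condition on a single realization $\omega$ of all randomness driving the tests (the drawn sample together with any internal test randomness). Invoking the premise that the timing of a test does not affect its outcome, each $\text{CIT}(x_i, x_j \mid W)$ for $W \in [A]_k$ returns a fixed verdict (independent or dependent) determined by $(\omega, W)$ alone — not by its position in $\mathsf{L}$ nor by which tests were executed before it. Reading off Subroutine~\ref{algo: subset-ordering}, EP removes $n_{i,j}$ exactly when it first encounters some $W$ whose test returns independent, and otherwise retains it. Therefore, under $\omega$, the edge is removed if and only if the event $R(\omega) := \{\exists\, W \in [A]_k : \text{CIT}(x_i, x_j \mid W) \text{ returns independent}\}$ holds. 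Since $R(\omega)$ is an existential (an OR) over the fixed test set, it is symmetric in the tests and thus independent of $\mathsf{L}$: the early return only changes which $W$ triggers removal and how many tests are executed, but never whether $R(\omega)$ occurs.

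Finally I would lift this pointwise invariance to the accuracy. Because $Y_{n_{i,j}}$ is a deterministic function of the removal decision — equal to $\mathbbm{1}\{\text{removed}\}$ when $n_{i,j}\notin\calS^*$ and to $\mathbbm{1}\{\text{retained}\}$ when $n_{i,j}\in\calS^*$ — the value of $Y_{n_{i,j}}$ under $\omega$ agrees across all orderings $\mathsf{L}_1, \mathsf{L}_2, \ldots$. Taking the expectation over $\omega$ then gives $\mathbb{P}(Y_{n_{i,j}}=1)=\mathbb{P}(R)$ in the false-edge case and $1-\mathbb{P}(R)$ in the true-edge case, neither of which references $\mathsf{L}$, establishing the claim. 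The step I expect to be the main (indeed the only) subtlety is the interaction between early termination and ordering: one must argue that stopping at the first independent verdict does not bias the removal event, which is precisely the OR-symmetry of $R(\omega)$ established above, and that fixing each test's verdict before scheduling is licensed exactly by the ``timing does not affect outcomes'' premise. I would emphasize that no graph-structure or test-power assumptions (Definition~\ref{assumption:cit_power_independence}) are needed here, since the argument is purely combinatorial over the fixed test collection $\text{CIT}^k_{\adj_{-j}(\calC, x_i)}$.
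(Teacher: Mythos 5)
Your proposal is correct and takes essentially the same approach as the paper: both arguments reduce the removal decision to the order-invariant event that at least one test in the fixed collection $\text{CIT}^k_{\adj_{-j}(\calC, x_i)}$ returns independent, and then note that $Y_{n_{i,j}}$ (and hence its probability) depends only on this event. Your per-realization formalization via $\omega$ and the explicit observation that the adjacency set is computed once before testing simply spell out, in more rigorous detail, exactly the reasoning the paper's proof gives.
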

\begin{proof}
    Note that in EP (Subroutine \ref{algo: subset-ordering}), edge $n_{i,j}$ is removed when a subset $s$ is found to render $x_i,x_j$ independent, i.e. $x_i\ind x_j |s$. This implies that for $n_{i,j}$ to be removed at least one CI test must return independent, and if no CI tests return independent $n_{i,j}$ will not be removed. Whether at least one CI test returns independent depends only on which CI tests are run (i.e., what the adjacent set $\adj_{-j}(\mathcal{C}, x_i)$ is), implying that the order $\mathsf{L}$ in which the CI tests are run is irrelevant to whether the edge is removed or not. This implies that the probability of accuracy $\mathbb{P}(Y_{n_{i,j}}=1)$ is also independent of ordering.
\end{proof}
\subsection{Proof of Lemma \ref{lemma: no-dseps}}\label{proof: no-dseps}
\begin{restatable}[]{lemma}{nodseps}\label{lemma: no-dseps} 
    Under the assumptions of Definition \ref{assumption:cit_power_independence}, if $\not\exists$ a size $k$ subset of $\adj_{-j}(\mathcal{C}, x_i)$ $s_i$ such that $x_i \perp\!\!\!\perp x_j |s_i$, then any pair of orderings $\mathsf{L},\mathsf{L}'$ achieves the same $\mathbb{E}[t_{e_{i,j}}]$, where $t_{e_{i,j}}$ is the number of tests conducted by EP using either $\mathsf{L}$ or $\mathsf{L'}$.
\end{restatable}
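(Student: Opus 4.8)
The plan is to reduce the test count $t_{e_{i,j}}$ to a truncated first-success time of an i.i.d.\ Bernoulli sequence whose law is manifestly independent of the ordering. First I would recall the mechanics of EP (Subroutine~\ref{algo: subset-ordering}): it fixes $A \leftarrow \adj_{-j}(\calC, x_i)$, scans the $N := |[A]_k|$ candidate conditioning sets in the order prescribed by $\mathsf{L}$, runs one CIT per set, and halts (removing $n_{i,j}$) the instant a test returns independent. Hence $t_{e_{i,j}}$ is exactly the index of the first test returning independent, or $N$ if none does.

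The decisive step uses the hypothesis that no size-$k$ subset $s_i$ of $\adj_{-j}(\calC, x_i)$ satisfies $x_i \perp\!\!\!\perp x_j \mid s_i$. Consequently \emph{every} candidate CIT conditions on a non-d-separating set, i.e., tests a genuinely dependent pair. By the Sufficient Specificity clause of Definition~\ref{assumption:cit_power_independence}, each such test returns independent with the common false-negative probability $\beta$, the same value across all subsets. By the Mutual Independence clause, the indicators $X_1, \ldots, X_N$ (where $X_m = 1$ iff the $m$-th scanned test returns independent) are therefore i.i.d.\ $\mathrm{Bernoulli}(\beta)$, regardless of which subset occupies which position under $\mathsf{L}$.

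I would then write $t_{e_{i,j}} = \min\{m : X_m = 1\}$ (set to $N$ when all $X_m = 0$) and evaluate it via the tail-sum identity. Since $\{t_{e_{i,j}} \geq m\}$ is precisely the event that the first $m-1$ scanned tests all return dependent, independence and the common marginal give $\mathbb{P}(t_{e_{i,j}} \geq m) = (1-\beta)^{m-1}$, whence $\mathbb{E}[t_{e_{i,j}}] = \sum_{m=1}^{N} \mathbb{P}(t_{e_{i,j}} \geq m) = \sum_{m=1}^{N} (1-\beta)^{m-1} = \frac{1-(1-\beta)^N}{\beta}$ for $\beta > 0$ (and $\mathbb{E}[t_{e_{i,j}}] = N$ trivially when $\beta = 0$). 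This expression depends only on $N$ and $\beta$, quantities fixed by $\adj_{-j}(\calC, x_i)$, $k$, and the test, but never on the ordering; it is therefore identical for $\mathsf{L}$ and $\mathsf{L}'$.

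The one subtlety I would be most careful about is justifying that the per-test independence probability is a single constant $\beta$ across all conditioning sets rather than a set-dependent quantity: this is exactly what Sufficient Specificity supplies, and it is why the hypothesis (no d-separating set of size $k$) is essential. A lone d-separating set would instead carry the larger probability $1-\alpha$, destroying the common-marginal structure and making the first-success time genuinely order-sensitive, which is precisely the regime that Lemma~\ref{lemma: optimal-sequence} exploits. I would note that the argument really only requires \emph{exchangeability} of $(X_1, \ldots, X_N)$, with the i.i.d.\ property being the convenient sufficient condition delivered jointly by the two clauses of Definition~\ref{assumption:cit_power_independence}.
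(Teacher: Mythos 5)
Your proposal is correct and follows essentially the same route as the paper's proof: both arguments observe that, with no d-separating set of size $k$ available, every scanned CIT returns independent with the common rate $\beta$ and the outcomes are mutually independent, so the law of the first-success (stopping) time is invariant to the ordering $\mathsf{L}$. Your explicit tail-sum computation $\mathbb{E}[t_{e_{i,j}}] = \sum_{m=1}^{N}(1-\beta)^{m-1}$ simply makes quantitative what the paper states verbally, and is a welcome sharpening rather than a different argument.
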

\begin{proof}
    Suppose $\not\exists$ a size $k$ subset $s_i$ of $\adj_{-j}(\mathcal{C}, x_i)$ such that $x_i \perp\!\!\!\perp x_j |s_i$. Then for all $\text{CIT}(x_i, x_j \mid s_u) \in \text{CIT}^k_{\adj_{-j}(\mathcal{C}, x_i)}$, the subset $s_u$ is not a d-separating set of $x_i,x_j$, meaning all tests should return dependent under an oracle.
    
    The runtime is determined by when the first test returns independent (or when all tests have been run). By Definition \ref{assumption:cit_power_independence}, since no d-separating sets exist, each CIT falsely returns independent with the same false negative rate $\beta$, and the outcomes of these tests are mutually independent. Therefore, the timing of when a test will return independent is the same no matter the order of the tests, implying that $\mathbb{E}[t_{e_{i,j}}]$ remains constant for all orderings of tests.
\end{proof}

\subsection{Proof of Lemma \ref{lemma: optimal-sequence}}\label{proof: optimal-sequence}
\begin{restatable}[]{lemma}{OptimalSequenceOrdering}\label{lemma: optimal-sequence} 
Under the assumptions of Definition \ref{assumption:cit_power_independence}, given a sequence of CITs $\mathsf{L}$ for edge $n_{i,j}$, for any pair of adjacent CITs consisting of a test on a non-d-separating set $s_1$ followed by a test on a d-separating set $s_2$, the sequence $\mathsf{L}'$ generated by swapping the pair to place the d-separating test $s_2$ first achieves strictly better runtime, i.e., $\mathbb{E}[t_{e_{i,j}}^{\mathsf{L}'}] < \mathbb{E}[t_{e_{i,j}}^{\mathsf{L}}]$, where $t_{e_{i,j}}^{\mathsf{L}}$ is the number of tests conducted by EP under ordering $\mathsf{L}$.
\end{restatable}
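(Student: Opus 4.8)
The plan is to localize the effect of the single adjacent swap and reduce the comparison to a two-test computation whose sign is dictated directly by the Sufficient Specificity condition $1-\alpha > \beta$ of Definition~\ref{assumption:cit_power_independence}. Write $\mathsf{L} = (\ldots, s_1, s_2, \ldots)$ with the non-d-separating set $s_1$ at some position $m$ and the d-separating set $s_2$ at position $m+1$, and let $\mathsf{L}'$ be identical except that $s_2$ occupies position $m$ and $s_1$ position $m+1$. By Definition~\ref{assumption:cit_power_independence}, each test returns ``independent'' independently of the others, with probability $\beta$ for $s_1$ (a false negative on a non-d-separating set) and probability $1-\alpha$ for $s_2$ (a true negative on a d-separating set). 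Setting $\gamma := 1-\alpha$, Sufficient Specificity reads $\gamma > \beta$.

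First I would condition on the event $E$ that none of the tests at positions $1,\ldots,m-1$ returns independent; since these positions carry the same tests in both orderings, EP runs identically under $\mathsf{L}$ and $\mathsf{L}'$ up through position $m-1$, so the two runtimes agree on $E^c$ and contribute nothing to the difference. Moreover $\mathbb{P}(E) = \prod_{i<m}(1-p_i) > 0$, where $p_i$ is the single-test independence probability at position $i$, because every such probability is strictly below $1$ (as $\alpha>0$ and $\beta<1$ for a nontrivial test). Next I would observe that, because test outcomes are mutually independent and the set of tests at positions $m+2,\ldots$ is identical in $\mathsf{L}$ and $\mathsf{L}'$, the probability of surviving both positions $m$ and $m+1$ equals $(1-\beta)(1-\gamma)$ in either ordering (multiplication commutes), and conditional on reaching position $m+2$ the distribution of the remaining number of tests is the same. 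Hence the ``tail'' contribution cancels exactly in the difference, and only the expected count accrued at positions $m,m+1$ matters.

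The core computation is then a short expansion. Conditioned on $E$, the expected number of tests performed from position $m$ onward is $\beta\cdot 1 + (1-\beta)\gamma\cdot 2 + (1-\beta)(1-\gamma)(2+R)$ under $\mathsf{L}$ and $\gamma\cdot 1 + (1-\gamma)\beta\cdot 2 + (1-\gamma)(1-\beta)(2+R)$ under $\mathsf{L}'$, where $R$ is the common expected number of further tests given arrival at position $m+2$. Subtracting, the identical tail terms cancel and the remaining algebra collapses to $\gamma - \beta > 0$. Multiplying by $\mathbb{P}(E)>0$ gives $\mathbb{E}[t^{\mathsf{L}}_{e_{i,j}}] - \mathbb{E}[t^{\mathsf{L}'}_{e_{i,j}}] = \mathbb{P}(E)(\gamma-\beta) > 0$, which is exactly the strict inequality claimed.

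I expect the only delicate point to be the tail-cancellation step: one must verify that conditioning on the outcomes at positions $m$ and $m+1$ does not alter the conditional law of the subsequent tests, which is precisely where mutual independence enters, and that the survival probability through the swapped pair is genuinely order-invariant. Once that factorization is established, the sign of the entire difference is pinned down by $(1-\alpha)-\beta$, so Sufficient Specificity does all the remaining work.
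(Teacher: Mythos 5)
Your proof is correct and takes essentially the same route as the paper's: localize the effect of the adjacent swap, cancel the common prefix and tail contributions via mutual independence, and compare the expected counts accrued at the two swapped positions, with the sign pinned down by $1-\alpha > \beta$. The only cosmetic differences are that you collapse the two-position comparison by direct algebra to $(1-\alpha)-\beta$ where the paper invokes the rearrangement inequality, and you make explicit the tail-cancellation step that the paper's displayed difference formula leaves implicit.
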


\begin{proof}
We say a CIT$=I$ if it returns independent, and CIT$=N$ if it returns dependent (not independent). For edge $n_{i,j}$, let $\text{CIT}^{d-sep}_i$ denote a CIT at position $i$ in the sequence that tests a d-separating set of $x_i, x_j$, and let $\text{CIT}^{non}_i$ denote a CIT at position $i$ that tests a non-d-separating set of $x_i, x_j$.

Consider a pair of orderings $\mathsf{L}, \mathsf{L}'$ that differ in only two positions, where the tests are swapped: ordering $\mathsf{L} = \ldots, \text{CIT}^{non}_{i}, \text{CIT}^{d-sep}_{i+1}, \ldots$, while ordering $\mathsf{L}'= \ldots, \text{CIT}^{d-sep}_{i}, \text{CIT}^{non}_{i+1}, \ldots$. Under the mutual independence assumption (Definition \ref{assumption:cit_power_independence}), we can simplify the difference between the expected runtimes of the orderings as:
\begin{align}
    \mathbb{E}[t_{e_{i,j}}^{\mathsf{L}'}] - \mathbb{E}[t_{e_{i,j}}^{\mathsf{L}}] &= \left(\prod_{j=1}^{i-1}\mathbb{P}(\text{CIT}_{j}=N)\right)\big[ \mathbb{P}(\text{CIT}^{d-sep}_{i}=I) \cdot i \notag\\
    &\quad +\mathbb{P}(\text{CIT}^{d-sep}_{i}=N) \mathbb{P}(\text{CIT}^{non}_{i+1}=I) \cdot (i+1)\big] \notag\\
    &\quad - \left(\prod_{j=1}^{i-1}\mathbb{P}(\text{CIT}_{j}=N)\right)\big[ \mathbb{P}(\text{CIT}^{non}_{i}=I) \cdot i \notag\\
    &\quad +\mathbb{P}(\text{CIT}^{non}_{i}=N) \mathbb{P}(\text{CIT}^{d-sep}_{i+1}=I) \cdot (i+1)\big]
\end{align}
Dividing both sides by the positive value $\left(\prod_{j=1}^{i-1}\mathbb{P}(\text{CIT}_{j}=N)\right)$ yields the following on the RHS:
\begin{align}
    &= \big[ \mathbb{P}(\text{CIT}^{d-sep}_{i}=I) \cdot i +\mathbb{P}(\text{CIT}^{d-sep}_{i}=N) \mathbb{P}(\text{CIT}^{non}_{i+1}=I) \cdot (i+1)\big] \notag\\
    &\quad - \big[ \mathbb{P}(\text{CIT}^{non}_{i}=I) \cdot i +\mathbb{P}(\text{CIT}^{non}_{i}=N) \mathbb{P}(\text{CIT}^{d-sep}_{i+1}=I) \cdot (i+1)\big].
\end{align}
Filling in the probabilities with the correct false positive and false negative rates (where $\alpha$ is the false positive rate and $\beta$ is the false negative rate) yields:
\begin{align}
    &= [(1-\alpha) \cdot i + \alpha \beta (i+1)] - [\beta \cdot i + (1-\beta)(1-\alpha) \cdot (i+1)] \notag\\
    &= [(1-\alpha) \cdot i + \alpha\beta (i+1)] - [\beta \cdot i + (1-\beta -\alpha+\alpha\beta) \cdot (i+1)]
\end{align}
We note that both terms in the above equation can be rewritten as weighted sums of $i, i+1$, with the weights adding up to $1-\alpha+\alpha\beta$. Under the assumption that $1-\alpha > \beta$, this implies that the weight on $i$ is higher in the first term, which implies the weight on $i+1$ is higher in the second term. As $i < i+1$, under the rearrangement inequality this implies that the sum is negative, and therefore $\mathbb{E}[t_{e_{i,j}}^{\mathsf{L}'}] - \mathbb{E}[t_{e_{i,j}}^{\mathsf{L}}]<0$.
\end{proof}

\subsection{Proof of Lemma \ref{lemma: expert-accuracy}}\label{proof: expert-accuracy}

\begin{restatable}[Monotonicity of EP Runtime in Expert Accuracy]{lemma}{ExpertAccuracyMonotonicity}\label{lemma: expert-accuracy}
Under Assumption~\ref{assumption:cit_power_independence}, let $T_{\text{EP-G}}(p_{\text{d-sep}})$ denote the number of tests executed by EP-Guess (Subroutine~\ref{algo: expert-subset-ordering1}) when testing edge $n_{i,j}$ at conditioning set size $k$ with expert $\psi$ having d-separation accuracy $p_{\text{d-sep}}$. Then:
\begin{enumerate}
    \item[(a)] $\mathbb{E}[T_{\text{EP-G}}(p_{\text{d-sep}})]$ decreases monotonically with $p_{\text{d-sep}}$, strictly decreasing when $[A]_k$ contains both d-separating and non-d-separating sets (where $A = \adj_{-j}(\calC, x_i)$)
    \item[(b)] When $p_{\text{d-sep}} \geq 0.5$, $\mathbb{E}[T_{\text{EP-G}}(p_{\text{d-sep}})] \leq \mathbb{E}[T_{\text{random}}]$ where $T_{\text{random}}$ denotes runtime under random ordering
\end{enumerate}
\end{restatable}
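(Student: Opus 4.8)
The plan is to mirror the coupling proof of Lemma~\ref{lemma: edge-accuracy-monotonic}, swapping its accuracy-improving exchange lemma (Lemma~\ref{lemma: false-before-true}) for the runtime-improving exchange lemma (Lemma~\ref{lemma: optimal-sequence}) and reversing the direction of monotonicity. Fix accuracies $p_1 < p_2$ and, for a fixed realized ordering $\mathsf{L}$, let $\widehat{T}(\mathsf{L}) := \mathbb{E}_{\text{data}}[t^{\mathsf{L}}_{e_{i,j}}]$ be the expected number of EP tests over the finite-sample test outcomes; since Lemma~\ref{lemma: optimal-sequence} is stated for this expected runtime, $\widehat{T}(\mathsf{L})$ is the natural object, and its only remaining randomness is the expert's (the symmetric-channel classifications and the uniform shuffle of $[A]_k$). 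Writing $\widehat{T}_p := \widehat{T}(\mathsf{L}^p)$, the goal is to build a coupling of $\widehat{T}_{p_1}$ and $\widehat{T}_{p_2}$ with correct marginals under which $\widehat{T}_{p_2} \le \widehat{T}_{p_1}$ almost surely; taking expectations (Strassen's theorem, exactly as invoked in Lemma~\ref{lemma: edge-accuracy-monotonic}) then yields $\mathbb{E}[T_{\text{EP-G}}(p_2)] \le \mathbb{E}[T_{\text{EP-G}}(p_1)]$.

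I would reuse the shared randomness of that proof's Step~1: one uniform permutation $\sigma$ of $[A]_k$ (the common shuffle) together with i.i.d.\ keys $\{r_W : W \in [A]_k\}$, where at accuracy $p$ a set $W$ is classified correctly iff $r_W \le p$. A d-separating set is correctly classified when placed in the predicted-d-sep bucket (processed first) and a non-d-separating set when placed in the predicted-non-d-sep bucket (processed later); concatenating the two buckets while preserving the $\sigma$-induced order within each gives $\mathsf{L}^p$. The marginal-matching step is identical to Step~2 of the analog proof: the induced relative order within each type is uniform and independent of the bucket partition, reproducing the distribution of Subroutine~\ref{algo: expert-subset-ordering1}, so $\widehat{T}_p \stackrel{d}{=} T_{\text{EP-G}}(p)$.

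The crux is monotonicity of the coupling. Because keys are shared and $p_1 < p_2$, correct classification is monotone: a d-sep set can only switch from the later bucket to the earlier one as $p$ rises, and a non-d-sep set only the reverse. Taking the target order to be ``all d-sep sets (in $\sigma$-order) before all non-d-sep sets (in $\sigma$-order)'', an inversion is a (non-d-sep, d-sep) pair appearing out of this order; same-type pairs are never inversions since both buckets preserve $\sigma$. A short case check over the bucket transitions of each pair shows the per-pair inversion indicator is non-increasing in $p$, hence $\mathrm{Inv}(\mathsf{L}^{p_2}) \subseteq \mathrm{Inv}(\mathsf{L}^{p_1})$. By the weak-Bruhat-order characterization used in Lemma~\ref{lemma: edge-accuracy-monotonic}, $\mathsf{L}^{p_2}$ is then reachable from $\mathsf{L}^{p_1}$ through adjacent transpositions that each remove one inversion, i.e.\ each swap an adjacent (non-d-sep, d-sep) pair into (d-sep, non-d-sep) --- precisely the exchange of Lemma~\ref{lemma: optimal-sequence}, which strictly lowers expected runtime under the sufficient-specificity condition $1-\alpha > \beta$ of Definition~\ref{assumption:cit_power_independence}. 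Chaining the swaps gives $\widehat{T}_{p_2} \le \widehat{T}_{p_1}$ pointwise. For strictness when $[A]_k$ holds both types, there is positive probability (a d-sep key landing in $(p_1,p_2]$ while some correctly-placed non-d-sep set precedes it in $\mathsf{L}^{p_1}$) that at least one such beneficial swap occurs, making the inequality strict in expectation; when only one type is present, Lemma~\ref{lemma: no-dseps} makes runtime ordering-invariant and the relation is a weak equality.

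Part~(b) then follows from part~(a) by identifying accuracy $p_{\text{d-sep}} = 0.5$ with the uniform ordering: at $p = 0.5$ each set is sent to either bucket by an independent fair coin regardless of its true status, and by the same decomposition used for the marginals this induces the uniform distribution over orderings of $[A]_k$, so $\mathbb{E}[T_{\text{EP-G}}(0.5)] = \mathbb{E}[T_{\text{random}}]$; monotonicity then gives $\mathbb{E}[T_{\text{EP-G}}(p_{\text{d-sep}})] \le \mathbb{E}[T_{\text{random}}]$ for all $p_{\text{d-sep}} \ge 0.5$. I expect the main obstacle to be the monotonicity step, and within it the bookkeeping that guarantees every transposition along the Bruhat path is an opposite-type (non-d-sep before d-sep) swap covered by Lemma~\ref{lemma: optimal-sequence} and never a same-type swap (ruled out because $\sigma$ fixes intra-type order in both buckets), together with verifying that the shared shuffle $\sigma$ --- rather than separate intra-type permutations --- is what makes the per-pair inversion indicators genuinely monotone in $p$.
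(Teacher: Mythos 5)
Your overall strategy (shared-randomness coupling, marginal matching, reachability by runtime-reducing swaps, Strassen) is the same as the paper's, and your reduction to $\widehat{T}(\mathsf{L})=\mathbb{E}_{\text{data}}[t^{\mathsf{L}}_{e_{i,j}}]$, your marginal-matching step, your strictness discussion, and part (b) are all sound. The genuine gap is exactly the step you flagged, and your proposed resolution of it is wrong: with a \emph{single} shared shuffle $\sigma$ of $[A]_k$, it is false that ``$\sigma$ fixes intra-type order in both buckets.'' The shuffle fixes the relative order of two same-type sets only while they occupy the \emph{same} bucket; bucket membership changes with $p$, so intra-type order is not preserved across accuracy levels. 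Concretely, take two d-separating sets $W_1, W_2$ with $\sigma(W_1)<\sigma(W_2)$ and keys $r_{W_1}>p_2$ and $p_1<r_{W_2}\le p_2$. At accuracy $p_1$ both are misclassified, so both sit in the second bucket in the order $(W_1,W_2)$; at accuracy $p_2$, $W_2$ moves to the first bucket while $W_1$ stays, giving $(W_2,W_1)$. Thus a same-type inversion (with respect to your $\sigma$-labelling) is \emph{created} in passing from $\mathsf{L}^{p_1}$ to $\mathsf{L}^{p_2}$: your case check is correct for cross-type pairs, but the containment $\mathrm{Inv}(\mathsf{L}^{p_2})\subseteq\mathrm{Inv}(\mathsf{L}^{p_1})$ must hold over \emph{all} pairs for the weak-Bruhat characterization, and it fails here. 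Consequently the claim that every transposition along the path is an opposite-type swap covered by Lemma~\ref{lemma: optimal-sequence} is unjustified as written.

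There are two repairs, and the paper takes the first. The paper's coupling does \emph{not} use a single shuffle: it fixes two separate uniform permutations $\pi_{\text{d-sep}}$ and $\pi_{\text{non-d-sep}}$, shared across both accuracy levels and imposed \emph{after} bucketing, so intra-type relative order is identical at $p_1$ and $p_2$ by construction; then inversion sets are genuinely nested and the Bruhat path consists only of cross-type swaps. The cost is that this construction no longer literally matches Subroutine~\ref{algo: expert-subset-ordering1}, so the paper needs its Step-2 bijection argument to show the marginals still coincide (your single-$\sigma$ construction gets the marginals for free, which is presumably why you chose it). Alternatively, you can keep your construction but abandon the labeled-permutation Bruhat argument: note that under Definition~\ref{assumption:cit_power_independence} the expected runtime depends only on the \emph{type sequence} of the ordering, so same-type adjacent swaps are runtime-neutral; then transform $\mathsf{L}^{p_1}$ into $\mathsf{L}^{p_2}$ explicitly by moving each bucket-switching d-separating set leftward, and each bucket-switching non-d-separating set rightward, to its $\sigma$-position in its new bucket. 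Every adjacent transposition in this transformation is either a same-type swap (runtime-neutral) or a (non-d-sep, d-sep) exchange covered by Lemma~\ref{lemma: optimal-sequence}, which restores the pointwise inequality $\widehat{T}_{p_2}\le\widehat{T}_{p_1}$ and, with the rest of your argument, the lemma.
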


\begin{proof}

We establish that the expected runtime $\mathbb{E}[T_{\text{EP-G}}(p_{\text{d-sep}})]$ decreases monotonically with expert d-separation accuracy $p_{\text{d-sep}}$.

\textbf{Setup and notation.} For a fixed partial skeleton $\mathcal{C}$, true DAG $\mathcal{G}^*$, edge $n_{i,j} \in \mathcal{C}$, and conditioning set size $k$, consider running EP-Guess with expert accuracy $p_{\text{d-sep}}$. EP-Guess tests conditional independence CIT($x_i, x_j \mid W$) for all $W \in [A]_k$ where $A = \adj_{-j}(\calC, x_i)$, terminating when the first test returns independence. The randomness in this process comes from three sources: (1) the expert's prediction $\widehat{\mathcal{G}}$ sampled according to accuracy $p_{\text{d-sep}}$, (2) the finite-sample data $D$ sampled for use in conditional independence tests, and (3) the random shuffling used when EP-Guess generates the initial permutation of conditioning sets within each partition. Let $T_{p_{\text{d-sep}}}$ denote the number of tests executed by EP before termination after sampling from each of the three sources of randomness. The expectation $\mathbb{E}[T_{p_{\text{d-sep}}}]$ is taken over all three sources of randomness.

\textbf{Goal.} To show monotonicity, we must prove that for $p_{\text{d-sep}_2} > p_{\text{d-sep}_1}$, we have $\mathbb{E}[T_{p_{\text{d-sep}_1}}] \geq \mathbb{E}[T_{p_{\text{d-sep}_2}}]$. By Strassen's theorem, it suffices to show that $T_{p_{\text{d-sep}_1}}$ stochastically dominates $T_{p_{\text{d-sep}_2}}$.

\textbf{Coupling and stochastic dominance.} We employ a coupling argument. The following classical result provides our main tool (see \citep{lindvall1999strassen} for a more abstract discussion of the result, and see Theorem 4.2.3. in \citep{levin2023coupling} for a more direct formulation):

\begin{theorem}[Strassen's Coupling Theorem]
The real random variable $X$ stochastically dominates $Y$ if and only if there exists a coupling $(\widehat{X}, \widehat{Y})$ of $X$ and $Y$ such that $\mathbb{P}[\widehat{X} \geq \widehat{Y}] = 1$. We refer to $(\widehat{X}, \widehat{Y})$ as a monotone coupling of $X$ and $Y$.
\end{theorem}

A coupling of random variables $X$ and $Y$ is a joint distribution $(\widehat{X}, \widehat{Y})$ where $\widehat{X}$ and $\widehat{Y}$ are two entirely different random variables whose marginal distributions coincide with the distributions of $X$ and $Y$, respectively. More formally, a coupling is a probability measure on the product space whose projections onto each coordinate recover the original distributions. In simpler terms, $(\widehat{X}, \widehat{Y})$ is constructed such that $\widehat{X}$ has the same distribution as $X$, $\widehat{Y}$ has the same distribution as $Y$, but $\widehat{X}$ and $\widehat{Y}$ may be dependent.

\textbf{Example: Bernoulli couplings.} Consider Bernoulli random variables $X$ and $Y$ with $\mathbb{P}[X=1] = q$ and $\mathbb{P}[Y=1] = r$ where $0 \leq q < r \leq 1$.
\begin{itemize}
    \item \emph{Independent coupling}: We can construct $(\widehat{X}, \widehat{Y})$ where $\widehat{X}$ has the same distribution as $X$ and $\widehat{Y}$ has the same distribution as $Y$ and they are independent. This gives joint probabilities $\mathbb{P}[(\widehat{X}, \widehat{Y}) = (i,j)] = \mathbb{P}[\widehat{X}=i]\mathbb{P}[\widehat{Y}=j]$ for $i,j \in \{0,1\}$.
    \item \emph{Monotone coupling}: Alternatively, sample $U$ uniformly from $[0,1]$, and set $\widehat{X} = \mathbb{1}\{U \leq q\}$ and $\widehat{Y} = \mathbb{1}\{U \leq r\}$. Then $(\widehat{X}, \widehat{Y})$ is a coupling with $\mathbb{P}[\widehat{X} \leq \widehat{Y}] = 1$, where $\widehat{X}$ and $\widehat{Y}$ still follow the same Bernoulli distributions as $X$ and $Y$ respectively. This demonstrates that a single source of randomness can induce dependence while preserving marginals.
\end{itemize}

\textbf{Proof strategy.} Our proof proceeds in four steps:
\begin{enumerate}
    \item Describe a hypothetical process for generating a monotone coupling $(\widehat{T}_{p_{\text{d-sep}_1}}, \widehat{T}_{p_{\text{d-sep}_2}})$ using shared randomness (analogous to the monotone Bernoulli coupling above). This construction assumes access to the ground truth DAG $\mathcal{G}^*$ and is purely for theoretical analysis. Importantly, Strassen's theorem only requires showing that such a monotone coupling is \emph{possible} to construct, not that we can construct it in practice with knowledge of only finite samples.
    \item Verify that the marginal distributions coincide with the original distributions: $\widehat{T}_{p_{\text{d-sep}_1}} \stackrel{d}{=} T_{p_{\text{d-sep}_1}}$ and $\widehat{T}_{p_{\text{d-sep}_2}} \stackrel{d}{=} T_{p_{\text{d-sep}_2}}$.
    \item Show the coupling is monotone: $\mathbb{P}[\widehat{T}_{p_{\text{d-sep}_1}} \geq \widehat{T}_{p_{\text{d-sep}_2}}] = 1$.
    \item Conclude from Strassen's theorem that $\mathbb{E}[T_{p_{\text{d-sep}_1}}] \geq \mathbb{E}[T_{p_{\text{d-sep}_2}}]$.
\end{enumerate}

\subsubsection{Step 1: Describing the Process for Generating the Monotone Coupling $(\widehat{T}_{p_{\text{d-sep}_1}}, \widehat{T}_{p_{\text{d-sep}_2}})$}

We construct a coupling between two random variables $\widehat{T}_{p_{\text{d-sep}_1}}$ and $\widehat{T}_{p_{\text{d-sep}_2}}$ by describing a hypothetical generative process that uses shared randomness. Both random variables take values in positive integers representing the number of tests executed before EP terminates, and will be designed to have the same marginal distributions as $T_{p_{\text{d-sep}_1}}$ and $T_{p_{\text{d-sep}_2}}$ respectively.

\textbf{Sources of shared randomness.} We fix the expert accuracies $p_{\text{d-sep}_1}$ and $p_{\text{d-sep}_2}$, the conditioning set size $k$, and allow the expert predictions $\widehat{\mathcal{G}}$ to vary. For edge $n_{i,j}$ with adjacency set $A = \adj_{-j}(\calC, x_i)$, EP-Guess considers all size-$k$ subsets of $A$, i.e., all $W \in [A]_k$. Let $c = |[A]_k|$ denote the total number of such conditioning sets. Our coupling uses three sources of shared randomness:
\begin{enumerate}
    \item We partition the size-$k$ conditioning sets in $[A]_k$ based on true d-separation in $\mathcal{G}^*$. Suppose there are $m$ sets in $[A]_k$ that d-separate $x_i, x_j$ in $\mathcal{G}^*$ (d-separating sets) and $c - m$ sets that do not (non-d-separating sets). Let $L = [l_1, l_2, \ldots, l_c]$ be a random variable corresponding to a uniformly sampled permutation of $m$ ones and $c - m$ zeros. That is, $L$ contains exactly $m$ entries equal to 1 (indicating d-separating sets) and $c - m$ entries equal to 0 (indicating non-d-separating sets), where the ordering is uniformly random among all such permutations.
    \item A collection of independent uniform random variables $\mathcal{R} = \{r_1, r_2, \ldots, r_c\}$ where each $r_i \sim \text{Uniform}[0,1]$. These will determine, for each position in $L$ independently, whether the expert correctly classifies the corresponding conditioning set.
    \item Two uniform random permutations: $\pi_{\text{d-sep}}$ over the $m$ d-separating sets in $[A]_k$, and $\pi_{\text{non-d-sep}}$ over the $c - m$ non-d-separating sets in $[A]_k$. These determine the relative ordering within each partition.
\end{enumerate}

\textbf{Generating the expert prediction at accuracy $p_{\text{d-sep}_1}$.} We now describe how to sample an expert graph $\widehat{\mathcal{G}}^{(p_{\text{d-sep}_1})}$ with d-separation accuracy $p_{\text{d-sep}_1}$ using the shared randomness $(L, \mathcal{R}, \pi_{\text{d-sep}}, \pi_{\text{non-d-sep}})$. For each position $i$ in $L$, the expert independently classifies the corresponding conditioning set correctly with probability $p_{\text{d-sep}_1}$:
\begin{itemize}
    \item If $l_i = 1$ (corresponds to a true d-separating set in $\mathcal{G}^*$): the expert correctly predicts this set d-separates $x_i, x_j$ if $r_i \leq p_{\text{d-sep}_1}$, otherwise incorrectly predicts it does not d-separate.
    \item If $l_i = 0$ (corresponds to a non-d-separating set): the expert correctly predicts this set does not d-separate $x_i, x_j$ if $r_i \leq p_{\text{d-sep}_1}$, otherwise incorrectly predicts it d-separates.
\end{itemize}

\textbf{Constructing the conditioning set ordering $\widehat{\mathsf{L}}^{p_{\text{d-sep}_1}}$.} Given the expert prediction $\widehat{\mathcal{G}}^{(p_{\text{d-sep}_1})}$, we construct the conditioning set ordering as specified by EP-Guess (Subroutine~\ref{algo: expert-subset-ordering1}). Initialize two empty lists $B_1 = [], B_2 = []$. For each position $i$ in $L$ (going in the order specified by $L$):
\begin{itemize}
    \item If $l_i = 1$ (d-separating set) and $r_i \leq p_{\text{d-sep}_1}$ (correctly classified): add position $i$ to the end of $B_1$ (predicted d-separating sets tested first).
    \item If $l_i = 1$ (d-separating set) and $r_i > p_{\text{d-sep}_1}$ (incorrectly classified): add position $i$ to the end of $B_2$.
    \item If $l_i = 0$ (non-d-separating set) and $r_i \leq p_{\text{d-sep}_1}$ (correctly classified): add position $i$ to the end of $B_2$.
    \item If $l_i = 0$ (non-d-separating set) and $r_i > p_{\text{d-sep}_1}$ (incorrectly classified): add position $i$ to the end of $B_1$.
\end{itemize}
Concatenate the buckets: $L_F^{p_{\text{d-sep}_1}} = B_1 + B_2$. Within $L_F^{p_{\text{d-sep}_1}}$, assign relative ordering among d-separating sets using $\pi_{\text{d-sep}}$ and among non-d-separating sets using $\pi_{\text{non-d-sep}}$ to obtain the final conditioning set ordering $\widehat{\mathsf{L}}^{p_{\text{d-sep}_1}}$.

\textbf{Computing $\widehat{T}_{p_{\text{d-sep}_1}}$.} Given the conditioning set ordering $\widehat{\mathsf{L}}^{p_{\text{d-sep}_1}}$, let $\widehat{T}_{p_{\text{d-sep}_1}}$ denote the expected number of tests executed by EP (Subroutine~\ref{algo: subset-ordering}) before termination if we were to randomly draw $n$ finite samples from the DGP $\mathcal{G}^*$ and run tests according to $\widehat{\mathsf{L}}^{p_{\text{d-sep}_1}}$.

\textbf{Generating $\widehat{T}_{p_{\text{d-sep}_2}}$ using the same randomness.} We follow the exact same procedure as above, crucially reusing the same shared randomness $(L, \mathcal{R}, \pi_{\text{d-sep}}, \pi_{\text{non-d-sep}})$. The only difference is that we use accuracy $p_{\text{d-sep}_2}$ instead of $p_{\text{d-sep}_1}$ when determining expert classifications. This yields a potentially different expert graph $\widehat{\mathcal{G}}^{(p_{\text{d-sep}_2})}$, a potentially different conditioning set ordering $\widehat{\mathsf{L}}^{p_{\text{d-sep}_2}}$, and a potentially different expected runtime $\widehat{T}_{p_{\text{d-sep}_2}}$.

By this coupling procedure, we generate the joint random variable $(\widehat{T}_{p_{\text{d-sep}_1}}, \widehat{T}_{p_{\text{d-sep}_2}})$.

\subsubsection{Step 2: Showing the Marginals of the Two Variables Coincide with Original Distributions}

Our goal is to verify that $\widehat{T}_{p_{\text{d-sep}_1}} \stackrel{d}{=} T_{p_{\text{d-sep}_1}}$ and $\widehat{T}_{p_{\text{d-sep}_2}} \stackrel{d}{=} T_{p_{\text{d-sep}_2}}$. We focus on showing $\widehat{T}_{p_{\text{d-sep}_1}} \stackrel{d}{=} T_{p_{\text{d-sep}_1}}$; the argument for $p_{\text{d-sep}_2}$ follows identically by symmetry.

\textbf{Reduction to orderings.} Let $\mathsf{L}^{p_{\text{d-sep}_1}}$ denote the random conditioning set ordering generated by Subroutine~\ref{algo: expert-subset-ordering1} using expert accuracy $p_{\text{d-sep}_1}$, and let $\widehat{\mathsf{L}}^{p_{\text{d-sep}_1}}$ denote the random conditioning set ordering generated in our coupling procedure (Step 1) using accuracy $p_{\text{d-sep}_1}$. Given any fixed conditioning set ordering, the expected number of tests executed when randomly drawing $n$ finite samples from $\mathcal{G}^*$ is deterministically fixed. Therefore, the distributions of $T_{p_{\text{d-sep}_1}}$ and $\widehat{T}_{p_{\text{d-sep}_1}}$ are determined entirely by the distributions of $\mathsf{L}^{p_{\text{d-sep}_1}}$ and $\widehat{\mathsf{L}}^{p_{\text{d-sep}_1}}$ respectively. To show $\widehat{T}_{p_{\text{d-sep}_1}} \stackrel{d}{=} T_{p_{\text{d-sep}_1}}$, it suffices to show that $\widehat{\mathsf{L}}^{p_{\text{d-sep}_1}} \stackrel{d}{=} \mathsf{L}^{p_{\text{d-sep}_1}}$.

\textbf{Decomposition of orderings.} Any conditioning set ordering $\mathsf{L}$ over all $c$ conditioning sets can be uniquely decomposed into three components:
\begin{enumerate}
    \item $\pi_{\text{d-sep}}$: the relative ordering (permutation) among the $m$ d-separating sets
    \item $\pi_{\text{non-d-sep}}$: the relative ordering (permutation) among the $c-m$ non-d-separating sets
    \item $\pi_{\text{rel}}$: the relative placement of d-separating sets versus non-d-separating sets (which sets come before which other sets)
\end{enumerate}
Therefore, the distribution over conditioning set orderings is uniquely determined by the joint distribution over $(\pi_{\text{d-sep}}, \pi_{\text{non-d-sep}}, \pi_{\text{rel}})$. We will show that this joint distribution is identical for both $\mathsf{L}^{p_{\text{d-sep}_1}}$ and $\widehat{\mathsf{L}}^{p_{\text{d-sep}_1}}$.

\textbf{Analysis of the coupling procedure.} In our coupling construction (Step 1), we explicitly sampled $\pi_{\text{d-sep}}$ and $\pi_{\text{non-d-sep}}$ uniformly and independently. The relative placement $\pi_{\text{rel}}$ is then determined by: for each position $i$ in $L$, whether $l_i$ is correctly classified (with probability $p_{\text{d-sep}_1}$) determines whether that conditioning set goes to $B_1$ or $B_2$, and then $\pi_{\text{rel}}$ corresponds to the partition structure $(B_1, B_2)$. Since each conditioning set's classification is independent and depends only on its true d-separation status and $p_{\text{d-sep}_1}$, we have:
\begin{itemize}
    \item $\widehat{\pi}_{\text{d-sep}}$ is uniform over all permutations of $m$ d-separating sets
    \item $\widehat{\pi}_{\text{non-d-sep}}$ is uniform over all permutations of $c-m$ non-d-separating sets
    \item $\widehat{\pi}_{\text{rel}}$ has distribution determined by $p_{\text{d-sep}_1}$ (probability of correct classification)
    \item $\widehat{\pi}_{\text{d-sep}}, \widehat{\pi}_{\text{non-d-sep}}, \widehat{\pi}_{\text{rel}}$ are mutually independent
\end{itemize}

\textbf{Analysis of Subroutine~\ref{algo: expert-subset-ordering1}.} We now show that $\mathsf{L}^{p_{\text{d-sep}_1}}$ has the same distributional structure. By construction of Subroutine~\ref{algo: expert-subset-ordering1}:
\begin{itemize}
    \item Conditioning sets in $[A]_k$ are initially shuffled uniformly at random
    \item Each conditioning set is independently classified based on whether it d-separates in $\widehat{\mathcal{G}}$ with probability $p_{\text{d-sep}_1}$ of correct classification
    \item Sets are partitioned into $B_1$ (predicted d-separating) and $B_2$ (predicted non-d-separating), preserving their random ordering within each bucket
\end{itemize}

To show $\pi_{\text{d-sep}}$ is uniform: Start with an initial uniform random permutation $\pi_{\text{d-sep}}^{\text{init}}$ of all $m$ d-separating sets. Some subset of size $\ell \sim \text{Binomial}(m, 1-p_{\text{d-sep}_1})$ are misclassified and placed in $B_2$, while the remaining $m-\ell$ are placed in $B_1$. For any fixed $\ell$ and any fixed choice of which $\ell$ sets are misclassified, this operation defines a bijection from $\pi_{\text{d-sep}}^{\text{init}}$ to the resulting permutation: given the final permutation and knowing which sets went to which bucket, we can uniquely recover $\pi_{\text{d-sep}}^{\text{init}}$, and vice versa. Since bijections preserve uniformity, the marginal distribution of $\pi_{\text{d-sep}}$ (after marginalizing over $\ell$ and the choice of which sets) remains uniform. By the same argument, $\pi_{\text{non-d-sep}}$ is uniform.

For independence: The distribution of $\pi_{\text{rel}}$ is determined solely by which conditioning sets are correctly classified (controlled by $p_{\text{d-sep}_1}$). For any fixed realization of $\pi_{\text{rel}}$ (i.e., fixed partition $(B_1, B_2)$), the above uniformity argument shows that $\pi_{\text{d-sep}}$ and $\pi_{\text{non-d-sep}}$ remain uniform. Therefore $(\pi_{\text{d-sep}}, \pi_{\text{non-d-sep}})$ are independent of $\pi_{\text{rel}}$.

\textbf{Conclusion.} Both $\widehat{\mathsf{L}}^{p_{\text{d-sep}_1}}$ and $\mathsf{L}^{p_{\text{d-sep}_1}}$ decompose into $(\pi_{\text{d-sep}}, \pi_{\text{non-d-sep}}, \pi_{\text{rel}})$ where each component has identical marginal distributions and the same independence structure. Therefore $\widehat{\mathsf{L}}^{p_{\text{d-sep}_1}} \stackrel{d}{=} \mathsf{L}^{p_{\text{d-sep}_1}}$, which implies $\widehat{T}_{p_{\text{d-sep}_1}} \stackrel{d}{=} T_{p_{\text{d-sep}_1}}$. By the same argument, $\widehat{T}_{p_{\text{d-sep}_2}} \stackrel{d}{=} T_{p_{\text{d-sep}_2}}$.

\subsubsection{Step 3: Showing that the Coupling is Monotone}

Our goal is to show that $\mathbb{P}[\widehat{T}_{p_{\text{d-sep}_1}} \geq \widehat{T}_{p_{\text{d-sep}_2}}] = 1$. We establish this by showing that the conditioning set ordering $\widehat{\mathsf{L}}^{p_{\text{d-sep}_2}}$ can be obtained from $\widehat{\mathsf{L}}^{p_{\text{d-sep}_1}}$ through a sequence of runtime-reducing swaps.

\textbf{Structure of the two orderings.} Recall from Step 1 that both $\widehat{\mathsf{L}}^{p_{\text{d-sep}_1}}$ and $\widehat{\mathsf{L}}^{p_{\text{d-sep}_2}}$ are generated using the same shared randomness $(L, \mathcal{R}, \pi_{\text{d-sep}}, \pi_{\text{non-d-sep}})$. Crucially, the permutations $\pi_{\text{d-sep}}$ (ordering among d-separating sets) and $\pi_{\text{non-d-sep}}$ (ordering among non-d-separating sets) are identical in both orderings. The only difference lies in the relative placement $\pi_{\text{rel}}$ of d-separating sets versus non-d-separating sets, which is determined by which conditioning sets are correctly classified.

\textbf{More sets correctly classified at higher accuracy.} Since $p_{\text{d-sep}_1} < p_{\text{d-sep}_2}$, for each position $i$ in $L$:
\begin{itemize}
    \item If conditioning set $i$ is correctly classified under accuracy $p_{\text{d-sep}_1}$ (i.e., $r_i \leq p_{\text{d-sep}_1}$), then it is also correctly classified under accuracy $p_{\text{d-sep}_2}$ (since $r_i \leq p_{\text{d-sep}_1} < p_{\text{d-sep}_2}$)
    \item Additional sets may be correctly classified under $p_{\text{d-sep}_2}$ that were misclassified under $p_{\text{d-sep}_1}$ (those with $p_{\text{d-sep}_1} < r_i \leq p_{\text{d-sep}_2}$)
\end{itemize}

This means:
\begin{itemize}
    \item D-separating sets that were correctly placed in $B_1$ in $L_F^{p_{\text{d-sep}_1}}$ remain in $B_1$ in $L_F^{p_{\text{d-sep}_2}}$
    \item Some d-separating sets that were incorrectly placed in $B_2$ in $L_F^{p_{\text{d-sep}_1}}$ may move to $B_1$ in $L_F^{p_{\text{d-sep}_2}}$ (moving leftward)
    \item Non-d-separating sets that were correctly placed in $B_2$ in $L_F^{p_{\text{d-sep}_1}}$ remain in $B_2$ in $L_F^{p_{\text{d-sep}_2}}$
    \item Some non-d-separating sets that were incorrectly placed in $B_1$ in $L_F^{p_{\text{d-sep}_1}}$ may move to $B_2$ in $L_F^{p_{\text{d-sep}_2}}$ (moving rightward)
\end{itemize}

\textbf{Characterizing the transformation via inversions.} To formalize how $\widehat{\mathsf{L}}^{p_{\text{d-sep}_2}}$ relates to $\widehat{\mathsf{L}}^{p_{\text{d-sep}_1}}$, we introduce a numerical ordering. Assign integers to conditioning sets as follows:
\begin{itemize}
    \item Assign $\{1, 2, \ldots, m\}$ to the d-separating sets according to their fixed relative order $\pi_{\text{d-sep}}$
    \item Assign $\{m+1, \ldots, c\}$ to the non-d-separating sets according to their fixed relative order $\pi_{\text{non-d-sep}}$
\end{itemize}

Under this assignment, every d-separating set has a smaller numerical label than every non-d-separating set. An \textbf{inversion} is any pair of conditioning sets that appears out of numerical order—specifically, a non-d-separating set appearing before a d-separating set in the ordering. Since $\pi_{\text{d-sep}}$ and $\pi_{\text{non-d-sep}}$ are fixed, sets of the same type (both d-separating or both non-d-separating) never form inversions relative to each other.

The transformation from $\widehat{\mathsf{L}}^{p_{\text{d-sep}_1}}$ to $\widehat{\mathsf{L}}^{p_{\text{d-sep}_2}}$ only moves d-separating sets leftward and non-d-separating sets rightward. This process cannot create new inversions: if a non-d-separating set precedes a d-separating set in $\widehat{\mathsf{L}}^{p_{\text{d-sep}_2}}$, that pair must have been in the same order in $\widehat{\mathsf{L}}^{p_{\text{d-sep}_1}}$. Therefore, the set of inversions in $\widehat{\mathsf{L}}^{p_{\text{d-sep}_2}}$ is a subset of the inversions in $\widehat{\mathsf{L}}^{p_{\text{d-sep}_1}}$.

\textbf{Weak Bruhat order and reachability.} The weak Bruhat order on permutations provides a formal framework for this relationship. A standard result (\citep{yessenov2005descents}, Proposition 2.2) states:

\begin{proposition}
For permutations $v$ and $w$, $v \leq w$ in weak Bruhat order if and only if $\mathrm{Inv}(v) \subseteq \mathrm{Inv}(w)$, where $\mathrm{Inv}(\cdot)$ denotes the set of inversions.
\end{proposition}

Equivalently, $v$ can be obtained from $w$ by a sequence of adjacent transpositions $(w_i, w_{i+1})$ where $w_i > w_{i+1}$ in the numerical ordering. In our setting, moving a d-separating set leftward past an adjacent non-d-separating set (moving $w_i$ left of $w_{i+1}$) is precisely such an inversion-reducing swap, since d-separating sets have smaller numerical labels than non-d-separating sets.

Since $\mathrm{Inv}(\widehat{\mathsf{L}}^{p_{\text{d-sep}_2}}) \subseteq \mathrm{Inv}(\widehat{\mathsf{L}}^{p_{\text{d-sep}_1}})$, it follows that $\widehat{\mathsf{L}}^{p_{\text{d-sep}_2}}$ can be obtained from $\widehat{\mathsf{L}}^{p_{\text{d-sep}_1}}$ through a sequence of adjacent swaps that move d-separating sets leftward past non-d-separating sets.

\textbf{Monotonicity via runtime-reducing swaps.} By Lemma~\ref{lemma: optimal-sequence}, each such swap (placing a d-separating set before a non-d-separating set) strictly decreases the expected number of tests executed. Since $\widehat{\mathsf{L}}^{p_{\text{d-sep}_2}}$ is reachable from $\widehat{\mathsf{L}}^{p_{\text{d-sep}_1}}$ through a sequence of such runtime-reducing swaps, we have $\widehat{T}_{p_{\text{d-sep}_1}} \geq \widehat{T}_{p_{\text{d-sep}_2}}$, establishing that $\mathbb{P}[\widehat{T}_{p_{\text{d-sep}_1}} \geq \widehat{T}_{p_{\text{d-sep}_2}}] = 1$.

\subsubsection{Step 4: Conclusion}

It follows from Strassen's Coupling Theorem that $\mathbb{E}[T_{p_{\text{d-sep}_1}}] \geq \mathbb{E}[T_{p_{\text{d-sep}_2}}]$, with strict inequality when $\text{CIT}^k_{\adj_{-j}(\calC, x_i)}$ contains at least one d-separating set and one non-d-separating set. When such mixed sets exist, there is at least one pair of orderings $\widehat{\mathsf{L}}^{p_{\text{d-sep}_1}}, \widehat{\mathsf{L}}^{p_{\text{d-sep}_2}}$ that can be created through the process outlined in Step 1 such that a d-separating set is swapped to be earlier in the sequence. In that case we have $\widehat{T}_{p_{\text{d-sep}_1}} > \widehat{T}_{p_{\text{d-sep}_2}}$, which implies by the Coupling Theorem that $\mathbb{E}[T_{p_{\text{d-sep}_1}}] > \mathbb{E}[T_{p_{\text{d-sep}_2}}]$.

For part (b), observe that when $p_{\text{d-sep}} = 0.5$, the expert's predictions are independent of the true d-separation structure, producing orderings distributionally equivalent to random orderings. Therefore $\mathbb{E}[T_{\text{EP-G}}(0.5)] = \mathbb{E}[T_{\text{random}}]$. Part (a) then implies $\mathbb{E}[T_{\text{EP-G}}(p_{\text{d-sep}})] \leq \mathbb{E}[T_{\text{random}}]$ for all $p_{\text{d-sep}} \geq 0.5$.

\end{proof}

\subsection{Proof of Theorem \ref{theorem: guess acc correct}}\label{proof: guess acc correct}
\GuessPCCorrectness*
\begin{proof}
We will first show that, under the assumption of oracle CITs (CITs that always return independence when conditioning on a d-separating set and dependence if not), both PC-Guess and gPC-Guess return the correct graph. We will then note that the probability that either method returns the incorrect graph is upper bounded by the probability that at least one CIT run returns an incorrect result. We will conclude by noting that, under a consistent CIT, the probability that any CIT test returns independence goes to $0$, yielding $\lim_{n \to \infty} \mathbb{P}(\widetilde{\mathcal{G}} = \mathcal{G}^*) = 1$.

We note that, with access to oracle CIT, PC has previously been shown to always return the correct graph when using any ordering of the vertices any ordering $\mathsf{O}$ to guide EL, and by Lemma \citep{spirtes_anytime_2001}, and suborderings $\mathsf{L}$ given to EP do not affect whether EP will remove an edge or retain it (Lemma \ref{lemma: orderingindaccuracy}). Therefore, PC-Guess returns the true graph with correct (in)dependence results from tests.

We note that gPC-Guess performs a single pass of the EL, running EP with CITs conditioning on subsets of size $0$ to $|V|-1$. Then, the correctness of gPC-Guess with access to oracle CITs follows directly from Lemmas \ref{lemma: EL perfect}, \ref{lemma: EP perfect}).

Now we establish the asymptotic result with consistent CITs. Let $K$ denote the maximum number of CIT tests that could possibly be run by either PC-Guess or gPC-Guess. Since the number of vertices $|V|$ is fixed, $K$ is finite---specifically, $K \leq |V|^2 \cdot 2^{|V|-2}$, as each test involves choosing two variables and a conditioning set from the remaining vertices.

For each possible test $\tau$ involving variables $X, Y$ and conditioning set $S$, let $E_\tau^{(n)}$ denote the event that test $\tau$ returns an incorrect result when run on sample size $n$. By the oracle correctness established above, the algorithm returns the incorrect graph only if at least one test returns an incorrect result. Therefore:
\begin{align*}
\mathbb{P}(\tilde{\mathcal{G}} \neq \mathcal{G}^*) \leq \mathbb{P}\left(\bigcup_{\tau \in \text{tests run}} E_\tau^{(n)}\right)
\end{align*}

Since the set of tests actually run is a subset of all $K$ possible tests, we have:
\begin{align*}
\mathbb{P}(\tilde{\mathcal{G}} \neq \mathcal{G}^*) \leq \mathbb{P}\left(\bigcup_{\tau=1}^{K} E_\tau^{(n)}\right) \leq \sum_{\tau=1}^{K} \mathbb{P}(E_\tau^{(n)})
\end{align*}
where the final inequality follows from the union bound.

By the consistency assumption, for each test $\tau$, we have $\lim_{n \to \infty} \mathbb{P}(E_\tau^{(n)}) = 0$. Since $K$ is finite:
\begin{align*}
\lim_{n \to \infty} \mathbb{P}(\tilde{\mathcal{G}} \neq \mathcal{G}^*) \leq \lim_{n \to \infty} \sum_{\tau=1}^{K} \mathbb{P}(E_\tau^{(n)}) = \sum_{\tau=1}^{K} \lim_{n \to \infty} \mathbb{P}(E_\tau^{(n)}) = 0
\end{align*}

Therefore, $\lim_{n \to \infty} \mathbb{P}(\tilde{\mathcal{G}} = \mathcal{G}^*) = 1$. \




\end{proof}


\subsection{Proof of Theorem \ref{theorem: pc performance}}\label{proof: pc performance}
\PCGuessIterationPerformance*
\begin{proof}
We prove each part separately.

\textbf{Part (a):} The monotonic increase in $\mathbb{E}[\Phi_\ell]$ with $p_\psi$ follows directly from Lemma \ref{lemma: edge-accuracy-monotonic}. For any iteration $\ell$ with partial skeleton $\mathcal{C}$, EL-G (Subroutine \ref{algo: expert-edge-ordering1}) partitions edges based on expert predictions and processes false edges before true edges. By Lemma \ref{lemma: edge-accuracy-monotonic}, as expert accuracy $p_\psi$ increases, the expected perfect recovery probability $\mathbb{E}[\Phi_\ell]$ increases monotonically. By Lemma \ref{lemma: edge-accuracy-monotonic} this inequality is strict when $\mathcal{C}$ contains both true edges (edges in $\mathcal{S}^*$) and false edges (edges not in $\mathcal{S}^*$) and $\mathcal{G}$ is a nonempty and non-fully connected graph.

\textbf{Part (b):} Under the assumption of Definition \ref{assumption:cit_power_independence}, the monotonic decrease in $\mathbb{E}[t_\ell]$ with $p_{\text{d-sep}}$ follows directly from Lemma \ref{lemma: expert-accuracy}. For fixed expert edge accuracy $p_\psi$, as d-separation prediction accuracy $p_{\text{d-sep}}$ increases, when testing any edge using its adjacency set, EP-G (Subroutine \ref{algo: expert-subset-ordering1}) places d-separating sets earlier in the test sequence, reducing the expected number of tests $\mathbb{E}[t_\ell]$ conducted before finding a d-separating set or exhausting all tests. Note that the distribution over how likely an edge will be tested with a particular adjacency set remains fixed due to $p_\psi$ remaining fixed. The strict inequality occurs when at least one edge in $\mathcal{C}$ is tested with a sequence of CITs where at least one CIT conditions on a subset which d-separates that edge, and one other CIT does not condition on a subset which d-separates that edge. This occurs exactly when $\mathcal{C}$ contains at least one false edge where at least one vertex in the edge has an adjacency set that contains at least one d-separating subset and one non-d-separating subset.

\textbf{Part (c):} When $p_\psi = 0.5$, the expert classifies each edge as true or false with equal probability. The procedure in EL-G (Subroutine \ref{algo: expert-edge-ordering1}) then partitions the edges into two sets and orders them as $\mathsf{O} = \mathcal{C} \setminus \widehat{\mathcal{S}} + \mathcal{C} \cap \widehat{\mathcal{S}}$, with random ordering within each partition. Since each edge is assigned to each partition with probability $0.5$ independently, and edges within each partition are randomly ordered, this is equivalent to sampling a uniformly random ordering of all edges in $\mathcal{C}$. Therefore, when $p_\psi = 0.5$, PC-Guess has the same distribution over orderings as the baseline PC (which uses uniformly random orderings), implying $\mathbb{E}[\Phi_\ell] = \mathbb{E}[\bar{\Phi}_\ell]$ at $p_\psi = 0.5$. Combined with part (a), which establishes that $\mathbb{E}[\Phi_\ell]$ increases monotonically with $p_\psi$, we have $\mathbb{E}[\Phi_\ell] \geq \mathbb{E}[\bar{\Phi}_\ell]$ for all $p_\psi \geq 0.5$.
\end{proof}

\subsection{Proof of Theorem \ref{theorem: sgs performance}}\label{proof: sgs performance}
\gPCGuessPerformance*

\begin{proof}
We prove each part separately.

\textbf{Part (a):} The monotonic increase in $\mathbb{E}[\Phi]$ with $p_\psi$ follows directly from Lemma \ref{lemma: edge-accuracy-monotonic}. gPC-Guess runs EL once on the complete skeleton $\mathcal{C}$, where EL-G (Subroutine \ref{algo: expert-edge-ordering1}) partitions edges based on expert predictions and processes false edges before true edges. By Lemma \ref{lemma: edge-accuracy-monotonic}, as expert accuracy $p_\psi$ increases, the expected perfect recovery probability $\mathbb{E}[\Phi]$ increases monotonically. By Lemma \ref{lemma: edge-accuracy-monotonic} this inequality is strict when $\mathcal{C}$ contains both true edges (edges in $\mathcal{S}^*$) and false edges (edges not in $\mathcal{S}^*$). Since gPC-Guess starts with the complete skeleton, $\mathcal{C}$ contains both true and false edges whenever $\mathcal{G}$ is a nonempty and non-fully connected graph.

\textbf{Part (b):} Under the assumption of Definition \ref{assumption:cit_power_independence}, the monotonic decrease in $\mathbb{E}[t]$ with $p_{\text{d-sep}}$ follows directly from Lemma \ref{lemma: expert-accuracy}. For fixed expert edge accuracy $p_\psi$, as d-separation prediction accuracy $p_{\text{d-sep}}$ increases, when testing any edge using its adjacency set, EP-G (Subroutine \ref{algo: expert-subset-ordering1}) places d-separating sets earlier in the test sequence, reducing the expected number of tests $\mathbb{E}[t]$ conducted before finding a d-separating set or exhausting all tests. Note that the distribution over how likely an edge will be tested with a particular adjacency set remains fixed due to $p_\psi$ remaining fixed. The strict inequality occurs when at least one edge in $\mathcal{C}$ is tested with a sequence of CITs where at least one CIT conditions on a subset which d-separates that edge, and one other CIT does not condition on a subset which d-separates that edge. This occurs exactly when $G^*$ is not entirely empty (all subsets are d-separating), or entirely connected (no subsets are d-separating).

\textbf{Part (c):} When $p_\psi = 0.5$, the expert classifies each edge as true or false with equal probability. The procedure in EL-G (Subroutine \ref{algo: expert-edge-ordering1}) then partitions the edges into two sets and orders them as $\mathsf{O} = \mathcal{C} \setminus \widehat{\mathcal{S}} + \mathcal{C} \cap \widehat{\mathcal{S}}$, with random ordering within each partition. Since each edge is assigned to each partition with probability $0.5$ independently, and edges within each partition are randomly ordered, this is equivalent to sampling a uniformly random ordering of all edges in $\mathcal{C}$. Therefore, when $p_\psi = 0.5$, gPC-Guess has the same distribution over orderings as the baseline gPC (which uses uniformly random orderings), implying $\mathbb{E}[\Phi] = \mathbb{E}[\bar{\Phi}]$ at $p_\psi = 0.5$. Combined with part (a), which establishes that $\mathbb{E}[\Phi]$ increases monotonically with $p_\psi$, we have $\mathbb{E}[\Phi] \geq \mathbb{E}[\bar{\Phi}]$ for all $p_\psi \geq 0.5$.
\end{proof}


\newpage
\section{Theoretical Details Concerning EP-G and its Guarantees}\label{appendix: subset ordering details}

In this appendix, we provide the complete theoretical analysis for guiding the Edge Prune (EP) subroutine with expert predictions. While Section~\ref{subsec: subset-ordering} established that EP orderings cannot affect accuracy (only runtime), we formalize here the full analysis: when and how different orderings impact computational efficiency, what ordering principles are optimal, and how expert accuracy translates to monotonic runtime improvements.

\subsection{The Edge Prune Subroutine and Ordering Choices}

Recall that the EP subroutine (Subroutine~\ref{algo: subset-ordering}) is called by constraint-based algorithms to test individual edges $e_{i,j}$ at a fixed conditioning set size $k$. Given the current skeleton $\mathcal{C}$, EP computes the adjacency set $A = \adj_{-j}(\mathcal{C}, x_i)$ and tests conditional independence for all size-$k$ subsets $W \subseteq A$. The subroutine terminates as soon as any test CIT($x_i, x_j \mid W$) returns independence, at which point edge $n_{i,j}$ is removed from the skeleton.

Let $\text{CIT}^k_A := \{\text{CIT}(x_i, x_j \mid W): W \subseteq A, |W|=k\}$ denote the collection of all possible size-$k$ conditional independence tests for this edge. The EP subroutine requires an ordering $\mathsf{L}$ that specifies the sequence in which these tests are executed. Our goal is to understand how the choice of $\mathsf{L}$ affects algorithm performance.

\subsection{Accuracy is Invariant to Ordering}

We begin by establishing that EP orderings cannot affect the probability of correctly deciding edge $n_{i,j}$:

\orderingindaccuracy*

\textbf{Proof sketch.} Edge $n_{i,j}$ is removed if and only if at least one test in $\text{CIT}^k_A$ returns independence. Since each test's outcome depends only on the data, the variables tested, and the conditioning set—not the execution order—the probability of correct removal or retainment of the edge depends only on which tests are run, not their sequence. See Appendix~\ref{proof: orderindaccuracy} for full proof.

This result implies that unlike EL (where edge ordering affects accuracy via adjacency set modifications), EP orderings leave the correctness probability unchanged. Our focus therefore shifts to computational efficiency.

\subsection{When Orderings Affect Runtime}

While accuracy is invariant, the computational cost—measured by the number of tests executed before EP terminates—varies across orderings. To understand when and why, we partition $\text{CIT}^k_A$ based on the true d-separation structure of $\mathcal{G}^*$:

\begin{itemize}
    \item $\text{CIT}^{\text{d-sep}}_A := \{\text{CIT}(x_i, x_j \mid W): W \text{ d-separates } x_i, x_j \text{ in } \mathcal{G}^*\}$
    \item $\text{CIT}^{\text{non-d-sep}}_A := \text{CIT}^k_A \setminus \text{CIT}^{\text{d-sep}}_A$
\end{itemize}

Tests in $\text{CIT}^{\text{d-sep}}_A$ have high probability of returning independence (specifically $1-\alpha$, where $\alpha$ is the Type I error rate), while tests in $\text{CIT}^{\text{non-d-sep}}_A$ have low probability of returning independence (specifically $\beta$, the Type II error rate or power deficit).

\nodseps*

\textbf{Proof sketch.} When no d-separating sets exist, all tests return independence with identical probability $\beta$ (the false negative rate). Under mutual independence of test outcomes, the expected stopping time follows a geometric distribution with parameter $\beta$, which is invariant to the ordering of tests. See Appendix~\ref{proof: no-dseps} for full proof.

This lemma reveals the key insight: runtime optimization is only possible when $\text{CIT}^k_A$ contains both d-separating and non-d-separating sets. In such cases, strategic ordering can significantly reduce computational cost.

\subsection{Optimal Ordering Principles}

We now characterize orderings that minimize expected runtime. Our analysis relies on standard technical assumptions from the conditional independence testing literature:

\begin{assumption}[CIT Specificity and Independence]\label{assumption:cit_power_independence}
We assume:
\begin{enumerate}
    \item[(i)] \textbf{Adequate Specificity:} $1-\alpha > \beta$, i.e., the true negative rate (probability of correctly detecting independence when it exists) exceeds the false negative rate (probability of failing to detect dependence when it exists). This holds asymptotically as sample size $n \to \infty$ under standard regularity conditions.
    \item[(ii)] \textbf{Conditional independence of tests:} For any two distinct conditioning sets $W_1, W_2 \subseteq A$, the outcomes of CIT($x_i, x_j \mid W_1$) and CIT($x_i, x_j \mid W_2$) are (conditionally) independent given the data. This holds asymptotically under faithfulness and sufficient sample size.
\end{enumerate}
\end{assumption}

\textbf{Remark on assumptions.} Assumption~\ref{assumption:cit_power_independence}(ii) is a technical simplification that ensures tractability. In practice, test outcomes are not strictly independent due to shared data and overlapping conditioning sets. However, our main result—that d-separating sets should be placed first—likely holds under weaker conditions. Specifically, we conjecture that the monotonicity guarantee (Lemma~\ref{lemma: expert-accuracy}) extends to any setting where: (a) tests on d-separating sets have strictly higher independence probability than tests on non-d-separating sets, and (b) test outcomes exhibit limited positive dependence. A rigorous proof under these relaxed conditions remains an open problem, but the intuition is clear: placing high-probability tests first reduces expected runtime regardless of the specific dependence structure among tests.

Under Assumption~\ref{assumption:cit_power_independence}, we can precisely characterize optimal orderings:

\OptimalSequenceOrdering*

\textbf{Proof sketch.} Consider orderings $\mathsf{L}$ and $\mathsf{L}'$ differing only in positions $i$ and $i+1$, where $\mathsf{L}$ places a non-d-separating test before a d-separating test, and $\mathsf{L}'$ swaps them. The expected runtime difference equals $\mathbb{P}(\text{all prior tests fail}) \cdot [(1-\alpha) \cdot i + \alpha\beta(i+1)] - [\beta \cdot i + (1-\beta)(1-\alpha)(i+1)]$. Since $1-\alpha > \beta$, the first term places more weight on $i$ while the second places more weight on $i+1$. By the rearrangement inequality, this difference is negative, establishing $\mathbb{E}[t_{e_{i,j}}^{\mathsf{L}'}] < \mathbb{E}[t_{e_{i,j}}^{\mathsf{L}}]$. See Appendix~\ref{proof: optimal-sequence} for full proof.

This lemma establishes a clear ordering principle: placing d-separating sets before non-d-separating sets minimizes expected runtime. Any ordering that violates this principle can be improved by swapping adjacent ``inversions."

\subsection{Expert-Guided Algorithm with Monotonicity Guarantees}

Building on Lemma~\ref{lemma: optimal-sequence}, we design EP-Guess (Subroutine~\ref{algo: expert-subset-ordering1}) to leverage expert predictions of d-separating sets. The algorithm extracts all d-separating sets $\widehat{\mathcal{D}}_{ij}$ from the expert graph $\widehat{\mathcal{G}}$ and constructs ordering $\mathsf{L}$ by placing predicted d-separating sets ($[A]_k \cap \widehat{\mathcal{D}}_{ij}$) before predicted non-d-separating sets ($[A]_k \setminus \widehat{\mathcal{D}}_{ij}$), with random order within each partition.

We model expert $\psi$'s d-separation predictions using the same framework as edge predictions: for any pair $(x_i, x_j)$ and conditioning set $W$, the expert independently predicts whether $W$ d-separates $x_i, x_j$ in $\mathcal{G}^*$ with accuracy $p_{\text{d-sep}}$. This can be formalized as a binary symmetric channel where the expert observes the true d-separation status and reports it correctly with probability $p_{\text{d-sep}}$.

\ExpertAccuracyMonotonicity*

\textbf{Proof sketch.} The proof (App.~\ref{proof: expert-accuracy}) establishes monotonicity via a coupling argument between experts with accuracies $p_{\text{d-sep}_1} < p_{\text{d-sep}_2}$. Both experts observe the same true d-separating sets and use identical randomness for classification, but the higher-accuracy expert makes fewer errors. This ensures that every conditioning set correctly classified by the weaker expert is also correctly classified by the stronger expert.

Consequently, the better expert's ordering has fewer ``inversions" (non-d-separating sets incorrectly placed before d-separating sets). The better ordering can be obtained from the weaker ordering through a sequence of adjacent swaps that move d-separating sets leftward past non-d-separating sets. By Lemma~\ref{lemma: optimal-sequence}, each such swap strictly decreases expected runtime.

Since the better expert's ordering is reachable through runtime-reducing swaps, it achieves pointwise improvement for any fixed realization of data and expert predictions. Strassen's Coupling Theorem then implies that $T_{\text{EP-G}}(p_{\text{d-sep}_2})$ stochastically dominates $T_{\text{EP-G}}(p_{\text{d-sep}_1})$ in the first-order sense, yielding monotonicity in expectation. Part (b) follows from observing that $p_{\text{d-sep}} = 0.5$ produces orderings distributionally equivalent to random orderings.

\subsection{Discussion and Comparison to Edge Loop Guidance}

These results establish that EP-Guess provides complementary benefits to EL-Guess:

\begin{itemize}
    \item \textbf{EL-Guess} (Section~\ref{subsec: edge-ordering}): Improves \emph{accuracy} by prioritizing false edges, thereby reducing adjacency set inflation and improving the probability of correct edge decisions. The benefit is measured by increased perfect recovery probability $\Phi$.
    
    \item \textbf{EP-Guess} (this section): Improves \emph{computational efficiency} by prioritizing d-separating sets, thereby increasing early termination probability. The benefit is measured by decreased expected runtime $\mathbb{E}[T]$.
\end{itemize}

Both forms of guidance share the same underlying structure:
\begin{enumerate}
    \item Identify an ordering principle that universally improves performance (false edges first for EL; d-separating sets first for EP)
    \item Model the expert as a binary symmetric channel predicting the relevant classification
    \item Use coupling arguments to prove monotonic improvement with expert accuracy
    \item Establish robustness: performance is never worse than random when expert accuracy $\geq 0.5$
\end{enumerate}

The main technical difference is that EL guidance affects accuracy (via adjacency set modifications), while EP guidance affects only runtime (leaving accuracy invariant). This difference arises because EP operates at a fixed conditioning set size, testing all relevant subsets regardless of order, whereas EL modifies the graph structure dynamically, affecting which tests are even possible for subsequent edges.

In practice, both forms of guidance can be applied simultaneously: EL-Guess sequences edges optimally (prioritizing false edges), and within each edge's testing, EP-Guess sequences conditioning sets optimally (prioritizing predicted d-separating sets). The combined algorithm achieves both accuracy improvements and computational speedups when expert predictions are accurate.

\newpage
\section{Extract Ordering Subroutine}\label{appendix: extractorderingsubroutine}

\floatname{algorithm}{Subroutine}
\begin{algorithm}[h]
\caption{Extract Orderings from Expert (EOE)}\label{algo: extract-orderings}
\begin{algorithmic}[1]
\State \textbf{Inputs}: Expert $\psi$, complete skeleton $\calC$
\State Obtain $\hat{\mathcal{G}}$ from $\psi$. Extract skeleton $\hat{\mathcal{S}}$ and d-separating sets $\hat{\mathcal{D}}$ from $\hat{\mathcal{G}}$
\State Randomly order $\calC$ and $[V]_{1:|V|-1}$. Set $\mathsf{O} = \calC \setminus \hat{\mathcal{S}} + \calC \cap \hat{\mathcal{S}}$ and $\mathsf{L} = \left([V]_{1:|V|-1} \cap \hat{\mathcal{D}}\right) + \left([V]_{1:|V|-1} \setminus \hat{\mathcal{D}}\right)$
\State \Return $\mathsf{O}$, $\mathsf{L}$
\end{algorithmic}
\end{algorithm}

Subroutine \ref{algo: extract-orderings} extracts orderings $\mathsf{O},\mathsf{L}$ from an expert's guess of the causal DAG $\widehat{\mathcal{G}}$. It combines the first few steps of both Subroutine \ref{algo: expert-edge-ordering1} and \ref{algo: expert-subset-ordering1}.

\newpage
\section{Suboptimality of PC-Guess under Perfect Guidance}\label{appendix: pcg2g suboptimal}

We demonstrate that PC-Guess's level-by-level structure can prevent it from fully exploiting expert guidance, even when the expert provides perfect predictions. This inefficiency stems from PC's statistical conditioning bias, which prioritizes testing smaller conditioning sets before larger ones regardless of expert predictions.

\textbf{The core limitation.} When an expert correctly identifies a false edge $n_{i,j}$ and places it early in the edge ordering $\mathsf{O}$, PC-Guess can only remove this edge once it reaches conditioning set size $\ell = k$, where $k$ is the size of the minimal d-separating set for $x_i$ and $x_j$. At all prior levels $\ell \in \{0, 1, \ldots, k-1\}$, PC-Guess must test the edge and find dependence, leaving the false edge in the skeleton. During these early levels, this retained false edge inflates the adjacency sets of $x_i$ and $x_j$, forcing unnecessary conditioning set tests for all other edges incident to these vertices.

\textbf{Concrete example: 4-node chain.} Consider the true causal graph $\mathcal{G}^*: x_1 \rightarrow x_2 \rightarrow x_3 \rightarrow x_4$. The complete initial skeleton $\mathcal{C}$ contains 6 edges: three true edges $\{n_{1,2}, n_{2,3}, n_{3,4}\}$ and three false edges $\{n_{1,3}, n_{1,4}, n_{2,4}\}$. The false edge $n_{1,4}$ has minimal d-separating set $\{x_2\}$ of size $k=1$. Suppose the expert provides perfect guidance by placing $n_{1,4}$ first in the ordering $\mathsf{O}$. At level $\ell=0$, PC-Guess tests $n_{1,4}$ with conditioning set $\emptyset$ and finds dependence (correct, due to the chain path), leaving the false edge in the skeleton. Because $n_{1,4}$ remains, all subsequent level-0 tests of edges involving $x_1$ or $x_4$ use inflated adjacency sets: when testing $n_{1,2}$, we have $\mathrm{adj}_{-2}(\mathcal{C}, x_1) = \{x_3, x_4\}$ instead of $\{x_3\}$; when testing $n_{3,4}$, we have $\mathrm{adj}_{-4}(\mathcal{C}, x_3) = \{x_1, x_2\}$ instead of $\{x_2\}$. Only at level $\ell=1$ does PC-Guess test $n_{1,4}$ with $\{x_2\}$ and successfully remove it. The inefficiency: PC-Guess performed one guaranteed-to-fail test of $n_{1,4}$ at level 0, plus all level-0 tests of edges incident to $x_1$ or $x_4$ used unnecessarily large adjacency sets—waste that scales as $O(d^2)$ for chains of length $d$, all avoidable if the algorithm could immediately test $n_{1,4}$ with its minimal d-separator.


\textbf{Conclusion.} This example demonstrates that PC-Guess's adherence to the level-by-level structure prevents it from fully exploiting perfect expert guidance. The algorithm must exhaust all smaller conditioning set sizes before testing the conditioning set size that would actually remove the false edge, wasting both direct tests on the false edge and indirect tests on neighboring edges with inflated adjacency sets. This motivates gPC-Guess (Section~\ref{subsec: lapcg2g}), which eliminates the level-by-level constraint and allows immediate testing with conditioning sets of any size, enabling the algorithm to act on expert predictions without delay.

\newpage

\section{Experimental Details}\label{appendix: experimental details}

\subsection{Synthetic Data Generation Parameters}

We generate synthetic data using linear Gaussian structural equation models on Erdős-Rényi (ER) random graphs. The data generation process follows these steps:

\begin{itemize}
    \item \textbf{Graph Structure}: We generate $d$-dimensional DAGs using the Erdős-Rényi model where edges are added independently with probability $p_{edge}$. The DAG property is ensured by making the adjacency matrix lower triangular, preventing cycles and self-loops.
    
    \item \textbf{Edge Weights}: For each edge in the binary adjacency matrix, we assign weights sampled uniformly from $[-2.5, -1.5] \cup [1.5, 2.5]$ to avoid faithfulness violations that occur when weights are near zero.
    
    \item \textbf{Data Generation}: Each variable $x_i$ follows the linear structural equation model:
    $$x_i = \sum_{j \in \text{Pa}(x_i)} w_{ji} x_j + \varepsilon_i$$
    where $w_{ji}$ are the edge weights and $\varepsilon_i \sim \mathcal{N}(0, 1)$ are independent Gaussian noise terms with mean 0 and variance 1.0. All noise terms are generated independently across variables and samples.
    
    \item \textbf{Standardization}: All generated data is standardized to zero mean and unit variance to ensure fair comparison across different graph structures.
    
     \item \textbf{Variable Randomization}: Before feeding data to any causal discovery method, we randomly permute the order of variables to prevent information leakage from variable ordering.
\end{itemize}

We consider two sparsity levels based on edge probability:
\begin{itemize}
    \item \textbf{Sparse graphs (ER1)}: Edge probability $p_{edge} = 1/(d-1)/2$, yielding approximately $d$ edges in expectation
    \item \textbf{Dense graphs (ER3)}: Edge probability $p_{edge} = 3/(d-1)/2$, yielding approximately $3d$ edges in expectation
\end{itemize}
In the main text results, we focus on sparse ER3 graphs with $d = 20$ variables and $n = 100$ samples.

\subsection{Real-World Data}

We use the discrete version of the Sachs protein signaling dataset from the BNLearn repository, which contains measurements of 11 phosphoproteins and phospholipids in human immune system cells. The dataset details include:

\begin{itemize}
    \item \textbf{Variables}: 11 proteins/phospholipids: Raf, Mek, Plcg, PIP2, PIP3, Erk, Akt, PKA, PKC, P38, Jnk.
    \item \textbf{Ground Truth}: Known causal DAG structure based on established biological pathways.
    \item \textbf{Sample Sizes}: We subsample the original dataset to create experiments with $n = 100$ samples.
    \item \textbf{Preprocessing}: Data is standardized to zero mean and unit variance.
\end{itemize}

The Sachs dataset provides a realistic benchmark for evaluating causal discovery methods on real biological networks with known ground truth structure.

\subsection{Algorithm Input and Baseline Methods}

\textbf{Algorithm input and ordering choices.} For all experiments reported in Section~\ref{sec: exps} and Appendix~\ref{appendix: additional experimental results} (except Appendix~\ref{appendix: varying d-seperation results}), algorithms receive only a predicted \emph{skeleton}—an undirected graph over the variables—not a full DAG. This skeleton guides the Edge Loop (EL) subroutine in determining which edges to test and in what order (see Section~\ref{sec: g2g cb theory}). The ordering used to guide Edge Prune (EP)—the sequence of conditioning sets tested for each individual edge—is \emph{always generated uniformly at random} for all methods. This design choice reflects our focus on accuracy improvements rather than runtime gains: by Lemma~\ref{lemma: orderingindaccuracy}, EP ordering does not affect edge decision accuracy, only computational cost.

\textbf{Baseline methods.} PC and gPC serve as baseline versions of PC-Guess and gPC-Guess respectively. These baselines always receive skeletons generated with expert edge prediction accuracy $p_\psi = 0.5$ (equivalent to random guessing) and use random EP ordering (equivalent to $p_{\text{d-sep}} = 0.5$). By comparing PC-Guess and gPC-Guess against PC and gPC, we isolate the benefit of expert guidance over random ordering.

\textbf{Number of trials.} All results reported in Section~\ref{sec: exps} and Appendix~\ref{appendix: additional experimental results} reflect averages over 30 independent trials, each with a different random seed controlling data generation, expert prediction sampling, and algorithmic randomness.

\subsection{Simulated Expert Implementation}

We implement simulated experts that generate skeleton predictions for evaluating algorithm performance across controlled accuracy levels.

\textbf{Skeleton generation for simulated experts.} For experiments with simulated experts (Figures~\ref{fig:first}, \ref{fig:second}, and most experiments in Appendix~\ref{appendix: additional experimental results}), we generate predicted skeletons as follows:
\begin{enumerate}
    \item For each potential edge pair $(x_i, x_j)$ where $i < j$, we check whether the edge exists in the true skeleton $\mathcal{S}^*$
    \item With probability $p_\psi$ (the expert edge prediction accuracy), we correctly classify the edge (add it to $\widehat{\mathcal{S}}$ if it exists in $\mathcal{S}^*$, exclude it otherwise)
    \item With probability $1-p_\psi$, we misclassify the edge (add it to $\widehat{\mathcal{S}}$ if it does \emph{not} exist in $\mathcal{S}^*$, exclude it otherwise)
\end{enumerate}
This process simulates a binary symmetric channel and allows systematic evaluation across controlled accuracy levels $p_\psi \in [0.3, 1.0]$. Each of the 30 trials in each experiment with simulated experts uses an independently sampled skeleton prediction.

\textbf{D-separation prediction (Appendix~\ref{appendix: varying d-seperation results} only).} The experiments in Appendix~\ref{appendix: varying d-seperation results} differ from all other experiments because they focus on evaluating how d-separation prediction accuracy $p_{\text{d-sep}}$ affects runtime (not accuracy). For these experiments:
\begin{enumerate}
    \item The skeleton $\widehat{\mathcal{S}}$ is generated using a simulated expert with accuracy $p_\psi = 0.5$ (random edge prediction), ensuring the skeleton provides no informative signal about true edge structure
    \item We use a simulated expert to predict d-separating sets: for each edge $e_{i,j}$ being tested, and for each candidate conditioning set $W$, the expert correctly identifies whether $W$ d-separates $x_i, x_j$ in $\mathcal{G}^*$ with probability $p_{\text{d-sep}}$
    \item This d-separation prediction guides EP ordering according to Subroutine~\ref{algo: expert-subset-ordering1}: predicted d-separating sets are tested before predicted non-d-separating sets
    \item We vary $p_{\text{d-sep}} \in [0.5, 1.0]$ to measure how d-separation accuracy affects expected runtime (Lemma~\ref{lemma: expert-accuracy})
\end{enumerate}

\subsection{LLM Expert Implementation}

We use Claude Opus 4.1 as our LLM expert, accessed through Amazon Bedrock. For experiments with the LLM expert (Figure~\ref{fig:third}), the skeleton generation process is:

\begin{itemize}
    \item \textbf{Prompting Strategy}: For each of the 30 trials, we prompt Claude Opus 4.1 once using the following prompt template, with variable names randomly shuffled for that trial (to prevent prompt bias):
    
    \begin{quote}\itshape
    "Analyze this protein signaling network step by step: \{var\_names\_str\}
    
    Step 1: Consider each protein's known biological functions\\
    Step 2: Identify which proteins can directly interact with each other\\
    Step 3: Look for signaling pathways and cascades\\
    Step 4: Include regulatory relationships (activation/inhibition)
    
    For each pair of proteins, ask: Can protein A directly influence protein B's activity or state?
    
    List ALL direct causal relationships as pairs. Be comprehensive - missing edges is worse than including uncertain ones. Return your answer as a list of variable name pairs that have direct edges between them. Format your response as pairs of variable names in parentheses, separated by commas. Start your final answer with the tag EDGES: followed by your list."
    \end{quote}
    
    This prompt uses step-by-step reasoning to systematically guide the LLM through the causal discovery process. It employs chain-of-thought prompting by breaking down the analysis into discrete steps, and uses recall-oriented instructions ("Be comprehensive", "missing edges is worse") to encourage high recall of potential causal relationships.
    
    \item \textbf{Response Parsing}: We parse the LLM response (which returns edge pairs in the format "EDGES: (var1, var2), (var3, var4), ...") to extract the predicted skeleton $\widehat{\mathcal{S}}$ for that trial.
    
    \item \textbf{Trial-specific pairing}: For each trial, we sample $n=100$ observations from the Sachs dataset and provide both the data and the corresponding LLM-predicted skeleton to gPC-Guess. This approach generates 30 independent LLM predictions (one per trial), each paired with an independent data subsample.
\end{itemize}

The LLM expert leverages pre-trained biological knowledge to make predictions about protein signaling networks, providing a realistic test of how modern AI systems can augment causal discovery.

\subsection{Conditional Independence Testing}\label{appendix: implementations}

For synthetic data experiments with linear Gaussian models, all algorithms use Fisher's Z-test \citep{fisher1921probable} for conditional independence testing with significance level $\alpha = 0.05$. The test statistic is:
$$Z = \frac{1}{2}\sqrt{n-|S|-3} \log\left(\frac{1+\hat{\rho}_{XY|S}}{1-\hat{\rho}_{XY|S}}\right)$$
where $\hat{\rho}_{XY|S}$ is the sample partial correlation between variables $X$ and $Y$ given conditioning set $S$, and $n$ is the sample size. Under the null hypothesis of conditional independence, $Z$ follows a standard normal distribution. Fisher's Z-test is appropriate for continuous data generated from linear Gaussian structural equation models.

For experiments with the Sachs dataset, we use the chi-square test \citep{pearson1900criterion} of independence, which is appropriate for discrete data. The test statistic is:
$$\chi^2 = \sum_{i,j} \frac{(O_{ij} - E_{ij})^2}{E_{ij}}$$
where $O_{ij}$ are observed frequencies and $E_{ij}$ are expected frequencies under the null hypothesis of independence in the contingency table. This test evaluates conditional independence by comparing observed and expected frequencies across all combinations of variable values and conditioning set states.

\subsection{Packages and Dependencies}

The experimental code uses the following Python packages:

\begin{itemize}
    \item \textbf{numpy} (1.21.0+): Array operations and linear algebra
    \item \textbf{scipy} (1.7.0+): Statistical functions and hypothesis testing
    \item \textbf{causal-learn} (0.1.3.8+): PC and PC-Stable algorithm implementations
    \item \textbf{networkx} (2.6.0+): Graph operations and d-separation queries
    \item \textbf{boto3} (1.26.0+): Amazon Bedrock API access for LLM experiments
    \item \textbf{matplotlib} (3.5.0+): Plotting and visualization
    \item \textbf{tqdm} (4.62.0+): Progress bars for long-running experiments
    \item \textbf{json}: Configuration and results serialization (Python standard library)
    \item \textbf{itertools}: Combinatorial operations for conditioning sets (Python standard library)
    \item \textbf{concurrent.futures}: Parallel experiment execution (Python standard library)
    \item \textbf{datetime}: Experiment timestamping and logging (Python standard library)
\end{itemize}

\subsection{Compute Details}

All experiments were conducted using Python 3.8+, and run on a EC2 instance with AMD EPYC 7R13 processors, 192 vCPUs (96 cores with 2 threads per core), and 740 GiB of memory running Amazon Linux 2. Parallel experiments used up to 8 concurrent workers via Python's ProcessPoolExecutor to balance computational efficiency with resource constraints.
\newpage

\section{Additional Experimental Results}\label{appendix: additional experimental results}

\subsection{Runtime Results for Figure 2a and 2b in Main Text}\label{appendix: runtime results}
\begin{figure*}[h!]
    \centering
    \begin{subfigure}[b]{0.49\textwidth}
        \centering
        \includegraphics[width=\textwidth]{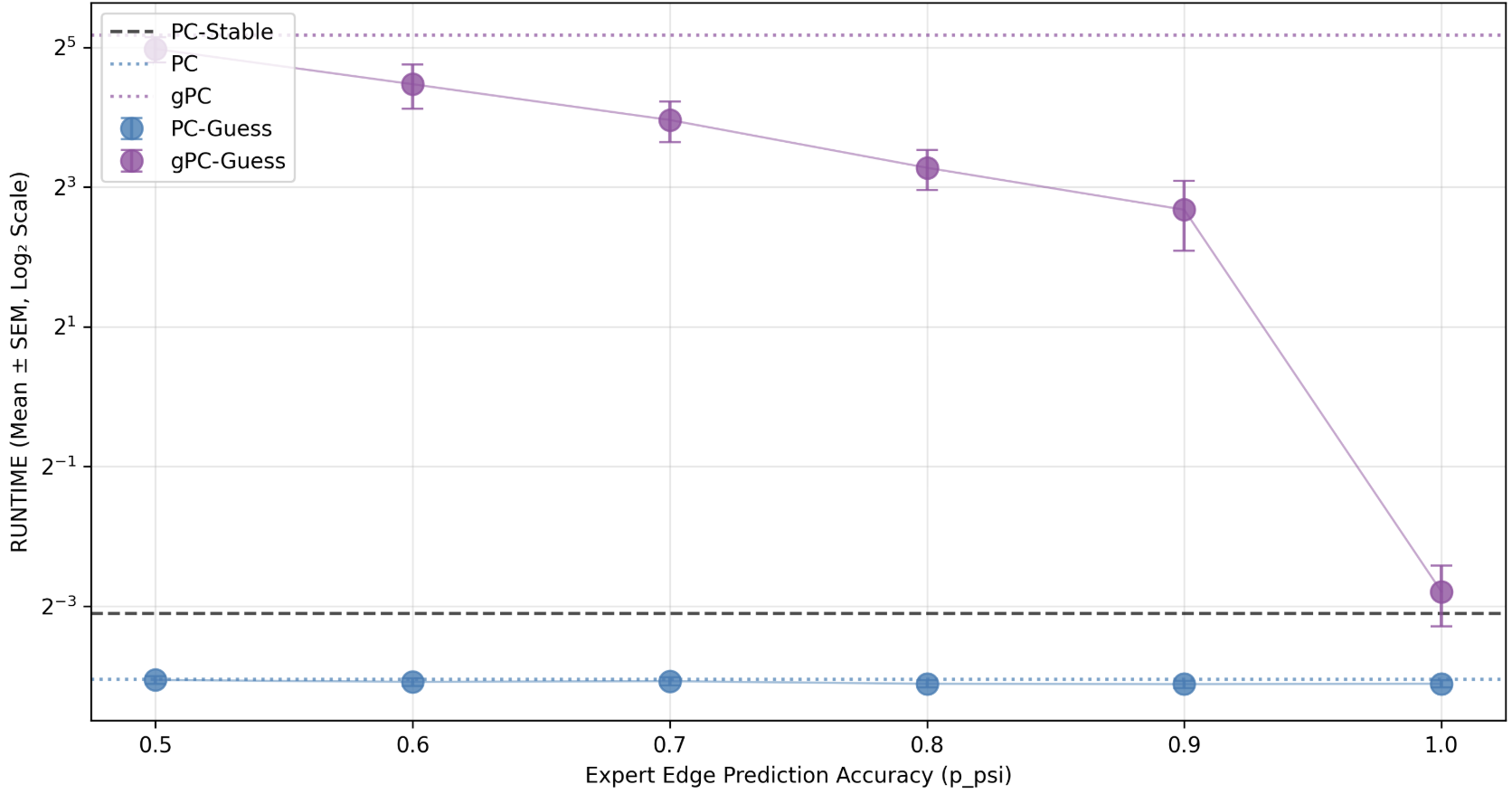}
        \caption{Runtime (s) results for Figure~\ref{fig:tenth} (ER1, $d=20,n=100$).}
        \label{fig:eleventh}
    \end{subfigure}
    \hfill
    \begin{subfigure}[b]{0.49\textwidth}
        \centering
        \includegraphics[width=\textwidth]{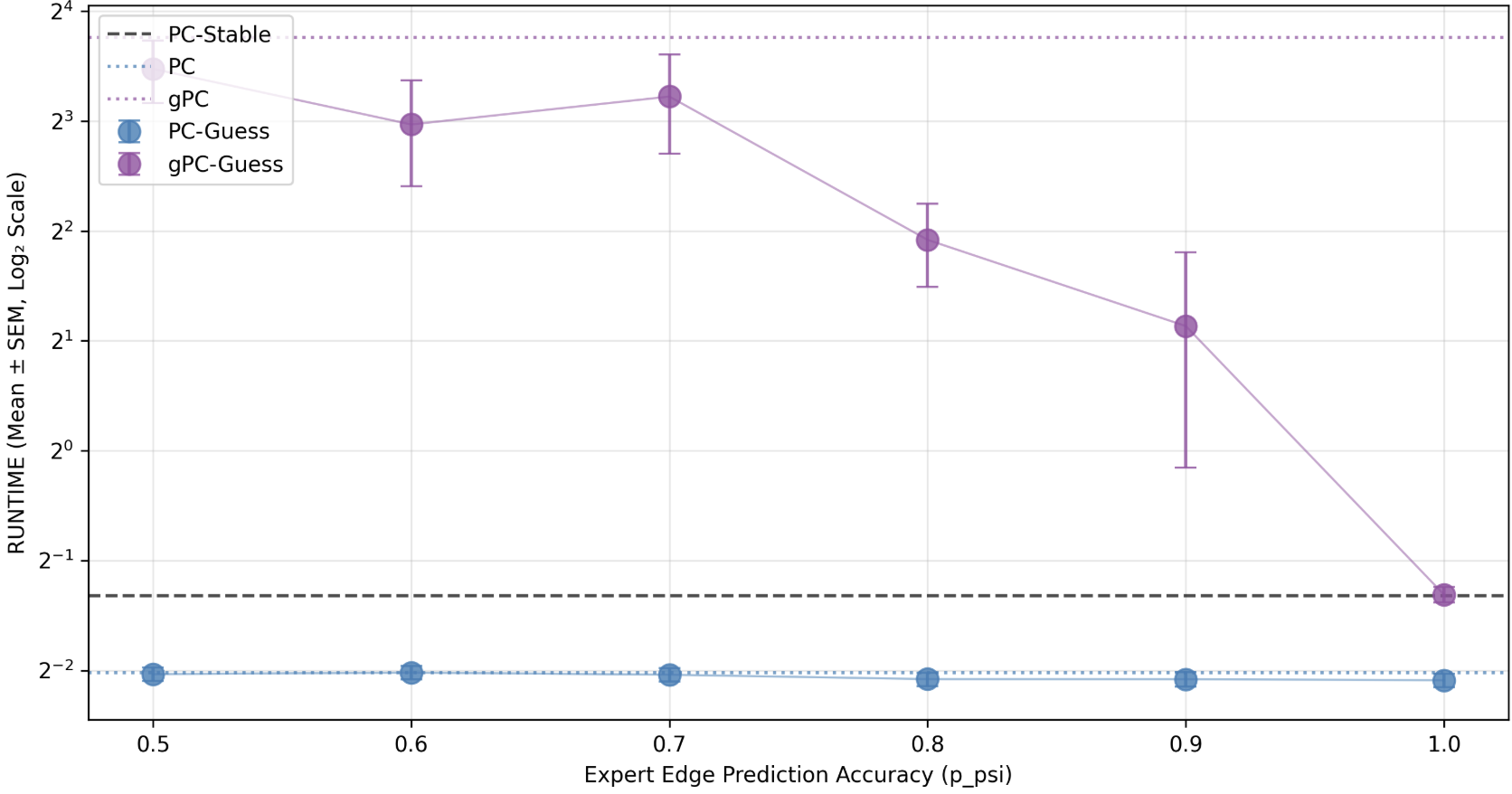}
        \caption{Runtime (s) results for Figure~\ref{fig:first} (ER3, $d=20,n=100$).}
        \label{fig:twelveth}
    \end{subfigure}
    \caption{Results for runtime when varying $p_\psi$.}
    \label{fig:combined1}
\end{figure*}

Runtime of gPC-Guess and PC-Guess both decline as expert prediction $p_\psi$ increases, although the reduction is much larger in the dense rather than sparse setting, and in both settings the runtime reduction for gPC-Guess is far larger than for PC-Guess.

\subsection{Sparse Graphs}\label{appendix: sparse graph results}
\begin{figure}[h!]
        \centering
        \includegraphics[width=0.45\textwidth]{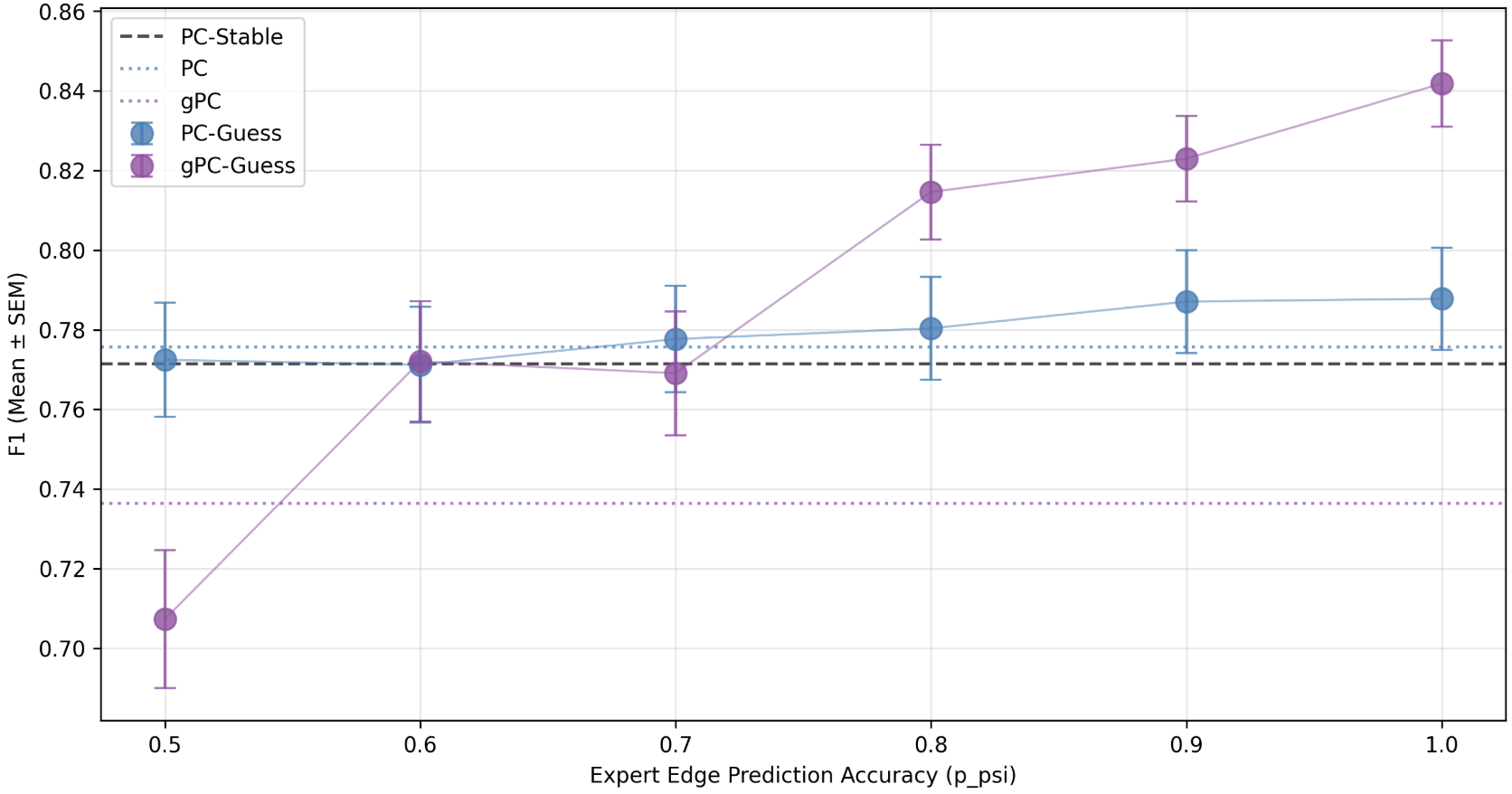}
    \caption{Method performance as $p_\psi$ increases in sparse graphs (ER1, $d=10$, $n=100$).}\label{fig:tenth}
\end{figure}

PC-Guess continues to outperform baselines as $p_\psi$ grows, but with a smaller increase in F1 than observed in the dense setting. Similar to the dense setting, gPC-Guess performance increases the most with $p_\psi$, again surpassing all other methods when $p_\psi=0.7$.

\newpage
\subsection{Varying Sample Size}\label{appendix: sample size results}
\begin{figure}[h!]
        \centering
        \includegraphics[width=0.45\textwidth]{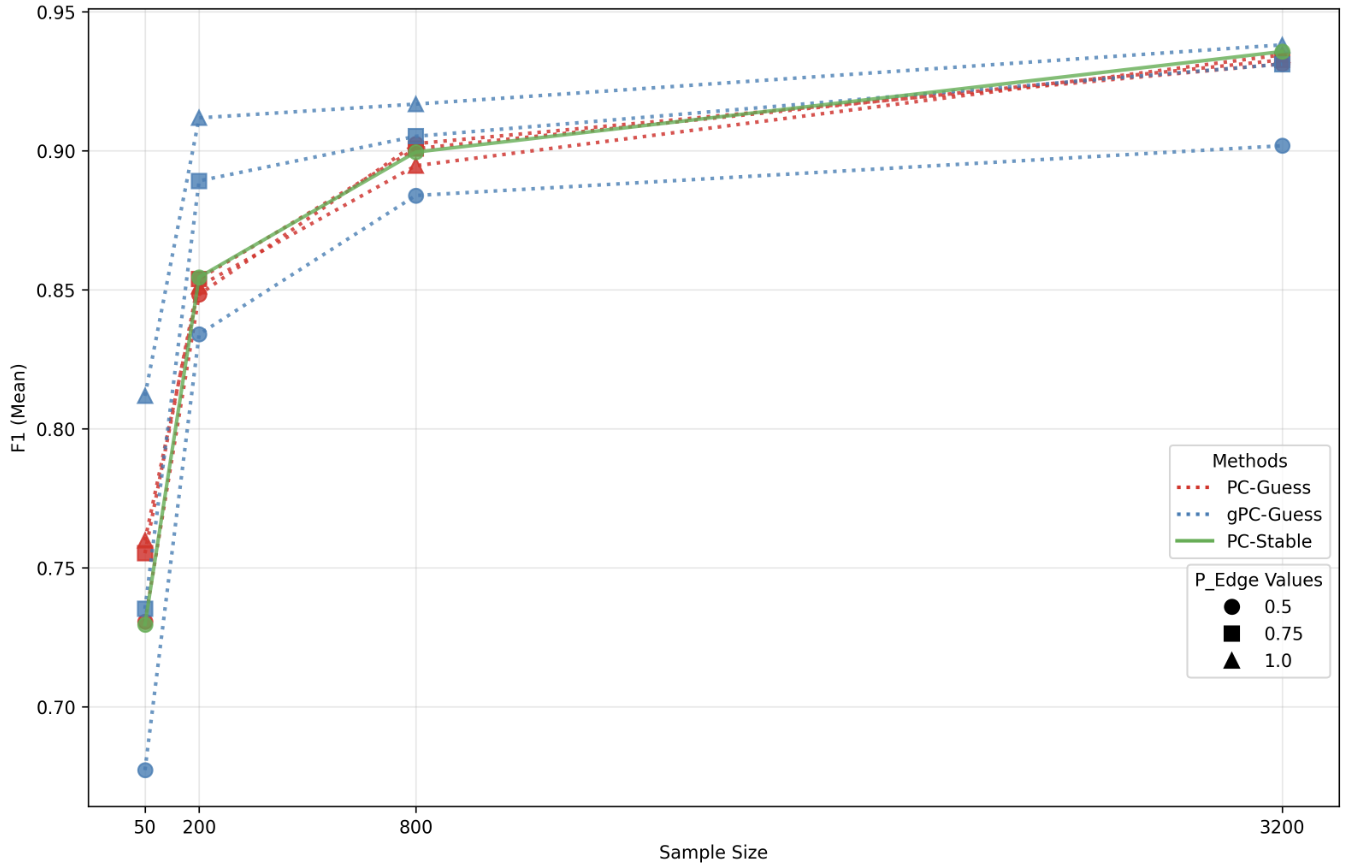}
    \caption{Method performance across different values of $p_\psi$, in sparse graphs (ER1, $d=10$), as sample size is rapidly increased.}\label{fig:fourtenth}
\end{figure}

As expected from the correctness result provided for PC-Guess and gPC-Guess (Theorem~\ref{theorem: pc performance}) that holds independent of expert quality, all methods (no matter what the expert edge prediction accuracy $p_\psi$ is) are converging to perfect accuracy with increasing sample size.

\subsection{Varying Dimensionality}\label{appendix: dimensionality results}
\begin{figure}[h!]
        \centering
        \includegraphics[width=0.45\textwidth]{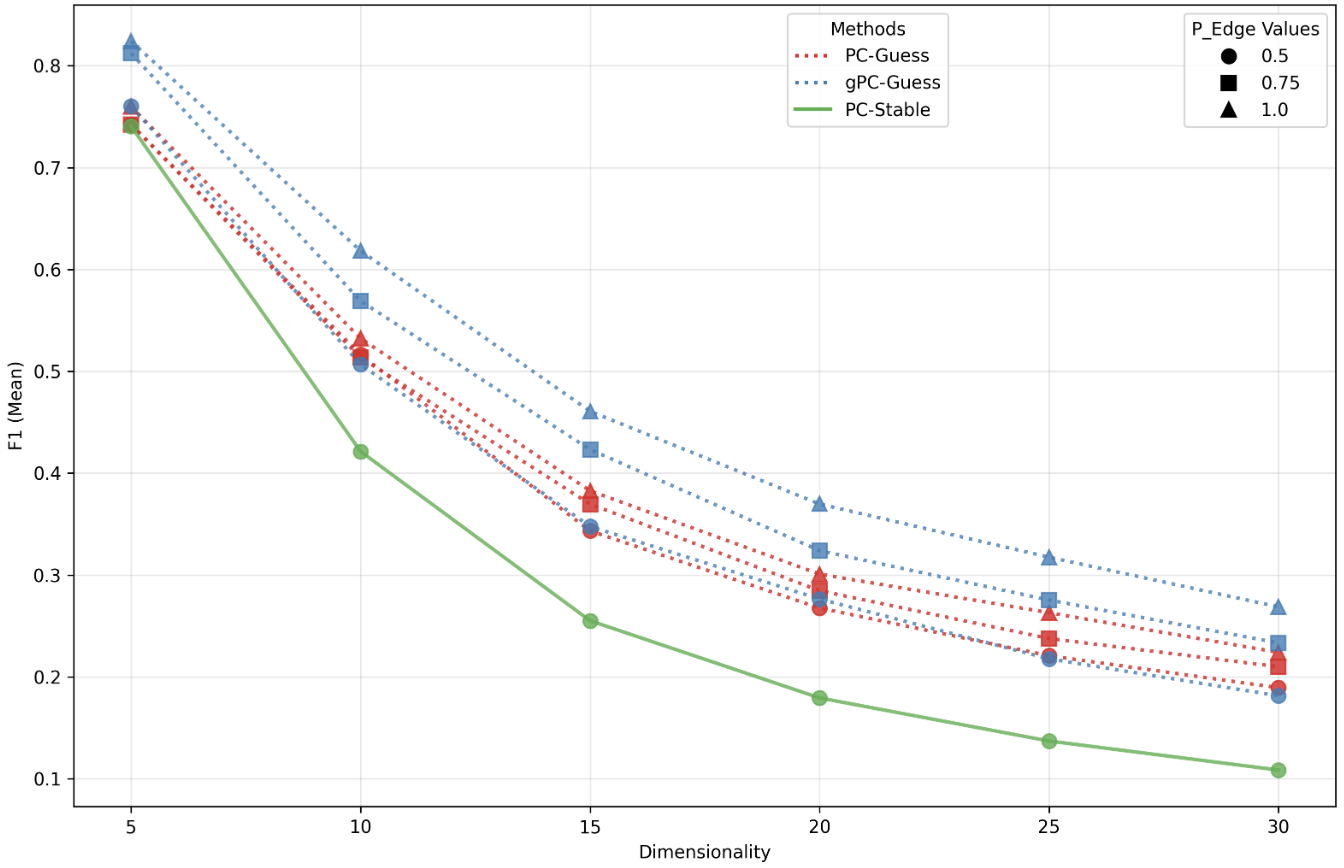}
    \caption{Method performance across different values of $p_\psi$, in dense graphs (ER3, $n=100$), as graph dimensionality $d$ is increased.}\label{fig:thirtenth}
\end{figure}

We note that gPC-Guess and PC-Guess continue to outperform the baseline PC-Stable as dimensionality is increased, even with sample size fixed. The gap between gPC-Guess and the baseline PC-Stable widens as dimensionality increases---for $p_\psi=1.0$, the gap between gPC-Guess and PC-Stable at $d=5$ is only $\sim$7 percentage points, whereas at $d=30$ the gap between them is $\sim$18 percentage points. This suggests that the value of expert guidance to performance increases in high-dimensional settings that are challenging for purely data-driven methods.

\newpage
\subsection{Results for Worst-Case Expert Performance}\label{appendix: worst case results}
\begin{figure*}[h!]
    \centering
    \begin{subfigure}[b]{0.49\textwidth}
        \centering
        \includegraphics[width=\textwidth]{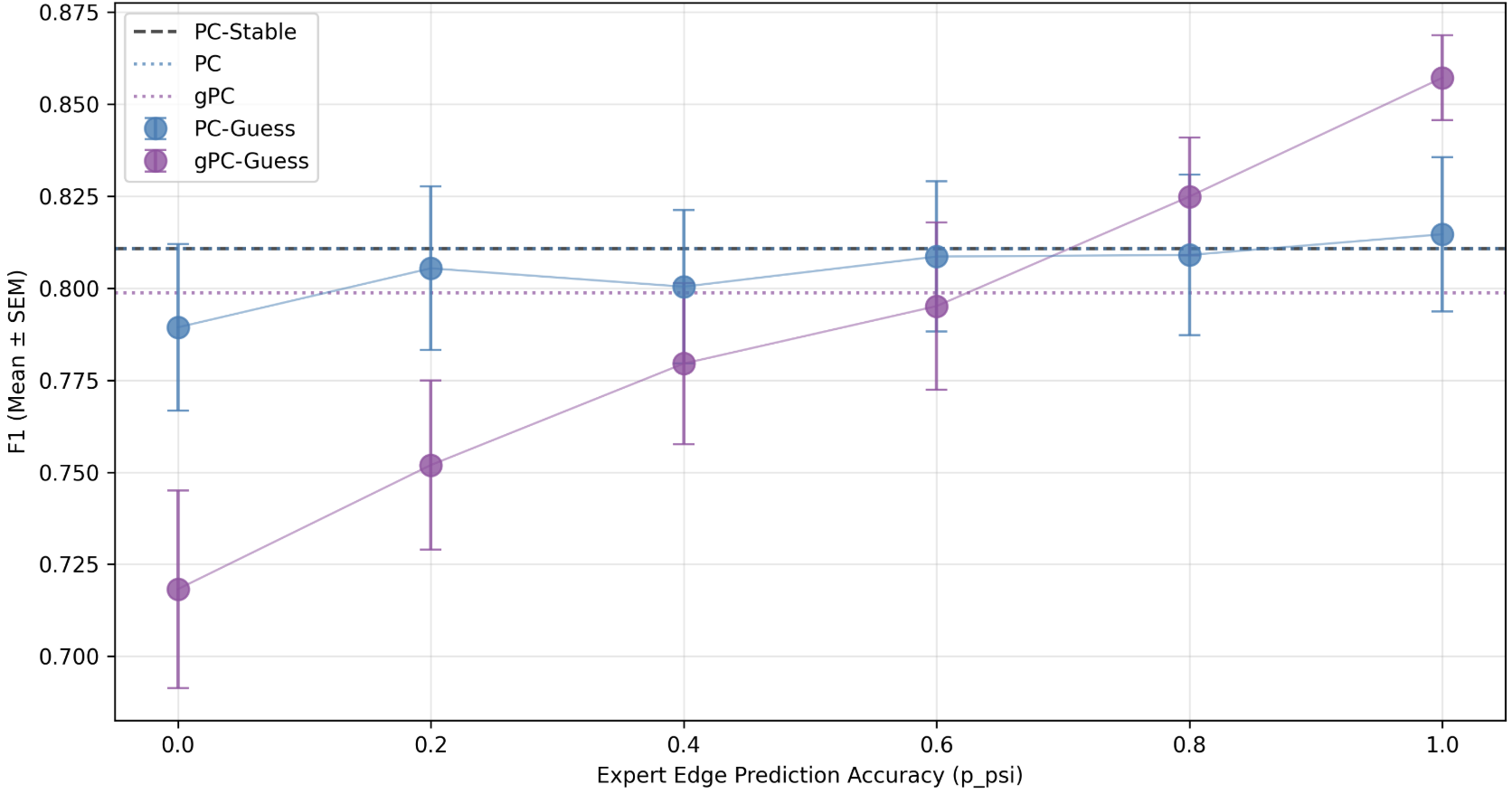}
        \caption{Method performance in sparse graphs (ER1, $d=10,n=100$) with varying $p_\psi$.}
        \label{fig:fifth}
    \end{subfigure}
    \hfill
    \begin{subfigure}[b]{0.49\textwidth}
        \centering
        \includegraphics[width=\textwidth]{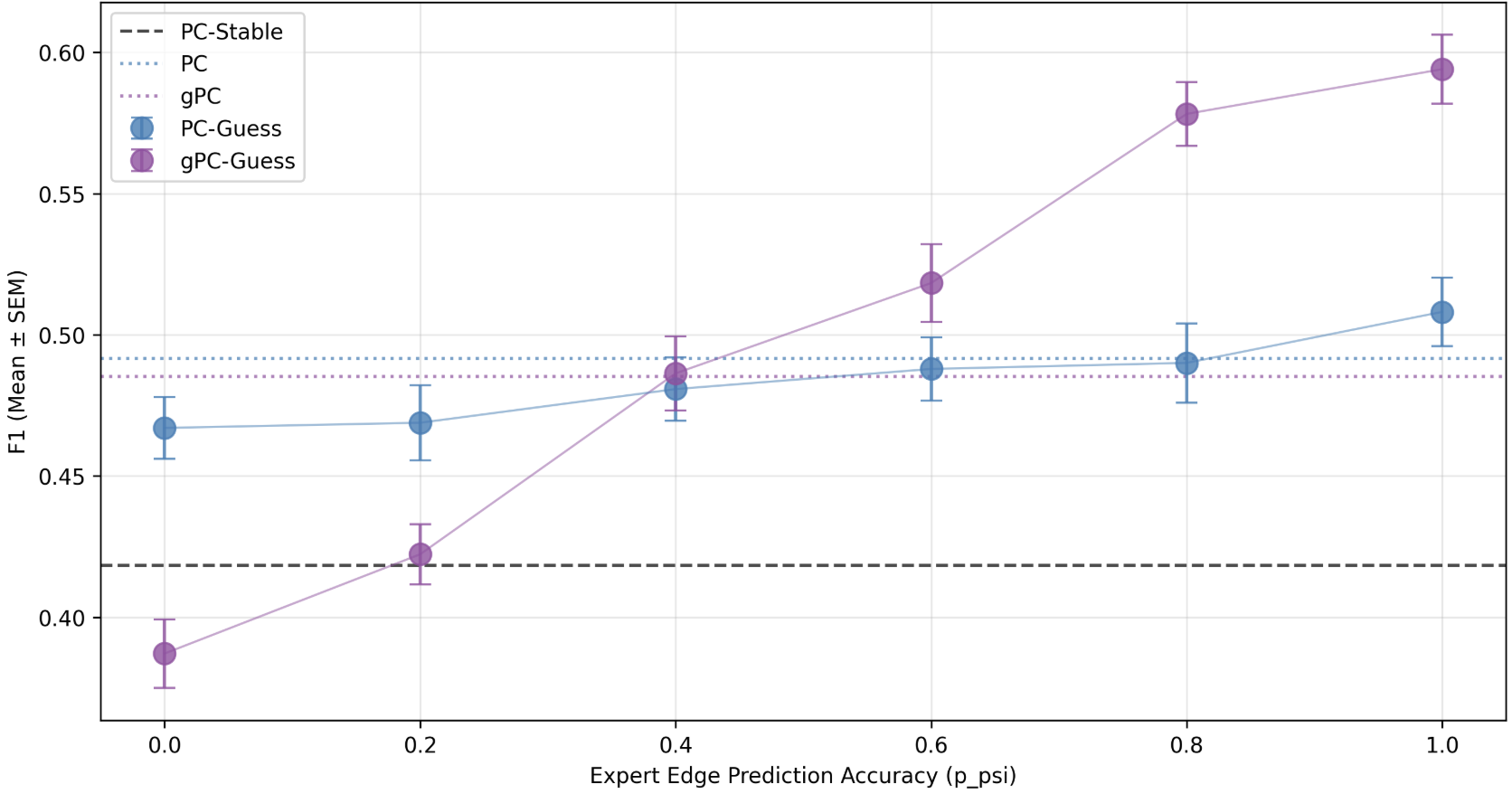}
        \caption{Method performance in dense graphs (ER3, $d=10,n=100$) with varying $p_\psi$.}
        \label{fig:sixth}
    \end{subfigure}
    \caption{Results for varying $p_\psi$ in the worst case, i.e., the expert is worse than random ($p_\psi\leq 0.5$).}
    \label{fig:combined2}
\end{figure*}

As expected from our theoretical monotonicity results (Lemma~\ref{lemma: edge-accuracy-monotonic}, Theorem~\ref{theorem: pc performance}, Theorem~\ref{theorem: sgs performance}), we see that both PC-Guess's and gPC-Guess's performances are worse than their counterparts PC and gPC when the expert prediction is worse than random, i.e., $p_\psi\leq 0.5$. We note that PC-Guess's performance is impacted less than gPC-Guess's performance, with a smaller reduction when the expert is poor, but gPC-Guess has a larger gain in performance when the expert is good. However, due to our robust correctness guarantees (Theorem~\ref{theorem: pc performance}), in both the dense and sparse setting the worst case performance (i.e., when the expert is entirely inaccurate, every single edge prediction is wrong, $p_\psi=0$) the performance drop from baseline is only up to roughly 8 percentage points. Unlike expert-aided soft/hard constraint methods, even when expert guidance is poor the drop in performance is bounded because the expert never replaces tests, only guides sequences.

\subsection{Varying D-Separation Prediction}\label{appendix: varying d-seperation results}
\begin{figure*}[h!]
    \centering
    \begin{subfigure}[b]{0.49\textwidth}
        \centering
        \includegraphics[width=\textwidth]{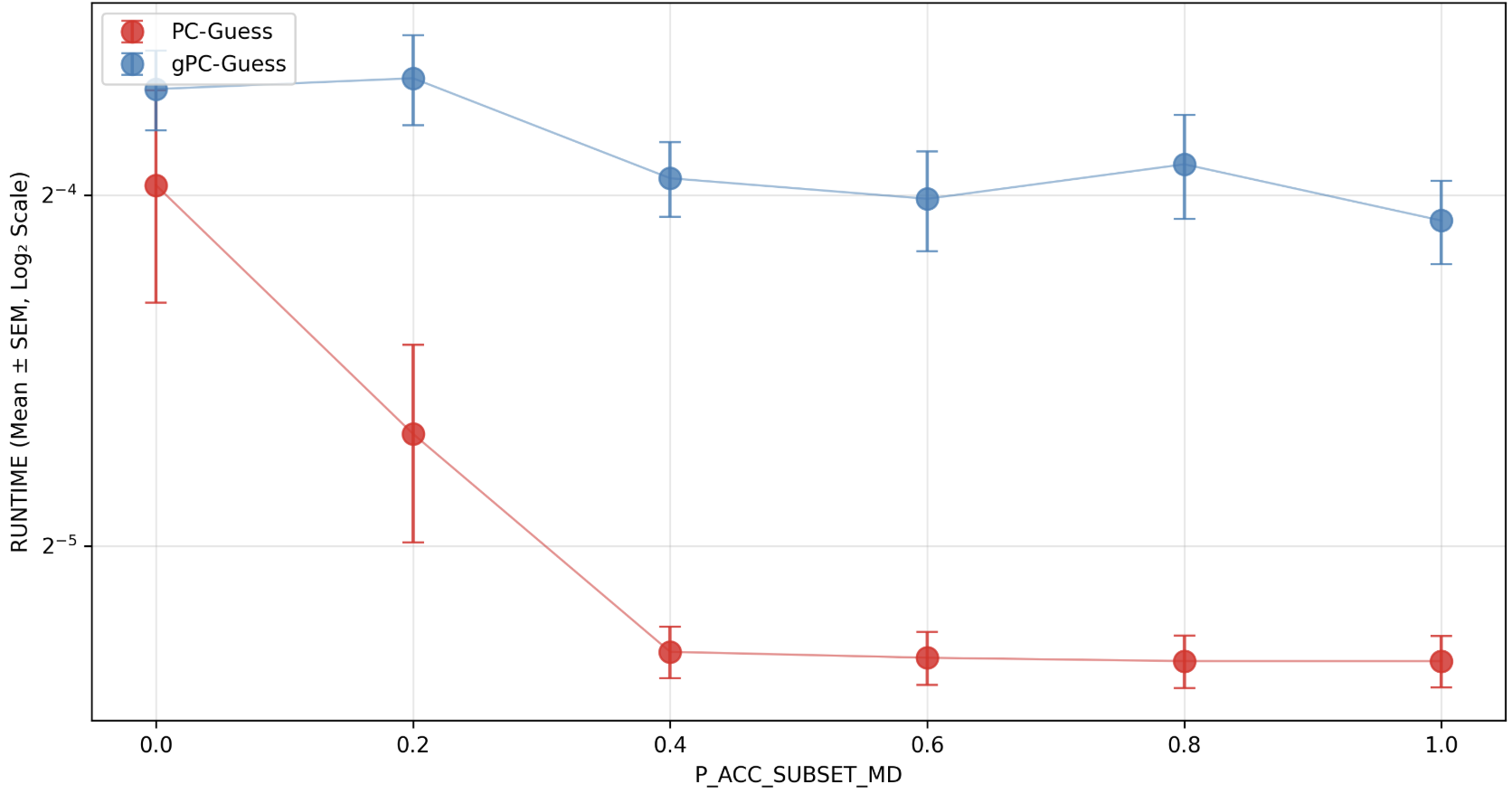}
        \caption{Method runtime in sparse graphs (ER1, $d=10$, $n=100$).}
        \label{fig:seventh}
    \end{subfigure}
    \hfill
    \begin{subfigure}[b]{0.49\textwidth}
        \centering
        \includegraphics[width=\textwidth]{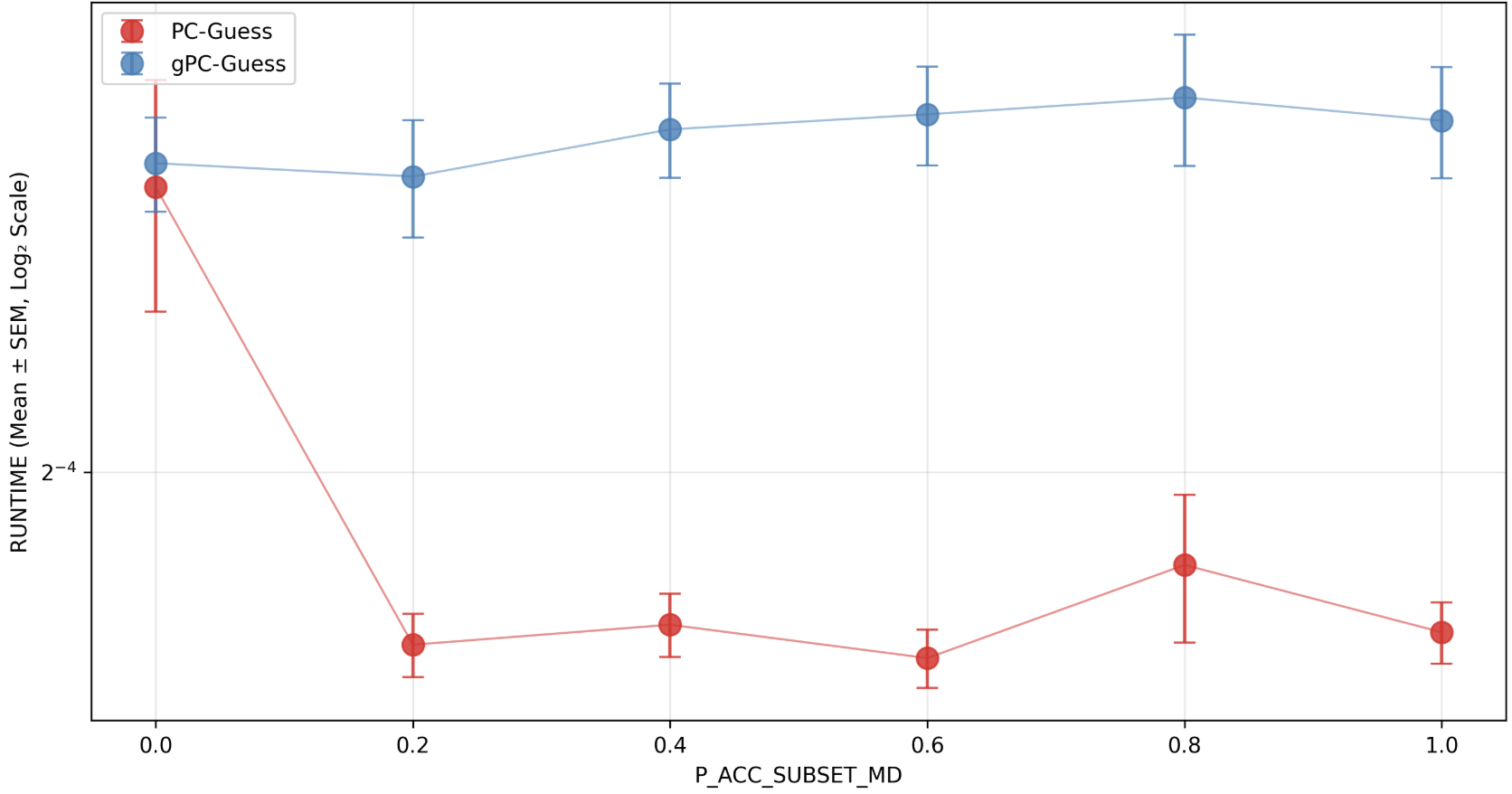}
        \caption{Method runtime in dense graphs (ER3, $d=10$, $n=100$).}
        \label{fig:eighth}
    \end{subfigure}
    \caption{Method runtime as d-separating prediction accuracy $p_{\text{d-sep}}$ is varied.}
    \label{fig:combined}
\end{figure*}

We note that increasing $p_{\text{d-sep}}$ appears to decrease runtime of both PC-Guess and gPC-Guess in sparse settings as expected by Lemma~\ref{lemma: expert-accuracy}, but the reduction in dense graphs is not observable for gPC-Guess, and only observable for low $p_{\text{d-sep}}$ values for PC-Guess.







\end{document}